\RequirePackage{fix-cm}
\documentclass[twocolumn]{svjour3}            
\smartqed  

\usepackage{graphicx}
\usepackage{epstopdf}
\usepackage{subfigure}
\usepackage{amsmath,amssymb}

\usepackage{url}              
\usepackage{amsfonts}         
\usepackage{nicefrac}         
\usepackage{microtype}        
\usepackage{algorithm}        
\usepackage{algorithmic}      
\usepackage{multirow}         

\usepackage{epstopdf}
\usepackage{paralist}
\usepackage{setspace}
\usepackage{color}
\usepackage{xcolor}
\usepackage{multirow}
\usepackage{colortbl}
\usepackage{array,caption}
\usepackage{textcomp,booktabs}
\usepackage{bbm}
\usepackage{lipsum}
\usepackage{bbding}
\usepackage{pifont}

\definecolor{tscite}{RGB}{65 105 225}
\definecolor{tslink}{RGB}{205 140 149}
\usepackage[colorlinks, linkcolor=tslink, anchorcolor=black, citecolor=tscite]{hyperref}

\usepackage{apacite}
\usepackage{natbib}
\usepackage{bbding}

\newcommand{\cmark}{\ding{51}}%
\newcommand{\xmark}{\ding{55}}%

\newcommand{\changed}[1]{\textcolor{black}{#1}}

\def\modelshortname{DIFO++} 
\def\modelshortnameold{DIFO} 
\definecolor{cmblu}{RGB}{51,102,240}
\definecolor{cmred}{RGB}{51,102,240}
\definecolor{varcolor}{RGB}{0,153,102}

\newtheorem{reslemma}{Restatement of Lemma}

\journalname{International Journal of Computer Vision}
\begin{document}

\title{Source-Free Domain Adaptation with Vision-Language Prior}

\author{Song~Tang \and Yunxiang~Bai \and Wenxin~Su \and Mao~Ye{\Envelope} \and Jianwei~Zhang \and Xiatian~Zhu{\Envelope}}


\institute{Song Tang \at
              Institute of Machine Intelligence~(IMI), University of Shanghai for Science and Technology, Shanghai, China; Technical Aspects of Multimodal Systems~(TAMS) Group, Department of Informatics, Universität Hamburg, Hamburg, Germany. \\
              \email{song.tang@uni-hamburg.de}
          \and
          Yunxiang~Bai \at
              Institute of Machine Intelligence~(IMI), University of Shanghai for Science and Technology, Shanghai, China.  \\
              \email{baiyunxiang11@gmail.com}
          \and 
          Wenxin~Su \at
              European Molecular Biology Laboratory, Heidelberg, Germany.  \\
              \email{wenxin.su@embl.de}
          \and 
          Mao Ye \at
              School of Computer Science and Engineering, University of Electronic Science and Technology of China, Chengdu, China.  \\
              \email{cvlab.uestc@gmail.com}
          \and
          Jianwei Zhang \at
              TAMS Group, Department of Informatics, Universität Hamburg, Hamburg, Germany.\\ 
              \email{jianwei.zhang@uni-hamburg.de}
          \and
          Xiatian Zhu \at
              Surrey Institute for People-Centred Artificial Intelligence, and Centre for Vision, Speech and Signal Processing (CVSSP), University of Surrey, Guildford, UK.  \\
              \email{xiatian.zhu@surrey.ac.uk}
}

\date{Received: date / Accepted: date}

\maketitle

\begin{abstract}
Source-Free Domain Adaptation~(SFDA) seeks to adapt a source model, which is pre-trained on a supervised source domain, for a target domain, with only access to unlabeled target training data.  
Relying on pseudo labeling and/or auxiliary supervision, conventional methods are inevitably error-prone.
To mitigate this limitation, in this work we for the first time explore the potentials of off-the-shelf vision-language (ViL) multimodal models (e.g., CLIP) with rich whilst heterogeneous knowledge. 
We find that directly applying the ViL model to the target domain in a zero-shot fashion is unsatisfactory, as it is not specialized for this particular task but largely generic. 
To make it task-specific, we propose a novel {\modelshortname} approach.
Specifically, {\modelshortname} alternates between two steps during adaptation:
(i) Customizing the ViL model by maximizing the mutual information with the target model in a prompt learning manner, 
(ii) Distilling the knowledge of this customized ViL model to the target model, centering on gap region reduction.
During progressive knowledge adaptation, we first identify and focus on the gap region, where enclosed features are entangled and class-ambiguous, as it often captures richer task-specific semantics.
Reliable pseudo-labels are then generated by fusing predictions from the target and ViL models, supported by a memory mechanism.
Finally, gap region reduction is guided by category attention and predictive consistency for semantic alignment, complemented by referenced entropy minimization to suppress uncertainty. 
Extensive experiments show that {\modelshortname} significantly outperforms the state-of-the-art alternatives. 
Our code and data are available at \url{https://github.com/tntek/DIFO-Plus}. 

\keywords{Domain adaptation \and Source-free \and Vision-language prior \and Multimodal foundation model customization \and Gap region reduction}
\end{abstract}

\section{Introduction} \label{sec:introduction}
Unsupervised Domain Adaptation (UDA) relies on both well-annotated source data and unannotated target data. 
\changed{Such a paradigm is particularly crucial in applications like autonomous driving (e.g., domain-adaptive semantic segmentation~\citep{chen2023pipa,chen2024transferring}) and medical image analysis (e.g., cross-modality diagnosis~\citep{mai_wang2024consistency}).}
However, due to heightened safety and privacy concerns, accessing source data freely has become difficult \citep{2019Distant, lao2021hypothesis}.
In response, Source-Free Domain Adaptation (SFDA) has gained attention as a more practical solution, aiming to transfer a pre-trained source model to the target domain using only unlabeled target data.  

Due to the absence of source samples, traditional distribution matching approaches are no longer viable~\citep{ganin2015unsupervised,kang2019contrastive}. The predominant alternative is self-supervised learning, which generates or mines auxiliary information to facilitate unsupervised adaptation. Two main approaches exist: constructing a pseudo source domain to leverage established UDA methods such as adversarial learning~\citep{xia2021adaptive,kurmi2021domain} or domain shift minimization based on distribution measurement~\citep{ding2022source,tian2021vdm,kundu2022balancing,ZHOU2024109974} and mining extra supervision from the source model~\citep{lao2021hypothesis,wang2022exploring,huang2021model} or target data~\citep{yang2022attracting,tang2023source,wang2025silan}.
In the presence of domain distribution shift, applying the source model to the target domain introduces inevitable errors in pseudo-labeling or auxiliary supervision, thereby limiting adaptation performance.

\begin{figure}[t] 
    \begin{center}
     \includegraphics[width=0.97\linewidth]{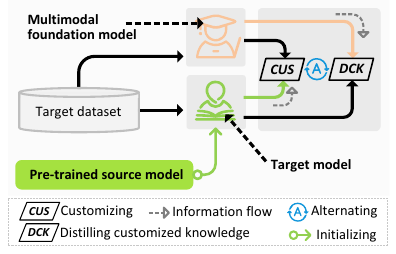}
    \end{center}
    \caption{
    We expand beyond traditional SFDA methods that rely solely on a pretrained source model and unlabeled target data. Instead, we innovate by exploring off-the-shelf multimodal foundation models, such as CLIP, in an unsupervised manner (marked by the box with blue background).
    }
    \label{fig:idea-comp}
\end{figure}

To address identified limitations, we pioneer the exploration of off-the-shelf multimodal foundation models, such as the vision-language (ViL) model CLIP~\citep{radford2021learning}, transcending the constraints of both the source model and target data knowledge. 
However, direct application of the ViL model proves unsatisfactory, lacking specialization for specific tasks. 
To tackle this problem, in our preliminary conference work~\citep{tang2024sourcefree}, we propose a novel distillation approach named \textit{\textbf{D}istilling mult\textbf{I}modal \textbf{F}oundation m\textbf{O}del (\textbf{\modelshortnameold})}. 
Fig.~\ref{fig:idea-comp} demonstrates the idea of {\modelshortnameold}. 
Initially, {\modelshortnameold} customizes the ViL model through unsupervised prompt learning to incorporate task-specific information. 
Since both the ViL model and the source model struggle to make accurate predictions in a target domain, we adopt unbiased mutual information maximization to regulate this prompt learning. 
Consequently, {\modelshortnameold} distills the knowledge from this customized ViL model to the target model, with joint supervision through two designed regularization terms. 
The first strategy, most-likely category encouragement, prompts the model to concentrate on the most likely classes, thus facilitating coarse-grained distillation. In contrast, the second strategy, predictive consistency, ensures fine-grained distillation by maintaining consistency in predictions.  

We observe that training samples contribute differently to expressing task specificity during model adaptation.
In particular, easy samples offer limited new information, whereas task-specific knowledge is often concentrated in hard samples, many of which reside in the {\it gap region} characterized by uncertainty and class ambiguity.
However, the {\modelshortnameold} approach treats all samples equally during training. 
This uniform attention strategy may cause {\modelshortnameold} to overlook valuable information hidden within the gap region, especially as the proportion of easy data increases. 

Therefore, we propose {\modelshortname}, an enhanced SFDA framework with a vision-language prior, building upon {\modelshortnameold}.  
Specifically, in the phase of distilling customized knowledge (Fig.~\ref{fig:idea-comp}), we propose a novel gap region-driven scheme, allowing the distillation to occur through a process of reducing the gap region. 
To address this, we first identify the gap region using a novel metric named referenced entropy, which captures uncertainty and class ambiguity in feature space.
We then assign pseudo-labels to the uncertain region by fusing historical cross-model predictions.
To progressively reduce the gap region, we introduce three regularization strategies: {\it Category attention calibration}, {\it Predictive Consistency}, and {\it Gap region compression}, which collectively drive the features in the gap region toward more discriminative and class-consistent areas.  

Beyond {\modelshortnameold}~\citep{tang2024sourcefree} published in CVPR24, we introduce several key differences with  {\modelshortname}:
(1) Additionally considering data uncertainty to enhance distilling of a frozen ViL model. 
(2) Extending the DIFO framework by introducing a new gap region-driven progressive distillation strategy. 
This approach ensures that adaptation focuses on semantically rich, hard-to-classify regions rather than treating all samples uniformly. 
(3) Providing more experimental analysis and discussion, including the rationale analysis of evolving DIFO to the {\modelshortname} model.

To sum up, we make these \textbf{contributions} below:  
\begin{itemize}
\item[(1)] Pioneering the use of generic but heterogeneous knowledge sources (e.g., the off-the-shelf ViL model) for the SFDA problem, transcending the limited knowledge boundary of a pre-trained source model and unlabeled target data. 

\item[(2)] Development of a novel {\modelshortname} approach to effectively distill useful task-specific knowledge from the general-purpose ViL model. 
Specifically, our method leverages referenced uncertainty to identify and rank gap region samples by their difficulty, enabling the model to progressively distill knowledge from increasingly informative and task-specific examples. 

\item[(3)] Extensive evaluation on standard benchmarks, demonstrating the significant superiority of our {\modelshortname} over previous state-of-the-art alternatives under conventional closed-set settings, as well as more challenging partial-set and open-set settings.
\end{itemize}

\begin{figure*}[t]
    \begin{center}
        \includegraphics[width=0.98\linewidth,angle=0]{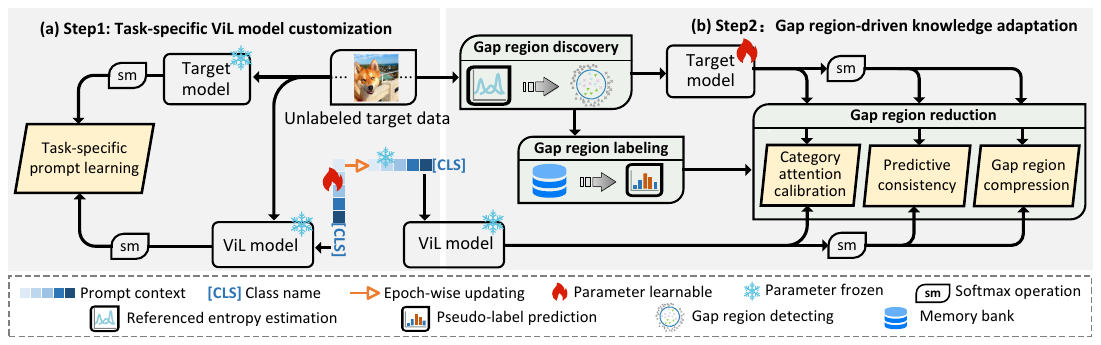} 
    \end{center}
    \caption{
    Overview of {\modelshortname}: Our method alternates between two steps. 
    First, we perform (a) {\it task-specific customization} (\texttt{Sec.\ref{sec:tsc}}) of a ViL model through task-specific prompt learning ($\mathcal{L}_{\text{S-I}}$). 
    This is achieved under soft predictive guidance using mutual information maximization. 
    Second, we perform (b) {\it gap region-driven knowledge adaptation} (\texttt{Sec.\ref{sec:gra}}), which recasts knowledge transfer as the task of shrinking the gap region, in which feature samples are entangled and exhibit high class ambiguity. 
    To this end, we first identify the gap region using referenced entropy, and then label the gap region by fusing the predictions of the target and ViL models that are stored in a memory bank. 
    Finally, we reduce the gap region by incorporating regularizations: Category attention calibration ($\mathcal{L}_{\text{cac}}$), Predictive consistency ($\mathcal{L}_{\text{pc}}$), and Gap region compression ($\mathcal{L}_{\text{rc}}$).     
    This pipeline is structured into three functional modules: (1) Gap region discovery (\texttt{Sec.\ref{sec:ga-disco}}), (2) Gap region labeling (\texttt{Sec.\ref{sec:ga-label}}), and (3) Gap region reduction (\texttt{Sec.\ref{sec:ga-redu}}). 
    }
    \label{fig:ov}
\end{figure*}

\section{Related Work}\label{sec:rework}
\subsection{Source-Free Domain Adaptation}
Existing SFDA approaches fall into three distinct categories. The first explicitly aligns the pseudo source domain with the target domain, treating SFDA as a specialized case of unsupervised domain adaptation. This alignment is achieved by constructing the pseudo source domain through a generative model~\citep{tian2022vdm} or by splitting the target domain based on prior source hypotheses~\citep{du2021ps}.

The second group extracts cross-domain factors from the source domain and transfers them via successive model adaptation to align feature distributions across the two domains. For example, \citep{tang2019adaptive} establishes a mapping relationship from a sample and its exemplar Support Vector Machine (SVM) (an individual classifier) on the source domain to ensure individual classification on the target domain. Some approaches leverage pre-trained source models to generate auxiliary factors, such as multi-hypothesis~\citep{lao2021hypothesis}, prototypes~\citep{tanwisuth2021pct,ZHOU2024109974}, source/target distribution estimation~\citep{ding2022source,lee2022confidence}, or hard samples~\citep{li2021divergence} to aid in feature alignment.

The third group incorporates auxiliary information refined from the unlabeled target domain. In addition to widely used pseudo-labels~\citep{liang2020we, chen2022self, rai2025label}, prediction confidence~\citep{wang2025silan, xu2025revisiting}, geometry information, such as intrinsic neighborhood structure~\citep{Litrico_2023_CVPR}, graph/hyper-graph connection~\citep{hwang2024sfda, jiang2025hg}, and target data manifold~\citep{tang2022sclm,tang2023source}, has also been exploited.

Despite continual advancements, these methods are limited by the knowledge derived solely from the pretrained source model and unlabeled target data. We break this limitation by tapping into the rich knowledge encoded in off-the-shelf multimodal foundation models.


\vspace{-0.5cm}
\changed{
\subsection{Uncertainty-aware Domain Adaptation}
}

\changed{
Within the Domain Adaptation (DA) context, uncertainty serves as a critical extrinsic manifestation of domain shift. Consequently, uncertainty-aware approaches have long been a hallmark of DA research. 
While these methods share a common philosophy of easy-to-hard adaptation, their specific methodological distinctions emerge in the following two aspects:
}

\changed{
\textit{(1) In terms of the mechanism for identifying hard samples}, existing work is in two lines.
The first one is output-based methods using standard entropy~\citep{ent7bishop2006pattern}, loss values~\citep{r2yang2022divide, g0yu2025smdanet}, energy-based scores~\citep{liu2020energy}, and class margin~\citep{f0song2024multi}. 
The second one is the characteristic-based methods that leverage geometric feature structures~\citep{r3zhu2021rich, i0yang2021exploiting} or historical information~\citep{h0lai2023memory}.
For instance, \citep{r3zhu2021rich} utilizes sliding window voting to propagate labels and distinguish between easy and hard samples. 
Alternatively, \citep{i0yang2021exploiting} employs local affinity for sample mining, and \citep{h0lai2023memory} characterizes hard samples via sub-prototypes derived from historical feature memory.
}

\changed{
\textit{(2) In terms of the strategy for exploiting the mined hard data to facilitate adaptation},
existing strategies generally fall into three categories. The first follows a divide-and-conquer paradigm: \citep{r3zhu2021rich} applies pseudo-labels to easy samples while using adversarial learning to enforce hard-to-easy feature alignment; \citep{g0yu2025smdanet} employs KL-regularized weighting to align target data with the source domain; and \citep{f0song2024multi} focuses exclusively on high-confidence samples, effectively discarding hard data. 
The second category \citep{r2yang2022divide,r4shin2020two} reformulates the adaptation task as semi-supervised learning, treating easy-sample pseudo-labels as ground truth. 
The third category leverages consistency learning to drive feature clustering-based adaptation: \citep{h0lai2023memory} utilizes sub-prototypes and cycle-consistent matching to classify private target classes, while \citep{i0yang2021exploiting} enforces label consistency among hard samples with high local affinity.
}

\changed{
In summary, regarding the two methodological dimensions above, {\modelshortname} diverges significantly from existing approaches:
\begin{itemize}
    \item [(1)] The proposed referenced entropy explicitly leverages the initial semantic cues provided by the source model for the target domain. This addresses a critical gap in existing approaches, which often ignore this inherent prior information in favor of generic uncertainty metrics.
    \item [(2)] Our curriculum-driven distillation employs an unbiased alignment anchored by mutual information entropy regularization. By avoiding absolute reliance on specific subsets, this design circumvents confirmation bias and error propagation. In contrast, existing methods' over-dependence on easy samples results in a restrictive unidirectional alignment, where labeling errors in easy data are easily amplified. 
\end{itemize}
}

\vspace{0.5cm}

\subsection{Large multimodal models}
Multimodal vision-language (ViL) models, such as CLIP \citep{radford2021learning} and OpenCLIP~\citep{cherti2023reproducible}, have shown promise across various mono-modal and multimodal tasks by capturing modality-invariant features. Approaches in this domain can be broadly categorized into two lines.
The first line focuses on enhancing ViL model performance. For instance, in~\citep{zhou2022learning,ge2025pluda}, prompt learning optimizes the text encoder through the use of tailored, learnable prompts designed for specific scenarios. Other efforts aim to improve data efficiency by repurposing noisy data~\citep{andonian2022robust} or by leveraging multimodal synthetic data~\citep{li2025enhancing}. 

The second line utilizes ViL models as external knowledge to enhance downstream tasks, as demonstrated in this paper. Previous work in knowledge transfer primarily falls into two frameworks. For the first scheme, where the ViL model is directly applied to the target task in a zero-shot fashion~\citep{liang2023open}, domain generality is leveraged without task-specific refinement. The second scheme does not focus on source model adaptation. Instead, it fine-tunes the ViL model~\citep{li2025unified} to the target domain through prompt or adaptor learning with an amount of manual labels~\citep{Cho_2023_ICCV}.

Most recently, multimodal large models such as CLIP and  ShareGPT~\citep{chen2024sharegpt4v} have been introduced to advance SFDA, substantially enhancing adaptation performance by leveraging generic knowledge in these models. 
{\bf Three orthogonal strategies} are adopted: {(1)} leveraging knowledge encoded in a single foundation model via knowledge adaptation~\citep{tang2024sourcefree,zhang2025source}, {(2)} integrating multi-LLM-teacher guidance via curriculum learning~\citep{chen2024empowering}, and {(3)} denoising ViL predictions \citep{tang2025proxy}. 
Our {\modelshortname} aligns with the first strategy.

Among previous work, the most relevant work to {\modelshortname} is Co-learn-V~\citep{zhang2025source}. 
Specifically, in each adapting epoch, {\modelshortname} first tailors CLIP’s generic knowledge to the specific task, and then performs gap-region-driven knowledge distillation that explicitly accounts for data uncertainty during adaptation.
Co-learn-V adopts a prototype-guided approach where zero-shot CLIP predictions are injected to refine the target prototypes. 
Two key features distinguish {\modelshortname} from Co-learn-V:
(1) a new customize–then–distill paradigm rather than a conventional prototype-based scheme integrating the original CLIP knowledge, and
(2) explicit modeling of data uncertainty, which Co-learn-V does not consider.

\section{Methodology} \label{sec:method}

{\bf Problem statement.} 
We have two distinct yet related domains, the source and target domains, characterized by the same $C$ categories.
The source domain is represented by a source model $\theta_s\!:\!\mathcal{X}_s \!\to \!\mathcal{Y}_s$ pretrained on a labeled training data $ \mathcal{D}_s = \{\mathcal{X}_s, \mathcal{Y}_s\}$.
For the target domain, we only have  $n$ unlabeled training data
$\mathcal{X}_t\!=\!\{{\boldsymbol{x}_{i}\}_{i=1}^{n}}$ without any category labels.
The objective is to derive a target model $\theta_t\!:\!\mathcal{X}_t\! \to \!\mathcal{Y}_t$ by adapting the source model $\theta_s$ on $\mathcal{X}_t$.
To that end, we explore the potential of a Visual-Language (ViL) model ${{\theta}}_{v}$.

\vspace{0.1cm}
\noindent
\textbf{\textit{Overview of \modelshortname.}}
To address SFDA with a ViL model, we propose the {\modelshortname} framework,
as depicted in Fig.~\ref{fig:ov}.
Our method alternates between two distinct stages to customize and adapt rich ViL knowledge reliably during adaptation. 
In the {\it first} stage, we customize the ViL model via prompt learning.
This serves to suppress the guidance errors from the ViL model. 

In the {\it second} stage, we perform gap region-driven knowledge adaptation, which frames the knowledge transfer as a process of reducing the gap region where the enclosed features are highly entangled and class-ambiguous. 
This process begins by identifying gap regions, whose difficulty gradually increases, using the proposed metric of referenced entropy.
Next, the identified gap regions are labeled through a fusion of predictions from the target model and the customized ViL models, assisted by a memory bank that retains historical information.
Subsequently, we reduce the gap region by adapting knowledge between the target model and the customized ViL models.
This adaptation is guided by a tailored constraint that promotes the selection of the most probable category labels in the logit space, while simultaneously preserving predictive consistency. 
Additionally, gap region compression regularization further strengthens the adaptation by minimizing referenced uncertainty.

\subsection{Stage I: ViL Model Customization} \label{sec:tsc}
We adopt the prompt learning framework for ViL model customization,
with ViL's parameters all frozen throughout.
The only learnable part in customization is the prompts each assigned for a specific class.
To optimize these prompts, we need a quality supervision.
In SFDA, however, it is challenging to customize such a domain-generic ViL model towards the target domain, at the absence of a well-trained target domain model.
This is because, none of them can reasonably make predictions.
That means there is no clearly preferred supervision signals available.

To address this challenge, we propose to explore the wisdom of the crowd
by leveraging their predictive interaction as more reliable supervision.
Formally, given an unlabeled target sample ${\boldsymbol{x}}_{k}$, we denote the predictions by the target model and the ViL model as $\theta_{t}\left({\boldsymbol{x}}_{i}\right)$ and $\theta_{v}\left({\boldsymbol{x}}_{i}\right)$, respectively.
We conduct the ViL model customization by maximizing the mutual information of the two predictions as:
\begin{equation}
    \label{eqn:loss_mim}
    \begin{aligned}
        \mathcal{L}_{\text{S-I}} =- \min_{\boldsymbol{v}}{\mathbb{E}_{{\boldsymbol{x}}_{i} \in {\mathcal{X}_t}}}{{\rm{I}}}\left(\theta_{t}\left({\boldsymbol{x}}_{i}\right), {\theta_{v}} \left(\boldsymbol{x}_{i}, \boldsymbol{v} \right)\right)
    \end{aligned}
\end{equation}
where $\boldsymbol{v}$ is the prompt context to be learned and the function ${{\rm{I}}}(\cdot,\cdot)$ measures the mutual information~\citep{ji2019invariant}. 

This alignment design differs significantly from the conventional adoption of the Kullback–Leibler (KL) divergence. 
First, mutual information is a lower optimization bound than KL divergence, facilitating deeper alignment \changed{(see Lemma~\ref{thm-one} which can be recalled from \citep{cover2006elements})}. 
\begin{lemma}
\textit{Given two random variables $X$, $Y$. Their mutual information ${\rm{I}}\left( X, Y \right)$ and KL divergence $D_{\rm{KL}}\left( X||Y \right)$ satisfy the unequal relationship as follows.} 
\begin{equation}
    \label{eqn:dsib}
    -{\rm{I}}\left( X, Y \right) \leq D_{\rm{KL}}\left( X, Y \right). 
\end{equation} 
\label{thm-one} 
\end{lemma}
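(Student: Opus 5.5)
The claim follows from two nonnegativity facts: $\mathrm{I}(X,Y)\ge 0$ and $D_{\rm KL}(X\|Y)\ge 0$. Then $-\mathrm{I}(X,Y)\le 0\le D_{\rm KL}(X\|Y)$, which is exactly \eqref{eqn:dsib}. So the plan is to fix the interpretation of both sides and prove each nonnegativity statement from Jensen's inequality (Gibbs' inequality), as in \citep{cover2006elements}.

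\textbf{Step 1: interpretation.} Since $D_{\rm KL}(X\|Y)$ is written between two random variables, I read it, as in the paper's use for $\theta_t(\boldsymbol{x}_i)$ and $\theta_v(\boldsymbol{x}_i)$, as the divergence between the distributions $p_X$ and $p_Y$. I assume they live on a common finite alphabet, the $C$ categories. The mutual information is defined through the joint law $p_{XY}$:
\begin{equation*}
\mathrm{I}(X,Y)=\sum_{x,y}p_{XY}(x,y)\log\frac{p_{XY}(x,y)}{p_X(x)p_Y(y)}.
\end{equation*}

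\textbf{Step 2: Gibbs' inequality.} For distributions $p,q$ on the same finite set, I would show $D_{\rm KL}(p\|q)=\sum_z p(z)\log\frac{p(z)}{q(z)}\ge 0$. Apply Jensen's inequality to the concave function $\log$ over the support of $p$:
\begin{equation*}
-D_{\rm KL}(p\|q)=\sum_{z:\,p(z)>0}p(z)\log\frac{q(z)}{p(z)}\le\log\sum_{z:\,p(z)>0}q(z)\le\log 1=0.
\end{equation*}
This immediately gives the right-hand side $D_{\rm KL}(X\|Y)\ge 0$.

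\textbf{Step 3: mutual information.} I would rewrite $\mathrm{I}(X,Y)=D_{\rm KL}(p_{XY}\,\|\,p_Xp_Y)$, a KL divergence between two distributions on the product alphabet. Step 2 then yields $\mathrm{I}(X,Y)\ge 0$, and chaining the two inequalities completes the proof.

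\textbf{Main obstacle.} The only delicate point is the conventions where some probabilities vanish. Terms with $p(z)=0$ are taken as $0$, which is why the Jensen step sums only over the support of $p$. If $q(z)=0$ while $p(z)>0$, the divergence is $+\infty$ and the inequality holds trivially. For continuous variables the same argument goes through with integrals, assuming absolute continuity, and otherwise the divergence is again $+\infty$. I would also remark that the bound is loose: equality forces both $\mathrm{I}(X,Y)=0$ and $p_X=p_Y$, which is consistent with the lemma being a statement only about relative optimization bounds.
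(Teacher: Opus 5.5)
Your proposal is correct and is the paper's argument: the paper writes $\mathrm{I}(X,Y)=D_{\rm KL}\bigl(p(\boldsymbol{x},\boldsymbol{y})\,\|\,p(\boldsymbol{x})p(\boldsymbol{y})\bigr)$ and chains nonnegativity of KL divergence into $-\mathrm{I}(X,Y)\le 0\le D_{\rm KL}(X\|Y)$. The only difference is that you prove Gibbs' inequality via Jensen and handle the zero-probability cases explicitly, while the paper simply cites nonnegativity of KL divergence.
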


\vspace{-0.7cm}
\changed{
{\bf Remark.} We presented this lemma to provide a self-contained theoretical motivation for using mutual information in our context, rather than claiming it as an original mathematical contribution.
}


Crucially, the KL divergence exhibits an inherent bias towards a specific prediction, making it less suitable for our context where none of the predictions holds a clear advantage. On the contrary, mutual information considers the joint distribution or correlation between the two predictions. This distinction arises from their respective definitions: 
$-{\rm{I}}\left( X, Y \right)=-H\left( X\right)+H\left( X|Y \right)$ and $D_{\rm{KL}}\left( X, Y \right)=-H\left( X\right)+H\left( X:Y \right)$, where
\begin{equation}
    \label{eqn:discuss-bias}
    \begin{split}
        H\left( X~|~Y \right) &=-\sum p(\boldsymbol{x},\boldsymbol{y})\log p(\boldsymbol{x} | \boldsymbol{y})\\
        H\left( X:Y \right)   &=-\sum p(\boldsymbol{x})\log p(\boldsymbol{y}).
    \end{split}
\end{equation}

\subsection{Stage II: Gap Region-driven Knowledge Adaptation} \label{sec:gra} 

At this stage, we adapt the customized ViL knowledge by explicitly reducing the gap region. 
This strategy is motivated by an empirical observation of the target feature distribution under the source model’s mapping.
As illustrated in Fig.~\ref{fig:gap-region} (a), the feature space presents a dual structural pattern: (1) class-consistent, island-like clusters characterized by high intra-class compactness, and (2) transitional regions situated between these clusters, where features are entangled and class-ambiguous. 
These transitional areas often give rise to model uncertainty and misclassification due to their semantic ambiguity. 
Given their role in separating well-formed class clusters, we refer to these regions as {\it gap regions}.

In practice, we first identify the region and subsequently assign pseudo-labels to it. 
With the resulting pseudo semantics, the gap region is progressively reduced.
The following subsections provide an elaboration on this process.  

\subsubsection{Gap region discovery} \label{sec:ga-disco}

The most salient characteristic of the gap regions is the intermixing of data points from multiple classes, which is typically reflected in high uncertainty. 
For the issue of gap region discovery, an uncertainty metric capable of achieving high discriminability plays a fundamental role. 
Existing approaches often use the absolute value of the signal/prediction's attribution, such as entropy~\citep{Bi2022Entropyweighted}, energy~\citep{banitalebi2023ebcdet}, and cluster density~\citep{choi2019pseudo}, to evaluate data uncertainty. 
They implicitly reference a uniform probability distribution as the baseline. 
However, in the SFDA context, the learning process begins with a non-uniform probability distribution, as defined by the pre-trained source model. 
Thus, the previous metrics are not suitable for SFDA. 

\changed{{\bf Remark}: Although prior theories suggest that entropy-based uncertainty can be brittle \citep{gal2016dropout,gawlikowski2023survey,maddox2019simple}, these critiques are grounded in supervised learning. This diverges from the SFDA setting (lacking labels and involving domain shift), rendering such classical insights less applicable. 
We therefore follow the established paradigm of using information entropy as a viable proxy for uncertainty in SFDA \citep{ent1xu2025revisiting,ent2lee2025duet,ent5safaei2025certainty}.}

To address this issue, we propose a novel metric, {\it referenced entropy}. 
For a specific training sample, its semantic attributes, such as category, foreground, background, and classification difficulty, are fixed. 
A learned well model should map these fixed attributes to stable outputs. 
In this view, we regard high uncertainty as significant differences in prediction, correspondingly reflected as vibration in terms of information entropy. 
Inspired by this, we introduce the referenced entropy, specifically designed to measure entropy variation relative to an evolving, adaptation-aware probability distribution reference. 
This design incorporates both the source model and the adaptation process. 

\begin{figure}[t]
    \setlength{\abovecaptionskip}{-2pt} 
    \begin{center}
        \includegraphics[width=0.95\linewidth,angle=0]{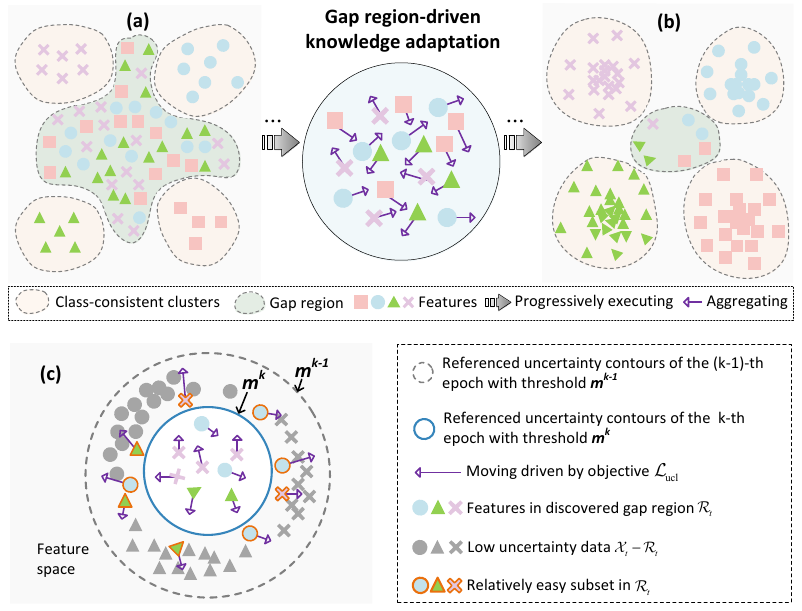} 
    \end{center}
    \caption{
    \changed{
     An illustrative idea of gap region reduction. 
    {\bf (a)} The gap region refers to the transitional area between class-consistent clusters, detected by the proposed metric of referenced entropy.
    {\bf (b)} Through a gap region-driven knowledge adaptation, features within the gap region are properly consolidated, reducing its extent.
    {\bf (c)} A snapshot of this progressive adaptation, in which $\mathcal{L}_{\text{rc}}$ guides the movement of features within the gap region $\mathcal{R}_t$, ultimately resulting in gap region compression.}
    }
    \label{fig:gap-region} 
\end{figure}

Assume the gap region is collectively denoted by ${\mathcal{R}}_t \subset \mathcal{X}_t$.  
With the proposed referenced uncertainty, we accomplish the gap region discovery, as outlined below.

\vspace{0.2cm}
\noindent
\textbf{\textit{(a) Referenced entropy estimation.}}
First, we derive an evolving adaptation-aware reference, formulated as:  
\begin{equation}
    \label{eqn:uncer-anchor}
    \begin{split}
        \rho_i^{k} &= \rho_i^{k-1} + \delta h_i^k,~k \in [0, K-1],
    \end{split}
\end{equation}
where $\delta$ is a weighting parameter; $K$ is the number of epochs, and
$h_i^k$ is the entropy of the prediction by the current in-training model.

Each sample's reference $\rho_i^{k}$ 
is initiated (when $k=0$) 
by calculating the entropy of the source model's prediction.
As in Eq.~\eqref{eqn:uncer-anchor}, the reference updates in the Exponential Moving Average (EMA) fashion, providing accumulative uncertainty estimation by continuously incorporating the adaptation dynamics over epochs.

With $\rho_i^{k}$, the referenced uncertainty of any target sample $\boldsymbol{x}_i\in\mathcal{X}_t$ in the $k$-th epoch is then measured as: 
\begin{equation}
    \label{eqn:uncer-esti}
    \begin{split}
        U(\boldsymbol{x}_i) &= H(\theta_t(\boldsymbol{x}_i)) - \rho_i^{k},
    \end{split}
\end{equation}
where $H(\theta_t(\boldsymbol{x}_i))$ returns the predictive entropy of $\boldsymbol{x}_i$ with the current target model $\theta_t$.

\vspace{0.1cm}
\noindent
\textbf{\textit{(b) Detecting gap region.}}
To achieve a gradual discovery process, we introduce an uncertainty threshold $m^k, k\in [0, K-1]$ that increases epoch-wise to schedule the adaptation process. 
For each training iteration in the $k$-th epoch, we split the hard set from the entire target data, denoted \changed{by $\mathcal{R}_t$}, which consists of the data whose uncertainty is larger than the current epoch's threshold $m^k$. 
Formally,  
\begin{equation}\label{eqn:confi-split} 
{\mathcal{R}}_t = \left\{ \boldsymbol{x}_i ~|~ U(\boldsymbol{x}_i) > m^k \right\}_{i=1}^{|{\mathcal{R}}_t|},~m^k = \epsilon \cdot \gamma^{k},
\end{equation}
where $\epsilon \in (0, 1)$ and $ \gamma > 1$.


The discovery rule in Eq.~\eqref{eqn:confi-split} suggests that by increasing $m^k$ for $0 \leq k \leq K-1$, training can begin with a subset that includes all potential hard samples. 
As training progresses, samples are gradually excluded based on their associated uncertainty, following a trajectory from low to high uncertainty. 
This strategy leads to a progressive increase in the difficulty level of the discovered gap regions.

\subsubsection{Gap region labeling} \label{sec:ga-label}

In practice, we achieve the gap region labeling by assigning confused pseudo-labels to the samples contained in it. 
This process builds upon a memory bank, as shown in Fig.~\ref{fig:ov} (b). 
This memory bank manages two types of knowledge for all target samples: 
(1) predictions by the target model: $\{\boldsymbol{p}'_i\}_{i=1}^{n}$, and  
(2) predictions by the ViL model: $\{\boldsymbol{p}''_i\}_{i=1}^{n}$.

Throughout the adaptation, the predictions from the target model are updated iteratively. At the end of each training iteration, the newly predicted labels for the training
batch from the target model replace their counterparts in the prediction bank. In contrast, predictions from the ViL model are updated collectively in an epoch-wise manner, triggering
updates every $M$ iterations. This mixed-update strategy is designed to strike a balance between maintaining the stability of the customized ViL model’s guidance and capturing the task-specific dynamics inherent in the adaptation. 

Based on the stored prediction information, the creation of a pseudo-label for any samples in the gap region $\mathcal{R}_t$ involves a historical prediction fusion process as: 
\begin{equation} 
    \label{eqn:onehot-fusion}
    \begin{split}
        {\boldsymbol{\bar{p}}}_{i} = \tau~{\boldsymbol{p}'}_{i} + (1-\tau)~\boldsymbol{p}''_{i},
    \end{split}
\end{equation}
where the weight $\tau$, drawn from an Exponential distribution with parameter $\lambda$, is a crucial factor.

The rationale behind incorporating this confusion is to harness the collective knowledge of both the target model and the ViL model in order to enhance the discernment of probable category labels for each sample.
Additionally, this fusion introduces dynamic bias rectification (represented by ${\boldsymbol{p}}_{i}$) based on the guidance from the customized ViL model ($\boldsymbol{p}''_{i}$). The role of ${\boldsymbol{p}'}_{i}$ is to provide adjustments, leading us to adopt an asymmetric random weighting approach represented by $\tau$.

\subsubsection{Gap region reduction} \label{sec:ga-redu}

From the perspective of feature distribution, reducing the gap region relies on guiding the samples within it toward class-consistent, island-like clusters.
To this end, we introduce three key designs.
Specifically, {\it category attention calibration} and {\it predictive consistency} jointly encouraging this sample migration by adapting quality semantic knowledge.
Moreover, {\it gap region compression}, achieved through uncertainty minimization, further reinforces this process.
The details of these designs are presented below.

\vspace{0.1cm}
\noindent
\textbf{\textit{(a) Category attention calibration.}}
The gap region labeling fuses the semantics from the target and ViL model (see Eq.~\eqref{eqn:onehot-fusion}). 
Since the prediction errors of both models are not strictly constrained, the fused pseudo-label does not guarantee that the highest-probability category is reliable.
Nevertheless, the incorporation of multi-perspective knowledge significantly improves the likelihood that the top-N predicted categories include the ground truth.

Inspired by the above insight, we formulate a regularization to guide the target model’s focus toward these categories, referred to as {\it category attention calibration}. 
Fig.~\ref{fig:aa} illustrates this regularization.  
Specifically, we begin by identifying the {\it top-N} most probable categories using ${\boldsymbol{\bar{p}}}_{i}$. The indices of these identified categories are denoted by $\mathcal{M}_i=\{m_k\}_{k=1}^{N}$. With $\mathcal{M}_i$, the target model's logit of a target domain sample $x_i$, denoted as $\boldsymbol{l}_i$, is segregated into positive and negative category groups.
We can define this regularization as: 
\begin{equation}\label{eqn:loss_mce} 
    \begin{split} 
    \mathcal{L}_{\text{cac}} &= \min_{\theta{t}}\mathbb{E}_{{\boldsymbol{x}}_{i} \in {{\mathcal{R}}_t}}  
    \log \frac{\exp\left( a_i / \iota \right)} {\sum_{\substack{j \neq \mathcal{M}_i}}
    \exp{\left( b_i \cdot \boldsymbol{l}_{i,j}/\iota \right)}}\\
    a_i &= \prod\limits_{k=1}^{N} {\boldsymbol{l}_{i,m_k}}, ~~~
    b_i = \sum\limits_{k=1}^{N} {\boldsymbol{l}_{i,m_k}}
    \end{split}
\end{equation}
where $\boldsymbol{l}_{i, a}$ denotes the $a$-th element of $\boldsymbol{l}_{i}$, $\iota$ is the temperature parameter, and
${\mathcal{R}}_t$ is the gap region identified by Eq.~\eqref{eqn:confi-split}.

\begin{figure}[t]
    \centering
    \includegraphics[width=0.97\linewidth]{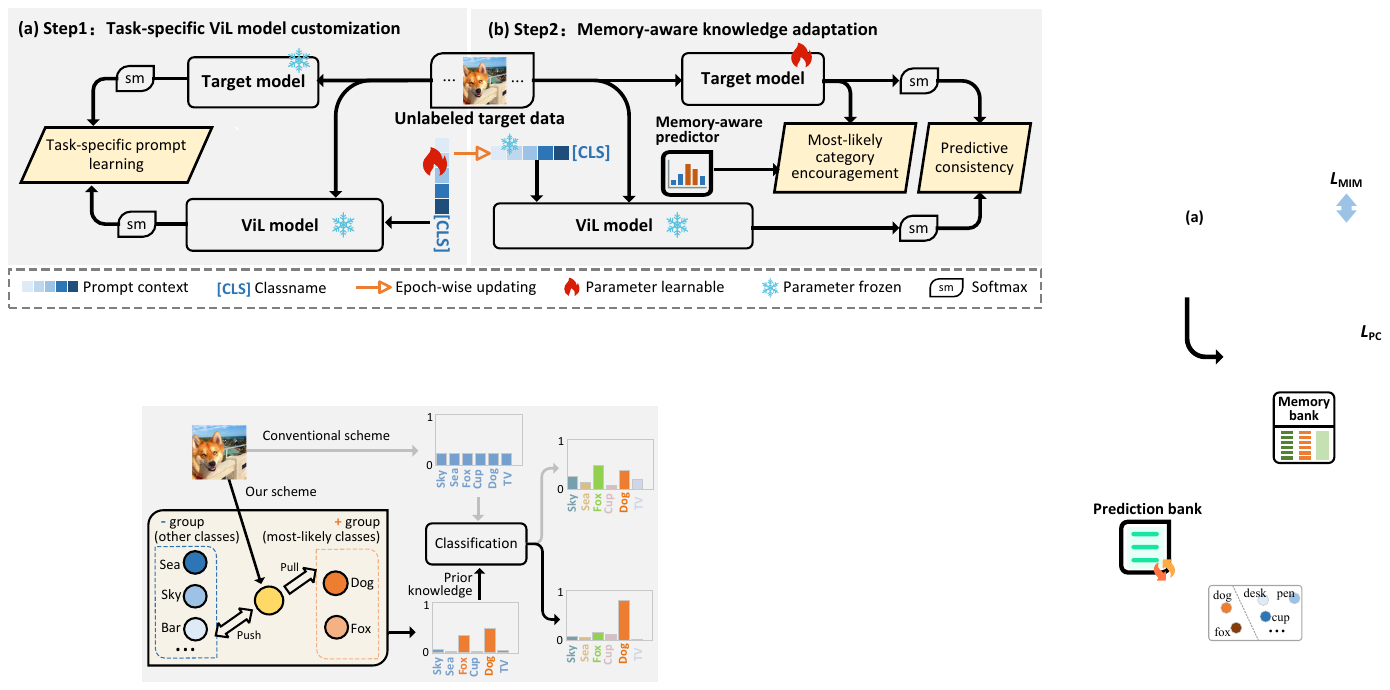}
    \caption{Illustration of the design of category attention calibration. 
    In contrast to the conventional approach that assigns equal importance to all categories (depicted by the gray line), our approach (represented by the black line) introduces additional supervision by incorporating extra knowledge about the two most likely categories.}
    \label{fig:aa}
\end{figure}


In Eq.~\eqref{eqn:loss_mce}, we note that the product operation with $a_i$ in the numerator amplifies penalties for the probability decrease on the most likely categories compared to the sum form. Similarly, the sum with $b_i$ in the denominator serves as an increasing weighting parameter to enhance suppression of values at other locations. Moreover, $a_i$ is more sensitive to changes than $b_i$ due to $\frac{{\partial a_i}}{{\partial m_k}} \propto {O}(n^{N-1})$ and $\frac{\partial b_i}{\partial m_k} \propto {O}(1)$. By combining the use of $a_i$ and $b_i$, we globally impose a calibration effect on the elements corresponding to the most likely categories within the logit $\boldsymbol{l}_i$. 
Essentially, attention is introduced to these potential categories, as illustrated in the box with a yellow background in Fig.~\ref{fig:aa}.

\vspace{0.1cm}
\noindent
\textbf{\textit{(b) Predictive consistency.}} 
For the purpose of knowledge adaptation, we incorporate the conventional predictive consistency loss as:
\begin{equation}
    \label{eqn:loss_pc}
    \begin{aligned}
        \mathcal{L}_{\text{pc}} =\min_{\theta_{t}} \left[- \mathbb{E}_{{\boldsymbol{x}}_{i} \in {{\mathcal{R}}_t}}{{\rm{I}}}\left(\theta_{t}\left({\boldsymbol{x}}_{i}\right), {\theta_{v}} \left(\boldsymbol{x}_{i}, \boldsymbol{v}*\right)\right)+\alpha \mathcal{L}_b \right], 
    \end{aligned}
\end{equation}
where
${\mathcal{R}}_t$ is the gap region, $\theta_{t}(\boldsymbol{x}_{i})$ represents the target prediction, $\theta_{v}(\boldsymbol{x}_{i},\boldsymbol{v})$ denotes the ViL prediction, and $\boldsymbol{v}$ is the prompt context learned during the initial phase of task-specific customization. The function ${{\rm{I}}}(\cdot,\cdot)$ corresponds to the mutual information function. The parameter $\alpha$ serves as a trade-off parameter, and the category balance term ${L}_{\rm{B}} ={\rm{KL}} (\left. {\bar{\boldsymbol{q}}} \right|| {\boldsymbol{\frac{1}{C}}})$ aligns with previous approaches~\citep{i0yang2021exploiting,tang2023source}, preventing solution collapse by ensuring the empirical label distribution $\bar{\boldsymbol{q}}$ matches the uniform distribution $\boldsymbol{\frac{1}{C}}$.
For the reasons elaborated in \texttt{Section~\ref{sec:tsc}}, we also employ mutual information for alignment.

\begin{algorithm}[t]
    \caption{Pseudo code of training {\modelshortname}}
    \label{alg:algorithm}
    \raggedright
    \textbf{Input}: Pre-trained source model $\theta_s$, target model $\theta_t$, ViL model $\theta_{v}$, unlabelled target domain $\mathcal{X}_t$, learnable prompt context $\boldsymbol{v}$, \#epoch $K$, \#iteration per epoch $M$.\\
    \textbf{Output}: The adapted target model $\theta_t$.\\
    \textbf{Procedure}:
    \begin{algorithmic}[1] 
        \STATE \textbf{Initialisation}: Set model $\theta_t= \theta_s$, $\boldsymbol{v}$ = "\textit{a photo of a [CLS]."}
        \FOR{$k$ = 1:$K$}
        \STATE Update ViL predictions in the prediction bank.
        \STATE ========== \textit{Step1} ==========
        \FOR{$j$ = 1:$M$} 
        \STATE Sample a batch from $\mathcal{X}_t$;
        \STATE Forward prompt $\boldsymbol{v}$, this batch $\mathcal{X}_t^b$ through $\theta_{v}$; 
        \STATE Forward $\mathcal{X}_t^b$ data through $\theta_t$; 
        \STATE Customize $\theta_{v}$ by optimizing $\mathcal{L}_{\rm{S-I}}$ (Eq. \eqref{eqn:loss_mim}) and obtain task-specific prompt context $\boldsymbol{v}^*$.
        \ENDFOR
        \STATE ========== \textit{Step2} ==========
        \STATE Compute current uncertainty margin $m^k$.
        \STATE Compute current adaptation-aware reference $\rho^k$.
        \FOR{$j$ = 1:$M$}
        \STATE Sample a batch from $\mathcal{X}_t$;
        \STATE Forward the $\boldsymbol{v}^*$, this batch ${\mathcal{X}}_t^b$ through $\theta_{v}$;
        \STATE Forward ${\mathcal{X}}_t^b$ through $\theta_t$;
        \STATE Estimate the referenced uncertainty of ${\mathcal{X}}_t^b$ (Eq.~\eqref{eqn:uncer-esti});
        \STATE Detect gap region ${\mathcal{R}}_t^b$ from ${\mathcal{X}}_t^b$ (Eq.~\eqref{eqn:confi-split});
        \STATE Label gap region (Eq.~\eqref{eqn:onehot-fusion});
        \STATE Reducing gap region by optimizing ${\mathcal{L}_{{\text{S-II}}}}$ (Eq.~\eqref{eqn:loss-ka}), updating target model $\theta_{t}$.
        \STATE Update target predictions in the prediction bank.
        \ENDFOR
        \STATE Set $\boldsymbol{v} =\boldsymbol{v}^*$. 
        \ENDFOR 
        \STATE \textbf{return} Adapted model $\theta_t$.
    \end{algorithmic}
\end{algorithm}

\vspace{0.1cm}
\noindent
\textbf{\textit{(c) Gap region compression.}} 
Beyond semantics-guided reduction via $\mathcal{L}_{\text{cac}}$ and $\mathcal{L}_{\text{pc}}$, we further promote uncertainty-oriented gap-region compression by minimizing referenced entropy, formulated as:
\begin{equation}\label{eqn:loss_uncer_min} 
    \begin{split} 
    \mathcal{L}_{\text{rc}} &= \min_{\theta_{t}}\mathbb{E}_{{\boldsymbol{x}}_{i} \in {\mathcal{R}_t}} 
    U\left({\boldsymbol{x}}_{i}\right),
    \end{split}
\end{equation}
where $m^k$ is the uncertainty margin used in the $k$-th epoch, its value is the same as the threshold in Eq.~\eqref{eqn:confi-split}.

For better clarity, \changed{Fig.~\ref{fig:gap-region} (c)} shows how $\mathcal{L}_{\rm{rc}}$ compresses the gap region. 
In Eq.~\eqref{eqn:loss_uncer_min}, minimizing the loss enables the relatively easy subset within the current gap region (indicated by orange edge) to move beyond the current referenced uncertainty contour $m^{k}$ (shown in blue), entering the low uncertainty area (outside $m^{k}$). 
So, the relatively harder samples remain in the high-uncertainty zone (inside $m^{k}$), enabling the discovery of a more challenging gap region for the $(k+1)$-th epoch.

\subsection{Objective and Model Training}
To systematically distill and leverage task-specific knowledge from the ViL model, we adopt an epoch-wise training approach for {\modelshortname}. The training process is divided into $K$ epochs, each comprising two stages aligned with the two steps in the {\modelshortname} framework (Fig.~\ref{fig:ov}). During the first stage, training is governed by the objective ${\mathcal{L}_{{\text{S-I}}}}$ (see Eq.~\eqref{eqn:loss_mim}), and in the subsequent second stage, the objective function transitions to
\begin{equation} 
    \label{eqn:loss-ka}
     \mathcal{L}{_{{\text{S-II}}}} = \mathcal{L}{_{{\text{pc}}}} + \beta \mathcal{L}{_{{\text{cac}}} + \eta \mathcal{L}{_{{\text{rc}}}}},
\end{equation}
where $\beta$ and $\eta$ are trade-off parameters. 
In Eq.~\eqref{eqn:loss-ka}, the component losses work with different functionality.
\changed{
$\mathcal{L}_{\text{pc}}$ also encourages interactive learning between the target and ViL models through mutual information. This design choice stems from the unsupervised nature of the Stage I ViL customization, which produces predictions with residual noise. Although this noise is partially mitigated, the outputs remain insufficiently reliable to serve as definitive ground truth.
}
$\mathcal{L}_{\text{rc}}$ promotes a shift of the hard data from within to outside the contour as shown in \changed{Fig.~\ref{fig:gap-region} (c)}, whilst $\mathcal{L}_{\text{pc}}$ and $\mathcal{L}_{\text{cac}}$ jointly impose the guidance of moving direction to potential categories.
For a clear view, we summarize the whole training procedure of {\modelshortname} in Algorithm~\ref{alg:algorithm}.

\begin{table*}[t]
\renewcommand\tabcolsep{11.1pt}
\renewcommand\arraystretch{1.0}
\scriptsize
\centering
\caption{Closed-set SFDA accuracies (\%) on the \textbf{Office-31} dataset. 
SF means source-data-free; the best result is highlighted in {\color{cmblu}\textbf{blue}}.}
\begin{tabular}{l c c c c c c c |c}
\toprule
Method & {ViL} & A$\to$D & A$\to$W & D$\to$A & D$\to$W & W$\to$A & W$\to$D & Avg. \\
\midrule
Source-model &\textbf{--} &79.1 &76.6 &59.9 &95.5 &61.4 &98.8 &78.6 \\
\midrule
SHOT~\citep{liang2020we}         &\xmark &93.7 &91.1 &74.2 &98.2 &74.6 &\textbf{\color{cmblu}100.} &88.6 \\
NRC~\citep{i0yang2021exploiting}          &\xmark &96.0 &90.8 &75.3 &99.0 &75.0 &\textbf{\color{cmblu}100.} &89.4 \\
HCL~\citep{huang2021model}       &\xmark &94.7 &92.5 &75.9 &98.2 &77.7 &\textbf{\color{cmblu}100.} &89.8 \\
AaD~\citep{yang2022attracting}   &\xmark &96.4 &92.1 &75.0 &99.1 &76.5 &\textbf{\color{cmblu}100.} &89.9 \\
CoWA~\citep{lee2022confidence}   &\xmark &94.4 &95.2 &76.2 &98.5 &77.6 &99.8 &90.3 \\
SCLM~\citep{tang2022sclm}        &\xmark &95.8 &90.0 &75.5 &98.9 &75.5 &99.8 &89.4 \\
ELR~\citep{yi2023source}         &\xmark &93.8 &93.3 &76.2 &98.0 &76.9 &\textbf{\color{cmblu}100.} &89.6 \\
PLUE~\citep{Litrico_2023_CVPR}   &\xmark &89.2 &88.4 &72.8 &97.1 &69.6 &97.9 &85.8 \\
TPDS~\citep{tang2023source}      &\xmark &97.1 &94.5 &75.7 &98.7 &75.5 &99.8 &90.2 \\
SF(DA)$^2$~\citep{hwang2024sfda}     &\xmark &97.1 &94.5 &75.7 &98.7 &75.5 &99.8 &90.2 \\
SiLAN~\citep{wang2025silan}      &\xmark &97.1 &95.8 &76.4 &98.9 &76.9 &\textbf{\color{cmblu}100.} &90.7 \\
UCon-SFDA~\citep{xu2025revisiting}      &\xmark &94.8 & 95.4 &98.9 &\textbf{\color{cmblu}100.} &77.1 &77.1 &90.6 \\ 
\midrule
SHOT-ViL~\citep{liang2020we}     &\cmark &95.4 &94.0 &76.8 &98.0 &78.4 &\textbf{\color{cmred}100.} &90.4 \\
GKD-ViL~\citep{tang2021model}    &\cmark &95.2 &95.7 &75.6 &98.1 &75.9 &\textbf{\color{cmred}100.} &90.1 \\
NRC-ViL~\citep{i0yang2021exploiting}      &\cmark &94.6 &94.6 &75.8 &97.7 &78.0 &99.6 &90.1 \\
AaD-ViL~\citep{yang2022attracting}&\cmark &95.6 &95.5 &75.3 &97.6 &76.4 &\textbf{\color{cmred}100.} &90.1 \\
TPDS-ViL~\citep{tang2023source}    &\cmark &95.0 &93.8 &77.3 &98.6 &77.2 &99.8 &90.3 \\
Co-learn-V~\citep{zhang2025source} &\cmark &90.4 &89.9 &\textbf{\color{cmred}92.2} &89.9 &\textbf{\color{cmred}92.2} &91.0 &90.9 \\
ViLAaD-V~\citep{tarashima2025vilaad}  &\cmark  &\textbf{\color{cmred}97.6} &93.8 &79.6 &96.7 &79.7 &\textbf{\color{cmred}100.} &91.2 \\
DIFO~\citep{tang2024sourcefree}  &\cmark &97.2 &95.5 &83.0 &97.2 &83.2 &98.8 &92.5 \\
\rowcolor{gray!40}
\textbf{\modelshortname}~(Ours)   &\cmark &97.2 &\textbf{\color{cmred}96.9} &82.9 &97.2 &83.3 &99.0 &\textbf{\color{cmred}92.8} \\
\bottomrule
\end{tabular}
\label{tab:oc}
\end{table*}

\section{Experimental Results}\label{sec:exp}
{\bf Datasets}
To evaluate {\modelshortname}, we adopt four common SFDA benchmarks, as outlined below.

\begin{itemize}
\item[(1)] \textbf{Office-31}~\citep{saenko2010adapting} is a widely recognized small-scale dataset employed in visual domain adaptation. It encompasses three distinct domains: Amazon~(A), Webcam~(W), and Dslr~(D), each capturing real-world objects situated within diverse office settings. In aggregate, the dataset comprises 4,652 images distributed across 31 categories.

\item[(2)] \textbf{Office-Home}~\citep{venkateswara2017deep} is a medium-scale dataset predominantly employed for domain adaptation. It amasses 15k images distributed across 65 categories, reflecting both working and family environments. The dataset introduces four unique domains: Artistic images~(Ar), Clip Art~(CL), Product images~(Pr), and Real-world images~(Rw).

\item[(3)] \textbf{VisDA}~\citep{peng2017visda} presents a challenging landscape for domain adaptation with 12 synthetic-to-real transfer recognition tasks. The source domain houses 152k synthetic images, whereas the target domain encompasses 55k real object images sourced from Microsoft COCO.

\item[(4)] \textbf{DomainNet-126}~\citep{peng2019moment} is another large-scale dataset. As a subset of DomainNet containing 600k images of 345 classes from 6 domains of image styles, it has 145k images from 126 classes, sampled from 4 domains, Clipart~(C), Painting~(P), Real~(R), Sketch~(S), as~\citep{saito2019semi} identify severe noisy labels in the dataset. 
\end{itemize}

\subsection{Implementation Details}
\textbf{Network structure}.
The {\modelshortname} model contains two network branches.
In the target model branch, the feature extractor consists of a deep architecture and a fully-connected layer followed by a batch-normalization layer. 
Same to the previous work~\citep{long2018conditional,xu2019larger,liang2020we,i0yang2021exploiting,roy2022uncertainty}, the deep architecture is transferred from the deep models pre-trained on ImageNet (i.e., ResNet-50 is used on {Office-31}, {Office-Home} and {DomainNet-126}, whilst ResNet-101 is adopted on {VisDA}). 
The ending classifier is a fully-connected layer with weight normalization. 
On the other hand, the ViL model branch chooses the most adopted CLIP as the implementation where the text encoder's transformer-based architecture follows modification proposed in ~\citep{radford2021learning} as the backbone. 
Regarding the image encoder, we adopt a single version corresponding to the implementation of {DIFO}~\citep{tang2024sourcefree}. 
Specifically, in {\modelshortname}, the image encoder adopts the ViT-B/32 architecture proposed in CLIP~\citep{radford2021learning}.

\textbf{Parameter setting}.
For the trade-off parameters $\alpha$, $\beta$, and $\eta$, we set the same values on all datasets. 
Specifically, $\beta$ and $\eta$ in objective $\mathcal{L}{_{{\text{S-II}}}}$ (Eq.~\eqref{eqn:loss-ka}) are 0.4 and 0.05, whilst $\alpha$ in $\mathcal{L}{_{{\text{pc}}}}$ (Eq.~\eqref{eqn:loss_pc}) is 1.0.  
The parameter of Exponential distribution $\tau$ in Eq.~\eqref{eqn:onehot-fusion} is specified to 10.0. 
The temperature parameters in Eq.~\eqref{eqn:loss_mce} are $\tau=0.1$. 
The number of the most-likely categories is set to $N=2$. 
The parameters of uncertainty margin $m^k$ is set to $(\epsilon, \gamma) = (0.01, 1.01)$.

\textbf{Training setting}.
We adopt the batch size of 64, SGD optimizer with momentum 0.9 and 15 training epochs on all datasets. 
The prompt template for initiation is the mostly used \textit{'a photo of a [CLASS].'}~\citep{radford2021learning} where [CLASS] stands for the class name. 
All experiments are conducted with PyTorch on a single \changed{GPU of NVIDIA TITAN RTX}. 

\begin{table*}[t]
\renewcommand\tabcolsep{0.2pt}
\renewcommand\arraystretch{1.0} 
\scriptsize
\centering
\caption{Closed-set SFDA result (\%) on the \textbf{Office-Home} dataset. {ViL} means whether using ViL model; the best result is highlighted in {\color{cmblu}\textbf{blue}}.}
\label{tab:oh}
\begin{tabular}{l c c c c c c c c c c c c c | c}
\toprule
Method & {ViL} 
&Ar$\to$Cl &Ar$\to$Pr &Ar$\to$Rw &Cl$\to$Ar &Cl$\to$Pr &Cl$\to$Rw &Pr$\to$Ar &Pr$\to$Cl &Pr$\to$Rw
&Rw$\to$Ar &Rw$\to$Cl &Rw$\to$Pr &Avg.\\
\midrule
Source-model &\textbf{--} &43.7 &67.0 &73.9 &49.9 &60.1 &62.5 &51.7 &40.9 &72.6 &64.2 &46.3 &78.1 &59.2 \\
\midrule
SHOT~\citep{liang2020we}         &\xmark &56.7 &77.9 &80.6 &68.0 &78.0 &79.4 &67.9 &54.5 &82.3 &74.2 &58.6 &84.5 &71.9 \\
NRC~\citep{i0yang2021exploiting}          &\xmark &57.7 &80.3 &82.0 &68.1 &79.8 &78.6 &65.3 &56.4 &83.0 &71.0 &58.6 &85.6 &72.2 \\
AaD~\citep{yang2022attracting}   &\xmark &59.3 &79.3 &82.1 &68.9 &79.8 &79.5 &67.2 &57.4 &83.1 &72.1 &58.5 &85.4 &72.7 \\
CoWA~\citep{lee2022confidence}   &\xmark &56.9 &78.4 &81.0 &69.1 &80.0 &79.9 &67.7 &57.2 &82.4 &72.8 &60.5 &84.5 &72.5 \\
SCLM~\citep{tang2022sclm}        &\xmark &58.2 &80.3 &81.5 &69.3 &79.0 &80.7 &69.0 &56.8 &82.7 &74.7 &60.6 &85.0 &73.0 \\
ELR~\citep{yi2023source}         &\xmark &58.4 &78.7 &81.5 &69.2 &79.5 &79.3 &66.3 &58.0 &82.6 &73.4 &59.8 &85.1 &72.6 \\
PLUE~\citep{Litrico_2023_CVPR}   &\xmark &49.1 &73.5 &78.2 &62.9 &73.5 &74.5 &62.2 &48.3 &78.6 &68.6 &51.8 &81.5 &66.9 \\
TPDS~\citep{tang2023source}      &\xmark &59.3 &80.3 &82.1 &70.6 &79.4 &80.9 &69.8 &56.8 &82.1 &74.5 &61.2 &85.3 &73.5 \\
SF(DA)$^2$~\citep{hwang2024sfda} &\xmark &57.3 &79.1 &81.6 &68.4 &78.2 &78.7 &69.5  &56.5 &81.9 &74.0 &59.9 &85.1 &72.5 \\
SiLAN~\citep{wang2025silan} &\xmark &58.2 &81.2 &82.5 &69.8 &78.6 &80.3 &68.4 &58.6 &82.5 &75.6 &60.8 &86.1 &73.6 \\
UCon-SFDA~\citep{xu2025revisiting}  &\xmark &65.6 &87.8 &\textbf{\color{cmred}91.0} &78.6 &79.3 &87.6 &80.2 &65.9 &87.3 &83.2 &69.1 &88.7 &80.3 \\
\midrule
SHOT-ViL~\citep{liang2020we}         &\cmark &64.7 &84.0 &86.4 &79.4 &87.3 &85.4 &78.1 &65.8 &87.6 &81.6 &65.0 &88.5 &79.5 \\
NRC-ViL~\citep{i0yang2021exploiting}          &\cmark &55.0 &83.4 &86.7 &78.3 &86.5 &85.1 &77.1 &69.9 &87.2 &90.9 &65.5 &88.4 &79.5 \\
AaD-ViL~\citep{yang2022attracting}   &\cmark &69.4 &83.0 &84.9 &76.4 &85.2 &84.6 &73.3 &71.1 &86.3 &79.0 &72.1 &88.7 &79.5 \\
TPDS-ViL~\citep{tang2023source}      &\cmark &67.7 &82.7 &86.1 &76.5 &88.0 &86.3 &73.6 &69.4 &87.1 &79.3 &70.5 &89.7 &79.7 \\
Co-learn-V~\citep{zhang2025source}   &\cmark &71.6 &87.7 &88.0 &72.3 &83.5 &86.4 &73.7 &67.0 &87.7 &77.1 &70.9 &90.8 &79.7 \\
{ViLAaD-V}~\citep{tarashima2025vilaad}   &\cmark  &66.6 &86.8 &87.5 &77.2 &88.1 &87.1 &77.1 &65.9 &88.5 &78.7 &68.0 &89.1 &80.0\\ 
DIFO~\citep{tang2024sourcefree}      &\cmark &70.6 &90.6 &88.8 &82.5 &90.6 &88.8 &\textbf{\color{cmred}80.9} &70.1 &88.9 &83.4 &70.5 &91.2 &83.1 \\
\rowcolor{gray!40}
\textbf{\modelshortname}~(Ours)       &\cmark &\textbf{\color{cmred}73.2} &\textbf{\color{cmred}91.4} &89.7 &\textbf{\color{cmred}83.6} &\textbf{\color{cmred}91.0} &\textbf{\color{cmred}89.8} &80.8 &\textbf{\color{cmred}73.8} &\textbf{\color{cmred}90.0} &\textbf{\color{cmred}84.3} &\textbf{\color{cmred}74.2} &\textbf{\color{cmred}92.3} &\textbf{\color{cmred}84.5} \\
\bottomrule
\end{tabular}
\end{table*}

\begin{table*}[t] 
\renewcommand\tabcolsep{3.0pt}
\renewcommand\arraystretch{0.95}
\scriptsize
\centering
\caption{Closed-set SFDA result (\%) on the \textbf{VisDA} dataset. {ViL} means whether using ViL model; the best result is highlighted in {\color{cmblu}\textbf{blue}}.}
\label{tab:vc}
\begin{tabular}{l c c c c c c c c c c c c c | c}
\toprule
Method & {ViL} & plane & bcycl & bus & car & horse & knife & mcycl & person & plant & sktbrd & train & truck & Per-class \\
\midrule
Source                  &--   &60.7 &21.7 &50.8 &68.5 &71.8 &5.4  &86.4 &20.2 &67.1 &43.3 &83.3 &10.6 &49.2 \\
\midrule
SHOT~\citep{liang2020we}         &\xmark &95.0 &87.4 &80.9 &57.6 &93.9 &94.1 &79.4 &80.4 &90.9 &89.8 &85.8 &57.5 &82.7 \\
NRC~\citep{i0yang2021exploiting}          &\xmark &96.8 &91.3 &82.4 &62.4 &96.2 &95.9 &86.1 &90.7 &94.8 &94.1 &90.4 &59.7 &85.9 \\
AaD~\citep{yang2022attracting}   &\xmark &97.4 &90.5 &80.8 &76.2 &97.3 &96.1 &89.8 &82.9 &95.5 &93.0 &92.0 &64.7 &88.0 \\
CoWA~\citep{lee2022confidence}   &\xmark &96.2 &89.7 &83.9 &73.8 &96.4 &97.4 &89.3 &86.8 &94.6 &92.1 &88.7 &53.8 &86.9 \\
SCLM~\citep{tang2022sclm}         &\xmark &97.1 &90.7 &85.6 &62.0 &97.3 &94.6 &81.8 &84.3 &93.6 &92.8 &88.0 &55.9 &85.3 \\
ELR~\citep{yi2023source}         &\xmark &97.1 &89.7 &82.7 &62.0 &96.2 &97.0 &87.6 &81.2 &93.7 &94.1 &90.2 &58.6 &85.8 \\
PLUE~\citep{Litrico_2023_CVPR}   &\xmark &94.4 &91.7 &89.0 &70.5 &96.6 &94.9 &92.2 &\textcolor{cmred}{\textbf{88.8}} &92.9 &95.3 &91.4 &61.6 &88.3 \\
TPDS~\citep{tang2023source}      &\xmark &97.6 &91.5 &89.7 &83.4 &97.5 &96.3 &92.2 &82.4 &\textcolor{cmred}{\textbf{96.0}} &94.1 &90.9 &40.4 &87.6 \\
SF(DA)$^2$~\citep{hwang2024sfda}  &\xmark   &96.8 &89.3 &82.9 &81.4 &96.8 &95.7 &90.4 &81.3 &95.5 &93.7 &88.5 &64.7 &88.1\\
SiLAN~\citep{wang2025silan}      &\xmark &97.5 &90.1 &85.8 &80.4 &97.6 &95.5 &92.0 &82.9 &96.5 &95.3 &92.6 &53.6 &88.3 \\
UCon-SFDA~\citep{xu2025revisiting}  &\xmark &98.4 &90.7 &88.6 &80.7 &97.9 &96.9 &93.1 &83.8 &97.6 &95.9 &92.6 &59.1 &89.6\\
\midrule
SHOT-ViL~\citep{liang2020we}     &\cmark &97.6 &89.6 &85.4 &56.5 &94.9 &93.5 &73.5 &80.4 &86.5 &95.7 &93.1 &65.5 &85.2 \\
NRC-ViL~\citep{i0yang2021exploiting}      &\cmark &97.5 &92.0 &80.3 &63.2 &95.8 &95.1 &77.5 &78.1 &96.6 &94.2 &93.4 &60.0 &85.3 \\
AaD-ViL~\citep{yang2022attracting} &\cmark &99.0 &91.4 &\textcolor{cmred}{\textbf{94.0}} &46.9 &95.6 &93.6 &75.7 &66.2 &90.7 &93.8 &93.2 &61.2 &82.6 \\
TPDS-ViL~\citep{tang2023source}  &\cmark &\textcolor{cmred}{\textbf{98.5}} &91.6 &89.2 &85.5 &97.4 &12.3 &88.7 &50.1 &88.2 &92.0 &92.8 &48.9 &78.0 \\
Co-learn-V~\citep{zhang2025source} &\cmark &97.8 &91.0 &86.5 &70.0 &96.0 &94.5 &85.0 &78.3 &93.0 &93.5 &93.8 &70.6 &87.5 \\

{ViLAaD-V}~\citep{tarashima2025vilaad}   &\cmark  &-- &-- &-- &-- &-- &-- &-- &-- &-- &-- &-- &-- &88.8\\

DIFO~\citep{tang2024sourcefree}  &\cmark &97.5 &89.0 &90.8 &\textcolor{cmred}{\textbf{83.5}} &\textcolor{cmred}{\textbf{97.8}} &97.3 &\textcolor{cmred}{\textbf{93.2}} &83.5 &95.2 &\textcolor{cmred}{\textbf{96.8}} &93.7 &65.9 &90.3 \\
\rowcolor{gray!40}
\textbf{\modelshortname}         &\cmark &98.0 &91.3 &89.2 &79.1 &96.4 &\textcolor{cmred}{\textbf{98.6}} &92.6 &88.3 &93.2 &92.9 &\textcolor{cmred}{\textbf{94.2}} &\textcolor{cmred}{\textbf{71.7}} &\textcolor{cmred}{\textbf{90.5}} \\
\bottomrule
\end{tabular}
\end{table*}

\begin{table*}[t]
\renewcommand\tabcolsep{3.5pt}   
\renewcommand\arraystretch{0.95}
\scriptsize
\centering
\caption{Closed-set SFDA result (\%) on the \textbf{DomainNet-126} dataset. {ViL} means whether using ViL model; the best result is highlighted in {\color{cmblu}\textbf{blue}}.}
\label{tab:dn}
\begin{tabular}{l c c c c c c c c c c c c c | c}
\toprule
Method & {ViL}
&C$\to$P &C$\to$R &C$\to$S &P$\to$C &P$\to$R &P$\to$S &R$\to$C &R$\to$P &R$\to$S &S$\to$C &S$\to$P &S$\to$R &Avg.\\
\midrule
Source                          &--   &44.6 &59.8 &47.5 &53.3 &75.3 &46.2 &55.3 &62.7 &46.4 &55.1 &50.7 &59.5 &54.7 \\
\midrule
SHOT~\citep{liang2020we}        &\xmark &63.5 &78.2 &59.5 &67.9 &81.3 &61.7 &67.7 &67.6 &57.8 &70.2 &64.0 &78.0 &68.1 \\
NRC~\citep{i0yang2021exploiting}         &\xmark &62.6 &77.1 &58.3 &62.9 &81.3 &60.7 &64.7 &69.4 &58.7 &69.4 &65.8 &78.7 &67.5 \\
AaD~\citep{yang2022attracting}  &\xmark &60.6 &75.2 &58.0 &58.2 &79.7 &54.6 &63.3 &67.8 &54.6 &70.6 &66.8 &76.4 &65.5 \\
CoWA~\citep{lee2022confidence}  &\xmark &64.6 &80.6 &60.6 &66.2 &79.8 &60.8 &69.0 &67.2 &60.0 &69.0 &65.8 &79.9 &68.6 \\
PLUE~\citep{Litrico_2023_CVPR}  &\xmark &59.8 &74.0 &56.0 &61.6 &78.5 &57.9 &61.6 &65.9 &53.8 &67.5 &64.3 &76.0 &64.7 \\
TPDS~\citep{tang2023source}     &\xmark &62.9 &77.1 &59.8 &65.6 &79.0 &61.5 &66.4 &67.0 &58.2 &68.6 &64.3 &75.3 &67.1 \\
SF(DA)$^2$~\citep{hwang2024sfda} &\xmark  &61.5 &75.3 &59.6 &67.8 &83.5 &61.6 &68.8 &70.5 &60.2 &68.5 &67.7 &75.1 &68.3 \\ 
SiLAN~\citep{wang2025silan}     &\xmark &61.3 &75.7 &58.9 &59.2 &77.3 &57.2 &65.3 &66.1 &56.5 &71.3 &65.6 &74.5 &65.7 \\
\midrule
SHOT-ViL~\citep{liang2020we}    &\cmark &72.8 &84.5 &72.0 &75.1 &85.2 &72.2 &78.2 &75.5 &71.6 &75.2 &73.6 &83.2 &76.6 \\
NRC-ViL~\citep{i0yang2021exploiting}     &\cmark &73.7 &84.4 &73.6 &76.1 &85.4 &73.6 &80.4 &76.9 &73.9 &74.9 &74.8 &82.1 &77.5 \\
AaD-ViL~\citep{yang2022attracting} &\cmark &72.2 &84.2 &72.0 &75.3 &85.3 &72.6 &78.2 &77.4 &73.6 &74.8 &74.7 &82.8 &76.9 \\
TPDS-ViL~\citep{tang2023source} &\cmark &72.5 &82.8 &71.8 &77.7 &84.7 &72.9 &81.9 &76.5 &72.8 &73.5 &74.2 &83.0 &77.0 \\
Co-learn-V~\citep{zhang2025source} &\cmark &\textcolor{cmred}{\textbf{78.2}} &89.2 &76.2 &\textcolor{cmred}{\textbf{82.7}} &89.2 &76.3 &\textcolor{cmred}{\textbf{82.5}} &78.0 &\textcolor{cmred}{\textbf{76.0}} &\textcolor{cmred}{\textbf{82.5}} &78.3 &89.2 &81.5 \\
DIFO~\citep{tang2024sourcefree} &\cmark &76.6 &87.2 &74.9 &80.0 &87.4 &75.6 &80.8 &77.3 &75.5 &80.5 &76.7 &87.3 &80.0 \\
\rowcolor{gray!40}
\textbf{\modelshortname}         &\cmark &78.1 &\textcolor{cmred}{\textbf{89.7}} &\textcolor{cmred}{\textbf{76.5}} &81.7 &\textcolor{cmred}{\textbf{89.6}} &\textcolor{cmred}{\textbf{77.2}} &82.3 &\textcolor{cmred}{\textbf{80.2}} &75.9 &82.3 &\textcolor{cmred}{\textbf{79.4}} &\textcolor{cmred}{\textbf{89.7}} &\textcolor{cmred}{\textbf{81.9}} \\
\bottomrule
\end{tabular}
\end{table*}

\begin{table*}[t]
\renewcommand\tabcolsep{2.0pt}
\renewcommand\arraystretch{1.05}
\scriptsize
\centering
\caption{Results (\%) of Partial-set SFDA (Top) and Open-set SFDA (Bottom) on the \textbf{Office-Home} dataset.}
\label{tab:ob-ps-os}
\begin{tabular}{l c c c c c c c c c c c c | c}
\toprule
\multirow{1}{*}{Partial-set SFDA} 
    & Ar$\to$Cl & Ar$\to$Pr & Ar$\to$Rw 
    & Cl$\to$Ar & Cl$\to$Pr & Cl$\to$Rw 
    & Pr$\to$Ar & Pr$\to$Cl & Pr$\to$Rw 
    & Rw$\to$Ar & Rw$\to$Cl & Rw$\to$Pr & Avg. \\
\cmidrule{1-14}
Source                           &45.2 &70.4 &81.0 &56.2 &60.8 &66.2 &60.9 &40.1 &76.2 &70.8 &48.5 &77.3 &62.8 \\
\midrule
SHOT~\citep{liang2020we}         &64.8 &85.2 &92.7 &76.3 &77.6 &88.8 &79.7 &64.3 &89.5 &80.6 &66.4 &85.8 &79.3 \\
HCL~\citep{huang2021model}       &65.6 &85.2 &92.7 &77.3 &76.2 &87.2 &78.2 &66.0 &89.1 &81.5 &68.4 &87.3 &79.6 \\
CoWA~\citep{lee2022confidence}   &69.6 &93.2 &92.3 &78.9 &81.3 &92.1 &79.8 &71.7 &90.0 &83.8 &72.2 &93.7 &83.2 \\
AaD~\citep{yang2022attracting}   &67.0 &83.5 &93.1 &80.5 &76.0 &87.6 &78.1 &65.6 &90.2 &83.5 &64.3 &87.3 &79.7 \\
CRS~\citep{zhang2023class}       &68.6 &85.1 &90.9 &80.1 &79.4 &86.3 &79.2 &66.1 &90.5 &82.2 &69.5 &89.3 &80.6 \\
DIFO~\citep{tang2024sourcefree}  &70.2 &91.7 &91.5 &87.8 &\textcolor{cmred}{\textbf{92.6}} &\textcolor{cmred}{\textbf{92.9}} &87.3 &70.7 &92.9 &88.5 &69.6 &91.5 &85.6 \\
\rowcolor{gray!40}
\textbf{\modelshortname}          &\textcolor{cmred}{\textbf{75.9}} &90.2 &90.9 &\textcolor{cmred}{\textbf{88.4}} &91.8 &92.5 &\textcolor{cmred}{\textbf{87.5}} &\textcolor{cmred}{\textbf{75.6}} &\textcolor{cmred}{\textbf{93.3}} &\textcolor{cmred}{\textbf{89.5}} &\textcolor{cmred}{\textbf{74.3}} &90.8 &\textcolor{cmred}{\textbf{86.7}} \\
\midrule[\heavyrulewidth]
\multirow{1}{*}{Open-set SFDA}   
    & Ar$\to$Cl & Ar$\to$Pr & Ar$\to$Rw 
    & Cl$\to$Ar & Cl$\to$Pr & Cl$\to$Rw 
    & Pr$\to$Ar & Pr$\to$Cl & Pr$\to$Rw 
    & Rw$\to$Ar & Rw$\to$Cl & Rw$\to$Pr & Avg. \\
\cmidrule{1-14}
Source                           &36.3 &54.8 &69.1 &33.8 &44.4 &49.2 &36.8 &29.2 &56.8 &51.4 &35.1 &62.3 &46.6 \\
\midrule
SHOT~\citep{liang2020we}         &64.5 &80.4 &84.7 &63.1 &75.4 &81.2 &65.3 &59.3 &83.3 &69.6 &64.6 &82.3 &72.8 \\
HCL~\citep{huang2021model}       &64.0 &78.6 &82.4 &64.5 &73.1 &80.1 &64.8 &59.8 &75.3 &78.1 &69.3 &81.5 &72.6 \\
CoWA~\citep{lee2022confidence}   &63.3 &79.2 &85.4 &67.6 &83.6 &82.0 &66.9 &56.9 &81.1 &68.5 &57.9 &85.9 &73.2 \\
AaD~\citep{yang2022attracting}   &63.7 &77.3 &80.4 &66.0 &72.6 &77.6 &69.1 &62.5 &79.8 &71.8 &62.3 &78.6 &71.8 \\
CRS~\citep{zhang2023class}       &65.2 &76.6 &80.2 &66.2 &75.3 &77.8 &70.4 &61.8 &79.3 &71.1 &61.1 &78.3 &73.2 \\
DIFO~\citep{tang2024sourcefree}  &64.5 &86.2 &87.9 &68.2 &79.3 &86.1 &67.2 &62.1 &88.3 &71.9 &65.3 &84.4 &75.9 \\
\rowcolor{gray!40}
\textbf{\modelshortname}          &63.0 &\textcolor{cmred}{\textbf{87.5}} &\textcolor{cmred}{\textbf{87.0}} &\textcolor{cmred}{\textbf{69.6}} &\textcolor{cmred}{\textbf{81.0}} &\textcolor{cmred}{\textbf{87.1}} &\textcolor{cmred}{\textbf{68.0}} &\textcolor{cmred}{\textbf{64.0}} &\textcolor{cmred}{\textbf{89.1}} &\textcolor{cmred}{\textbf{72.9}} &\textcolor{cmred}{\textbf{66.4}} &\textcolor{cmred}{\textbf{88.0}} &\textcolor{cmred}{\textbf{77.0}} \\
\bottomrule
\end{tabular}
\end{table*}

\begin{table*}[t]
    \centering
    \caption{
    \changed{Adaptation results (\%) on \textbf{Office-Home} under the Continual SFDA setting. 
    The first column of each sub-table indicates the adaptation sequence. 
    $\downarrow$ denotes the Relative Average Accuracy Drop ($RD_{Acc}$) of a test domain on the adaptation path compared with the performance when the domain is first seen.
    Formally, consider a sequence of accuracy results $a_i, a_{i+1}, \dots, a_{i+K}$ along an adaptation path. Here, $a_i$ represents the initial accuracy when a domain is first encountered, and $a_{i+k}$ denotes the test accuracy after the model has adapted to $k$ subsequent domains; $RD_{Acc} = 1/K \sum_{k=1}^{K}(a_i - a_{i+k}) / a_i \times 100\%$. 
    In each sub-table, the final value in each row marked with $\downarrow$ represents the average of the three preceding values.
    \textcolor{cmred}{\textbf{Blue}} font indicates the best mean of $RD_{Acc}$, while \textcolor{purple}{purple} font denotes cases where the {\modelshortname} is outperformed by {\modelshortnameold} within the same adaptation flow.}
    }
    \label{tab:cda-contin}
    \scriptsize
    \renewcommand\arraystretch{1.0}
    \setlength{\tabcolsep}{4.5pt} 
    \begin{tabular}{ |c|cccc| c |c|cccc| c |c|cccc| c |c|cccc| }
        \hline
        \multicolumn{23}{|c|}{DIFO} \\
        \hline 
        \multirow{2}{*}{~} & \multicolumn{4}{c|}{Test} & & \multirow{2}{*}{~} & \multicolumn{4}{c|}{Test} & & \multirow{2}{*}{~} & \multicolumn{4}{c|}{Test} & & \multirow{2}{*}{~} & \multicolumn{4}{c|}{Test} \\
        \cline{2-5} \cline{8-11} \cline{14-17} \cline{20-23}
        & Ar & Cl & Pr & Rw & & & Cl & Ar & Pr & Rw & & & Pr & Ar & Cl & Rw & & & Rw & Ar & Cl & Pr \\
        \hline
        Ar & 92.6 & 41.4 & 67.6 & 77.3 & & Cl & 93.8 & 60.1 & 60.6 & 69.3 & & Pr & 94.8 & 58.4 & 40.7 & 78.0 & & Rw & 94.3 & 68.7 & 48.7 & 79.3 \\
        Cl & 77.2 & 64.1 & 63.3 & 72.2 & & Ar & 78.0 & 78.2 & 64.2 & 76.6 & & Ar & 86.7 & 73.7 & 43.5 & 80.0 & & Ar & 89.7 & 79.0 & 45.3 & 75.9 \\
        Pr & 72.4 & 64.7 & 84.9 & 76.8 & & Pr & 74.3 & 68.3 & 85.4 & 77.5 & & Cl & 75.7 & 70.0 & 61.8 & 76.4 & & Cl & 79.4 & 74.0 & 64.3 & 70.9 \\
        Rw & 74.9 & 60.6 & 83.1 & 83.3 & & Rw & 70.1 & 66.7 & 84.5 & 83.7 & & Rw & 83.8 & 68.7 & 55.1 & 83.0 & & Pr & 82.1 & 69.6 & 60.1 & 87.2 \\
        \hline
        $\downarrow$ & 19.1 & 2.2 & 2.1 & 7.8 & & $\downarrow$ & 21.0 & 13.7 & 1.1 & 11.9 & & $\downarrow$ & 13.5 & 5.9 & 10.7 & 10.0 & & $\downarrow$ & 11.2 & 9.1 & 6.5 & 8.9 \\
        \hline
    \end{tabular}
    \begin{tabular}{ |c|cccc| c |c|cccc| c |c|cccc| c |c|cccc| }  
        \hline
        \multicolumn{23}{|c|}{DIFO++} \\
        \hline 
        \multirow{2}{*}{~} & \multicolumn{4}{c|}{Test} & & \multirow{2}{*}{~} & \multicolumn{4}{c|}{Test} & & \multirow{2}{*}{~} & \multicolumn{4}{c|}{Test} & & \multirow{2}{*}{~} & \multicolumn{4}{c|}{Test} \\
        \cline{2-5} \cline{8-11} \cline{14-17} \cline{20-23}
        & Ar & Cl & Pr & Rw & & & Cl & Ar & Pr & Rw & & & Pr & Ar & Cl & Rw & & & Rw & Ar & Cl & Pr \\
        \hline
        Ar & 92.6 & \textcolor{purple}{41.2} & \textcolor{purple}{66.9} & \textcolor{purple}{75.0} & & Cl & 93.8 & 60.5 & 61.3 & 70.4 & & Pr & 94.8 & \textcolor{purple}{57.6} & 41.4 & \textcolor{purple}{77.1} & & Rw & 94.3 & \textcolor{purple}{68.3} & \textcolor{purple}{47.1} & 77.5 \\
        Cl & 79.5 & 66.1 & \textcolor{purple}{62.8} & 72.5 & & Ar & 79.6 & \textcolor{purple}{76.5} & 64.9 & 76.8 & & Ar & 88.5 & 77.0 & 50.5 & 81.2 & & Ar & 89.7 & \textcolor{purple}{77.0} & 45.5 & \textcolor{purple}{76.3} \\
        Pr & 75.0 & 65.1 & 86.9 & 78.9 & & Pr & 77.9 & 68.3 & 87.6 & 78.7 & & Cl & 79.3 & 70.0 & 66.4 & \textcolor{purple}{75.0} & & Cl & 82.3 & \textcolor{purple}{73.4} & 65.7 & 71.4 \\
        Rw & 77.8 & 63.2 & 84.9 & 83.7 & & Rw & 71.8 & 67.3 & 84.7 & 84.4 & & Rw & 85.6 & 73.4 & 59.7 & 86.9 & & Pr & 82.6 & 70.1 & 60.4 & 86.0 \\
        \hline
        $\downarrow$ & 16.3 & 3.0 & 2.3 & \textcolor{cmred}{\textbf{7.2}} & & $\downarrow$ & 18.5 & 11.4 & 3.3 & \textcolor{cmred}{\textbf{11.1}} & & $\downarrow$ & 10.9 & 6.9 & 10.0 & \textcolor{cmred}{\textbf{9.3}} & & $\downarrow$ & 10.0 & 6.8 & 8.1 & \textcolor{cmred}{\textbf{8.3}} \\
        \hline
    \end{tabular}
\end{table*}

\begin{table}[t]
    \centering
    \caption{
    \changed{Accuracies (\%) in SF-MTDA setting.}
    }
    \label{tab:sf_mtda_comparison}
    \resizebox{0.48\textwidth}{!}{
        \setlength{\tabcolsep}{14pt} 
        \begin{tabular}{lccccc}
            \toprule
            \textbf{Method} & \textbf{A}$\rightarrow$ & \textbf{C}$\rightarrow$ & \textbf{P}$\rightarrow$ & \textbf{R}$\rightarrow$ & \textbf{Avg.} \\
            \midrule
            CoNMix & 75.6 & 81.4 & 71.4 & 73.4 & 75.4 \\
            DIFO & 82.1 & 86.7 & 78.5 & 79.1 & 81.6 \\
            \rowcolor{gray!40} \textbf{\modelshortname} & \textcolor{cmred}{\textbf{83.3}} & \textcolor{cmred}{\textbf{87.6}} & \textcolor{cmred}{\textbf{80.9}} & \textcolor{cmred}{\textbf{80.8}} & \textcolor{cmred}{\textbf{83.1}} \\
            \bottomrule
        \end{tabular}
    }
\end{table}

\begin{table*}[t]
	\setlength{\belowcaptionskip}{2pt}
	\caption{Results (\%) of CLIP and Source+CLIP on the four evaluation datasets.}
	\label{tab:zeroshot}
	\renewcommand\tabcolsep{4.2pt}
	\renewcommand\arraystretch{1.1}
	\scriptsize
	\centering
	  \begin{tabular}{l|cccc|ccccc|c|ccccc|c} 
	\toprule
	  \multirow{2}{*}{Method} & \multicolumn{4}{c|}{\textbf{Office-31}} & \multicolumn{5}{c|}{\textbf{Office-Home}} & \textbf{VisDA} & \multicolumn{5}{c|}{\textbf{DomainNet-126}} &\multirow{2}{*}{Avg.}\\
	\cmidrule(lr){2-5} \cmidrule(lr){6-10} \cmidrule(lr){11-11} \cmidrule(lr){12-16}
	& {$\to$A}  & {$\to$D}  & {$\to$W} & {$\to$Avg.} & {$\to$Ar} & {$\to$Cl} & {$\to$Pr} & {$\to$Rw} & {$\to$Avg.} & {Sy$\to$Re} & {$\to$C} & {$\to$P} & {$\to$R} & {$\to$S} & {$\to$Avg.} \\
	   \midrule
	   CLIP  &72.6 &82.7 &80.6 &79.8 &74.6 &59.8 &84.3 &85.5 &76.1 &82.9 &74.7 &73.5 &85.7 &71.2 &76.3 & 78.8\\
          Source+CLIP  &78.5 &93.0 &89.6 &87.0 &78.9 &62.5 &86.1 &87.7 &78.8 &82.0 &76.8 &73.7 &86.0 &70.8 &76.8 & 81.2\\
         \rowcolor{gray!40} \textbf{\modelshortname} & \textcolor{cmred}{\textbf{83.1}} & \textcolor{cmred}{\textbf{98.1}} & \textcolor{cmred}{\textbf{97.1}} & \textcolor{cmred}{\textbf{92.8}} & \textcolor{cmred}{\textbf{82.9}} & \textcolor{cmred}{\textbf{73.8}} & \textcolor{cmred}{\textbf{91.6}} & \textcolor{cmred}{\textbf{89.8}} & \textcolor{cmred}{\textbf{84.5}} & \textcolor{cmred}{\textbf{90.5}} & \textcolor{cmred}{\textbf{82.1}} & \textcolor{cmred}{\textbf{79.2}} & \textcolor{cmred}{\textbf{89.7}} & \textcolor{cmred}{\textbf{76.5}} & \textcolor{cmred}{\textbf{81.9}} & \textcolor{cmred}{\textbf{87.4}} \\
		\bottomrule
	\end{tabular}
\end{table*}

\subsection{Competitors}
This paper validates the efficacy of our {\modelshortname} method through comparisons with \changed{24} alternative methods, categorized into three distinct groups as outlined below.

\begin{itemize}
\item[(1)] The first group contains Source (the source model's results), CLIP~\citep{radford2021learning}, and Source+CLIP, where Source+CLIP directly averages the results of the source model and the CLIP model. 

\item[(2)] The second group includes 13 state-of-the-art {SFDA} models: 
SHOT \citep{liang2020we}, 
NRC \citep{i0yang2021exploiting}, 
HCL \citep{huang2021model}, 
AaD \citep{yang2022attracting}, 
CoWA \citep{lee2022confidence}, 
SCLM \citep{tang2022sclm}, 
ELR \citep{yi2023source}, 
CRS \citep{zhang2023class}, 
PLUE \citep{Litrico_2023_CVPR}, 
TPDS \citep{tang2023source}, 
SF(DA)$^2$~\citep{hwang2024sfda}, 
SiLAN~\citep{wang2025silan}, and
UCon-SFDA~\citep{xu2025revisiting}.

\item[(3)] The third group includes 8 state-of-the-art {SFDA} models integrated the same CLIP model: 
DIFO~\citep{tang2024sourcefree},
{Co-learn-V}~\citep{zhang2025source},
{ViLAaD-V}~\citep{tarashima2025vilaad}, 
SHOT-ViL, 
GKD-ViL, 
NRC-ViL, 
AaD-ViL, 
and TPDS-ViL. 

Among the third group, the comparisons with the suffix of ``-ViL" are ViL versions of existing SFDA methods. 
In those updated versions, the CLIP's visual backbone (ViT B/32) as the feature extractor, and theirs training process follows the original way. 
As such, all the compared methods benefit from the CLIP pre-trained knowledge in a design-generic manner, achieving a fair comparison. 

\end{itemize}

\subsection{Comparison on Closed-set SFDA setting} 
The results are listed in Tab.~\ref{tab:oc}--Tab.~\ref{tab:dn}. 
{\modelshortname} surpasses the previous best method SiLAN (on Office-31 and Office-Home), PLUE (on VisDA), and GKD (on DomainNet-126) by {\bf 2.1}\%, {\bf 10.9}\%, {\bf 2.2}\%, and {\bf 13.2}\% in average accuracy, respectively. 
{\modelshortname} also improves {\modelshortnameold} by {\bf 0.3}\% on Office-31, {\bf 1.4}\% on Office-Home, {\bf 0.2}\% on VisDA, and {\bf 1.9}\% on DomainNet-126.

In particular, we observe over {\modelshortnameold} and {\modelshortname} that the datasets with more categories (Office-Home, DomainNet-126) benefit more than those with fewer categories (Office-31, VisDA). 
This result confirms the motivation of {\modelshortname}:
As the increasing number of categories tends to complicate the task specificity, the uniform attention on all data hinders {\modelshortnameold} from extracting useful information. 
This can be mitigated by the proposed gap region-driven transfer strategy.

In order to isolate the influence of model design, we compare with previous methods integrated with the same ViL (name ending with ``-ViL"). 
{\modelshortname} still presents significant advantage, achieving at least gains of {\bf 2.4}\%, {\bf 4.8}\%, {\bf 2.3}\% and {\bf 4.4}\% on Office-31, Office-Home, VisDA, and DomainNet-126, respectively.
This confirms the superiority of our design.

\subsection{Comparison on Partial-set and Open-set settings}

Based on the results listed in Tab.~\ref{tab:ob-ps-os}, {\modelshortname} demonstrates significant performance improvements across adaptation settings. 
Compared with DIFO, {\modelshortname} outperforms by {\bf 1.1}\% on both extending SFDA settings. 
Additionally, we observe that in the Partial-set SFDA setting (Top), {\modelshortname} achieves the best performance on 7 of 12 tasks, whereas in the more challenging Open-set SFDA setting (Bottom), it performs best on all transfer tasks as well, except for task Ar$\to$Cl. 

\changed{
\subsection{Continual SFDA and Source-free Multi-target DA} \label{sec:sfmtda-csfda}
In this part, we evaluate {\modelshortname} in 
the continual SFDA and source-free multi-target DA (SF-MTDA) setting, with experiment configurations in \citep{continualwang2022continual} and \citep{mtdaKumar2023WACV}, respectively. 
To evaluate the effect of {\modelshortname}, {\modelshortnameold} is selected as a comparison, while the comparison method CoNMix \citep{mtdaKumar2023WACV} serves for the SF-MTDA setting. 
}

\changed{
Based on the continual SFDA results in Table~\ref{tab:cda-contin}, we present two key observations.
First, across all adaptation trajectories, {\modelshortname} consistently outperforms {\modelshortnameold} in 51 out of 64 cases (exceptions are marked in purple). This demonstrates that the relative advantage of {\modelshortname} remains robust throughout the continuous transfer process.
Second, {\modelshortname} exhibits superior resistance to catastrophic forgetting compared to {\modelshortnameold}.  This is evidenced by the performance in the most challenging scenarios (the second column of each sub-table) and its lower mean Relative Accuracy Drop ($RD_{Acc}$) for {\modelshortname}, as shown in the final average of each $\downarrow$ row.
Additionally, Table~\ref{tab:sf_mtda_comparison} shows that {\modelshortname} surpasses both {\modelshortnameold} and the conventional CoNMix by at least {\bf 2.5}\%, further confirming its effectiveness in the SF-MTDA setting.
}

\begin{figure}[t]
    \centering
    \includegraphics[width=0.75\linewidth]{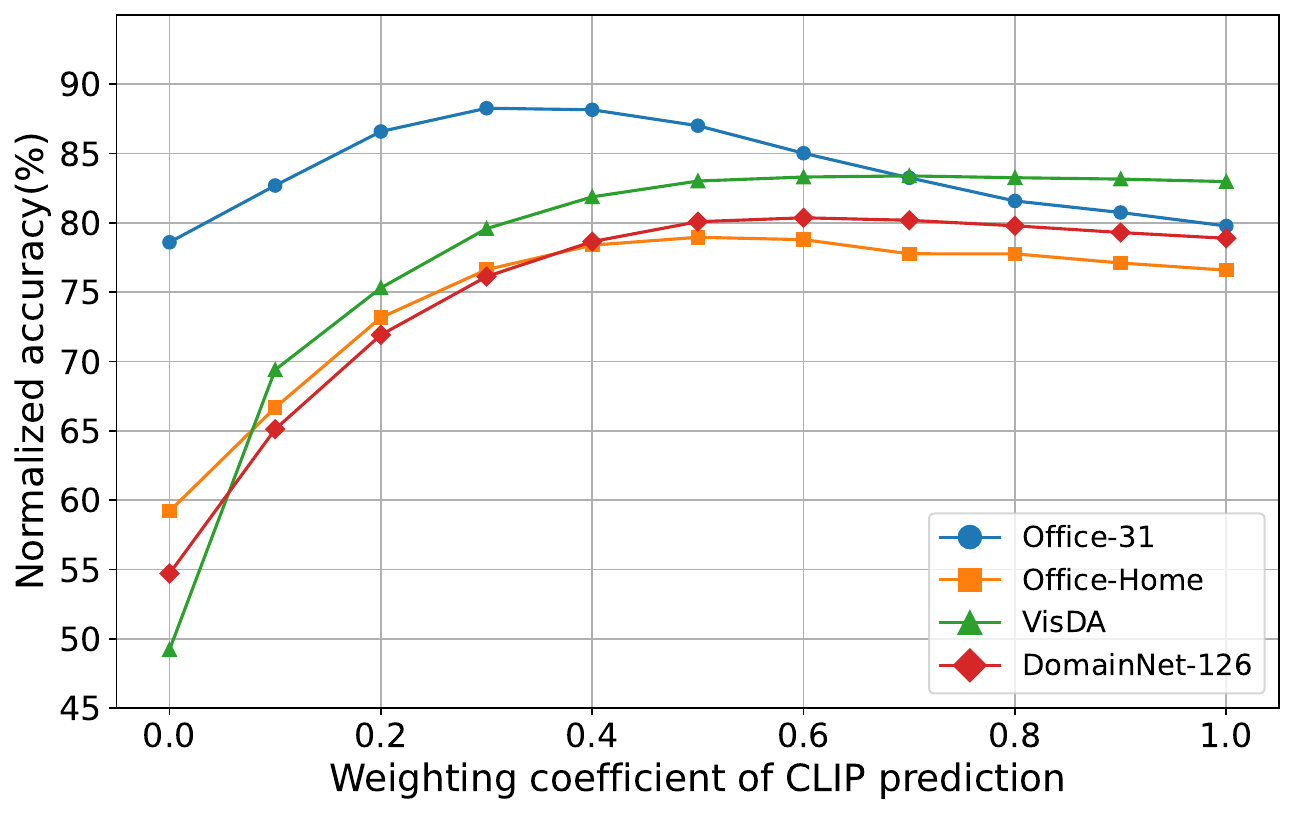}
    \caption{
     The performance of the scheme directly weighting the source model and CLIP. 
     All results are normalized by corresponding {\modelshortname} results for a clear view. 
    } 
    \label{fig:sc-weight}
\end{figure}

\subsection{Comparison to the original CLIP model}
The original CLIP model can conduct general image classification.  
We carry out a quantitative comparison between {\modelshortname}
and CLIP,
averaging the adaptation results of {\modelshortname} grouped by the target domain name.

As presented in the bottom of Tab.~\ref{tab:zeroshot}, {\modelshortname} outperforms CLIP on all tasks. 
On average accuracy, {\modelshortname} increases the performance by {\bf 13.0}\%, {\bf 8.4}\%, {\bf 7.6}\% and {\bf 5.6}\% in Office-31, Office-Home, VisDA and DomainNet-126, respectively. 
The result shows that 
the original CLIP model cannot fully excel to the target domain, and task-specific customization is needed.

Interestingly, compared with CLIP, except for VisDA with a tiny gap of {\bf 0.9}\%, Source+CLIP averagely improve by {\bf 7.2}\% at most on the other datasets. 
Meanwhile, Source+CLIP is beaten by {\modelshortname} with an increase of {\bf 6.2}\%.    
These results imply that directly weighting the source model and CLIP is an intuitive knowledge adaptation scheme, but it is hard to perform adaptation deeply.  
Considering Source+CLIP is an average version, we conduct a comprehensive comparison with the weighting strategy where the weighting coefficient of CLIP prediction varies from 0.0 to 1.0. 
Here, we conduct this experiment based on more challenging CLIP due to its large performance gap with Source (see the first row in \texttt{Tab.\ref{tab:oc}$\sim$\ref{tab:dn}}).  
For a clear view, all weighted accuracies are normalized by the corresponding {\modelshortname} accuracies, respectively.
As shown in Fig.~\ref{fig:sc-weight}, no result can exceed the value of 1.0.
This indicates that \textit{weighting the source model and CLIP in a zero-shot manner cannot obtain desirable task-specific fusion, and a carefully designed distilling is necessary.}

\begin{table}[t]
    \centering
    \caption{
    \changed{Empirical evidence on the error-proneness of pseudo-label based supervision.}
    }
    \label{tab:pseudo_quality}
    \renewcommand\arraystretch{1.1}
    \resizebox{0.48\textwidth}{!}{
        \setlength{\tabcolsep}{2pt} 
        \begin{tabular}{lcccccc}
            \toprule
            \multirow{2}{*}{\textbf{Method}} 
            & \multicolumn{2}{c}{\textbf{Ar$\rightarrow$Cl}} 
            & \multicolumn{2}{c}{\textbf{Cl$\rightarrow$Ar}} 
            & \multicolumn{2}{c}{\textbf{Avg.}} \\
            \cmidrule(lr){2-3} \cmidrule(lr){4-5} \cmidrule(lr){6-7}
            & \textbf{PL-Acc} & \textbf{HC-Acc}
            & \textbf{PL-Acc} & \textbf{HC-Acc}
            & \textbf{PL-Acc} & \textbf{HC-Acc} \\
            \midrule
            Source-only & 42.8 & 44.5 & 49.6 & 47.1 & 46.2 & 45.8 \\
            SHOT        & 55.2 & 57.3 & 68.9 & 72.2 & 62.1 & 64.7 \\
            TPDS        & 57.9 & 63.3 & 70.3 & 78.7 & 64.1 & 71.0 \\
            DIFO        & 64.6 & 80.6 & 77.7 & 81.3 & 71.2 & 80.9 \\
            \rowcolor{gray!40} \textbf{\modelshortname}      & \textcolor{cmred}{\textbf{66.0}} & \textcolor{cmred}{\textbf{84.7}} & \textcolor{cmred}{\textbf{80.1}} & \textcolor{cmred}{\textbf{89.7}} & \textcolor{cmred}{\textbf{73.1}} & \textcolor{cmred}{\textbf{87.2}} \\
            \bottomrule
        \end{tabular}
    }
\end{table}


\changed{
\subsection{Empirical evidence on the error-proneness of conventional methods} \label{sec:err-prone}
Due to the limited knowledge inherent in task-specific source models and unlabeled target data, the resulting pseudo-labels and auxiliary supervision are often contaminated with significant noise. Consequently, adaptation frameworks relying solely on these cues are inevitably error-prone. This challenge motivates us to incorporate a ViL model to bolster the adaptation process with more reliable, cross-modal knowledge.
}

\changed{
To validate our core motivation: 'conventional methods are error-prone', we conducted a comparative study on the symmetric transfer tasks Cl$\to$Ar and Ar$\to$Cl in Office-Home. The evaluation is based on two primary metrics: (1) pseudo-label accuracy (PL-Acc) and (2) the accuracy within the high-confidence subset (HC-Acc).
}

\changed{
The experimental results are summarized in Tab.~\ref{tab:pseudo_quality}. Compared to {\modelshortnameold} and {\modelshortname}, baseline methods such as SHOT and TPDS, which do not leverage ViL knowledge, exhibit substantially higher levels of predictive noise across the target domain, as evidenced by their lower PL-Acc. More compellingly, even within their high-confidence subsets, these traditional methods yield a notably lower HC-Acc than our proposed frameworks. These findings suggest that conventional approaches are inherently more prone to miscalibration and error. Furthermore, the performance gap indicates that integrating an external ViL model can effectively transcend the epistemic boundaries of task-specific source models and unlabeled target data, providing more robust guidance for adaptation.
}


\subsection{Task-Specific Knowledge Adaptation Analysis} \label{sec:tskaa}
In this part, we give a feature space shift analysis using the measure of MMD (Maximum Mean
Discrepancy) distance~\citep{MMDLoss2019} to verify whether the proposed method ensures a task-specific knowledge adaptation. 

\begin{figure}[t]
    \centering
    {\includegraphics[width=0.48\linewidth,height=0.46\linewidth]{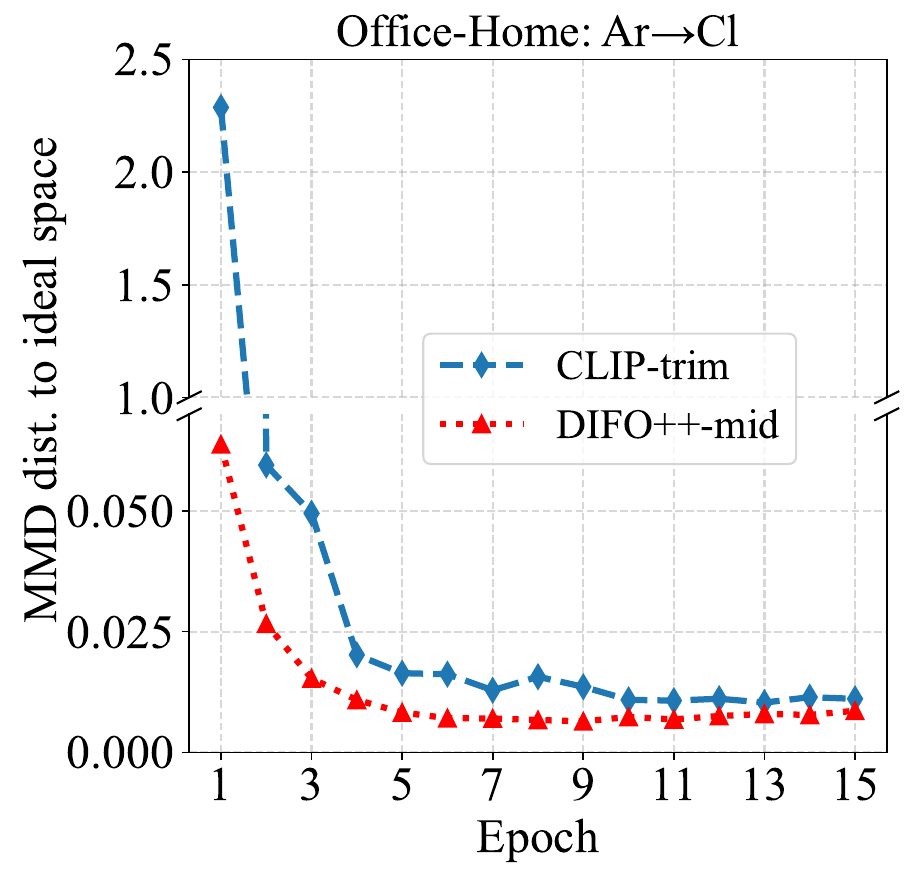}}
    {\includegraphics[width=0.48\linewidth,height=0.46\linewidth]{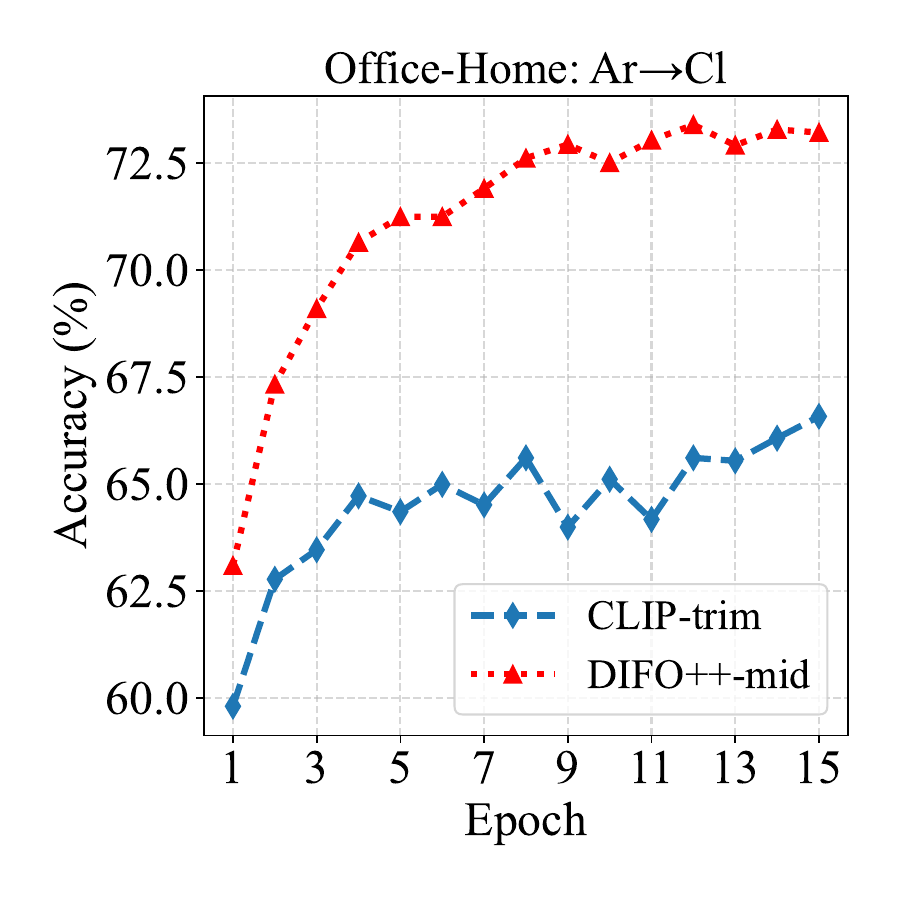}}
    \caption{
     The evolving dynamics of MMD distance during adaptation of Ar$\to$Cl in Office-Home. 
     {\bf Left} and {\bf Right} present the varying curves of MMD distance and accuracy, respectively. 
    } 
    \label{fig:dis-analy}
\end{figure}

In this experiment, we first train a domain-invariant Oracle model over all Office-Home data with real labels, and use the logits to express the ideal task-specific space $\mathcal{O}$. 
After that, an analysis is conducted on the transfer task Ar$\to$Cl. 
During this adaptation, there are $T$ (epoch number) intermediate target models and customized CLIP models. 
We feedforward the target data through each intermediate model and take the logits as a space. 
Thus, we obtain $T$ intermediate target feature spaces $\{\mathcal{U}_k\}_{k=1}^{T}$ and $T$ intermediate customized CLIP feature spaces $\{\mathcal{V}\}_{k=1}^{T}$. 
Within this context, these intermediate spaces can depict the task-specific distillation to $\mathcal{O}$. 

In the left of Fig.~\ref{fig:dis-analy}, we give the MMD distance change curve of $\{\mathcal{U}_k\}_{k=1}^{T}$ (in red, termed {\modelshortname}-mid) and $\{\mathcal{V}\}_{k=1}^{T}$ (in blue, termed CLIP-trim), taking $\mathcal{O}$ as the original space. 
It is seen that at early epochs (1$\sim$4), DIFO++-mid and CLIP-trim sharply decrease and then maintain a gradual decrease in the following epochs. 
Meanwhile, this change is consistent with the accuracy varying shown in the right of Fig.~\ref{fig:dis-analy}.

These results indicate that our {\modelshortname} encourages task-specific knowledge adaptation due to the convergence on the ideal task-specific space. 
Besides, we observe two details. 
First, after epoch 1, CLIP-trim's distance reduces by {\bf 2.2}, which is {\bf 59.5} times of DIFO++-mid's decrease of {\bf 0.037}. 
This is because CLIP represents a heterogeneous space of vision-language, much different from the vision space $\mathcal{O}$. 
Furthermore, the large decrease in distance confirms the effect of customization. 
Second, the synchronized distance reductions of CLIP-trim and DIFO++-mid indicate that the interaction between the target model and CLIP is a crucial design for task-specific distillation.

\subsection{Ablation Study}
\changed{{\bf Effect of designs in {\modelshortname}.}
In this part, we evaluate in \changed{five} aspects.}  
(1) Effect of the proposed objectives: $\mathcal{L}_{\rm{S-I}}$, ones in $\mathcal{L}_{\rm{S-II}}$ including  $\mathcal{L}_{\rm{cac}}$, $\mathcal{L}_{\rm{pc}}$, and $\mathcal{L}_{\rm{rc}}$, 
(2) effect of optimization of mutual information, 
\changed{(3) effect of uncertainty-aware sampling,}
(4) effect of task-specific customization, and
\changed{(5) effect of weighting-based gap-region labeling}.

\begin{table}[t]
    \centering
    \renewcommand\tabcolsep{1.5pt}
    \renewcommand\arraystretch{0.9} 
    \scriptsize
    \caption{
    \changed{Ablation study in Closed-set setting (\%).}
    }
    \label{tab:ab_loss}
    \begin{tabular}{l|cccc|ccc|c}
        \toprule
        \multirow{2}{*}{\#} & \multirow{2}{*}{$\mathcal{L}_{\text{S-I}}$} & \multicolumn{3}{c|}{$\mathcal{L}_{\text{S-II}}$} & \multirow{2}{*}{\bf Office-31} & \multirow{2}{*}{\bf Office-Home} & \multirow{2}{*}{\bf VisDA} & \multirow{2}{*}{\bf Avg.} \\
        \cmidrule(lr){3-5}
        & & {$\mathcal{L}_{\rm{cac}}$} & {$\mathcal{L}_{\rm{pc}}$} &{$\mathcal{L}_{\rm{rc}}$} & & & & \\
        \midrule
        1  & \xmark & \xmark & \xmark & \xmark & 78.6 & 59.2 & 49.2 & 62.3 \\
        2  & \cmark & \xmark & \xmark & \xmark & 82.4 & 77.4 & 84.4 & 81.4 \\
        3  & \xmark & \cmark & \xmark & \xmark & 82.1 & 76.5 & 88.6 & 82.4 \\
        4  & \xmark & \xmark & \cmark & \xmark & 90.7 & 77.6 & 88.6 & 85.6 \\
        5  & \xmark & \xmark & \xmark & \cmark & 76.7 & 60.1 & 54.1 & 63.6 \\
        \midrule
        6  & \cmark & \cmark & \cmark & \xmark & 92.5 & 83.1 & 90.3 & 88.6 \\
        7  & \cmark & \cmark & \xmark & \cmark & 86.5 & 68.7 & 90.0 & 81.7 \\
        8  & \cmark & \xmark & \cmark & \cmark & 83.4 & 70.1 & 88.6 & 80.7 \\
        9  & \xmark & \cmark & \cmark & \cmark & 91.3 & 82.1 & 89.8 & 87.7 \\
        10 & \cmark & \cmark & \cmark & \cmark &\textcolor{cmred}{\textbf{92.8}} & \textcolor{cmred}{\textbf{84.5}} & \textcolor{cmred}{\textbf{90.5}} & \textcolor{cmred}{\textbf{89.3}} \\
        \midrule
        11 & \multicolumn{4}{l}{\textbf{\modelshortname} w/ KL}   \vline & 87.5 & 75.4 & 85.4 & 82.8 \\
        12 & \multicolumn{4}{l|}{\textbf{\modelshortname} w/ RS}   \vline & 90.9 & 83.3 & 89.4 & 87.8 \\
        13 & \multicolumn{4}{l|}{\textbf{\modelshortname} w/ ${p}'_{i}$}  \vline & 89.0 & 73.7 & 87.1 & 83.3 \\
        14 & \multicolumn{4}{l|}{\textbf{\modelshortname} w/ ${p}''_{i}$} \vline & 90.6 & 83.5 & 88.7 & 87.6 \\
        \midrule
        15 & \multicolumn{4}{l}{\textbf{\modelshortname} w/ ENT}  \vline & 91.9 & 83.0 & 88.9 & 87.9 \\
         \bottomrule
    \end{tabular} 
\end{table}

\begin{figure}[t]
    \centering
    \includegraphics[width=0.7\linewidth]{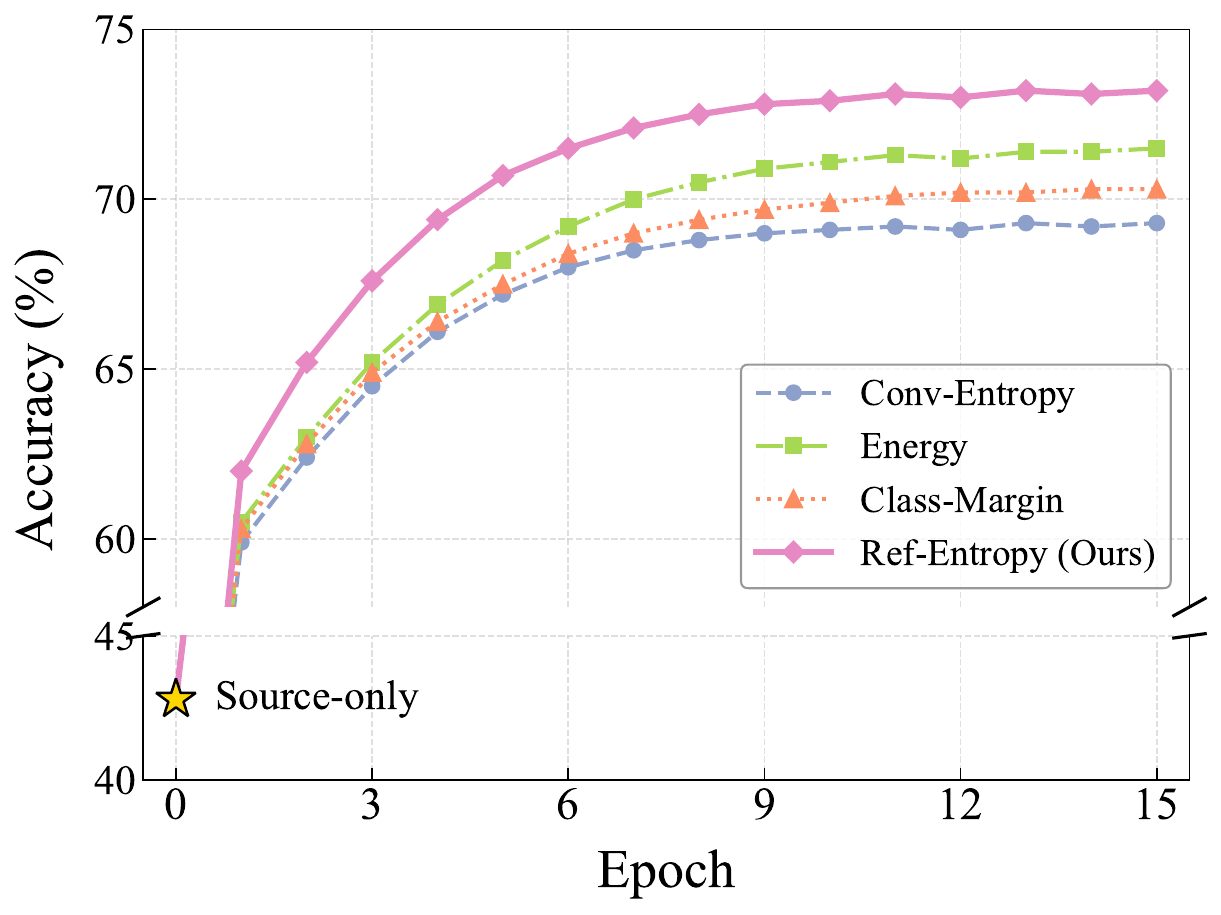}
    \caption{
    \changed{Convergence curves of different uncertainty metrics on the Ar$\rightarrow$Cl task in Office-Home.}
    }
    \label{fig:metric}  
\end{figure}

In the first issue, we isolate the effect of each component by introducing or removing it singly.
As shown in the first 5 rows of Tab.~\ref{tab:ab_loss}, there are performance improvements to some extent compared with the baseline of the source model (row 1) when the four losses work alone. 
In particular, the {\bf 1.3}\% gain of $\mathcal{L}_{\rm{rc}}$ (row 5) confirms the rationale of identifying confident data 
Additionally, removing any loss component leads to performance deterioration (rows 6--11).
When $\mathcal{L}_{\rm{cac}}$ or $\mathcal{L}_{\rm{pc}}$ are unavailable, the average accuracy drops the most (by about {\bf 9}\%).
The phenomenon is consistent with the cases in rows 3-4: as $\mathcal{L}_{\rm{cac}}$ or $\mathcal{L}_{\rm{pc}}$ work alone, the accuracy significantly improves. 
This observation confirms the effect of our designs: encouraging category attention calibration, paired with predictive consistency.

For the other \changed{four} issues, we propose three variations of {\modelshortname} to accomplish effect evaluation. 
The first one is {\modelshortname} w/ KL, where the mutual information maximization loss in $\mathcal{L}_{\rm{S-I}}$ and $\mathcal{L}_{\rm{pc}}$ are replaced by KL divergence loss. 
\changed{The second one is {\modelshortname} w/ RS, where referenced entropy-based hard data sampling is changed to a random selection.}
\changed{The third one} is the same as the loss ablation case in row 9, where the prompt learning-based customization for CLIP is canceled, and the input prompt is set to a fixed template ("a photo of a [CLS]"). 
\changed{
The last one includes {\modelshortname} w/ ${p}'_{i}$ and {\modelshortname} w/ ${p}''_{i}$, which substitute the weighted combination (Eq.~\eqref{eqn:onehot-fusion}) with either the individual target prediction $p'_i$ or the ViL prediction $p''_i$.} 
As shown in \changed{rows 11$\sim$14} of Tab.~\ref{tab:ab_loss}, {\modelshortname} outperforms all three comparisons, with an average improvement of at least \changed{\textbf{1.5\%}}, confirming the effect of adopting mutual information optimization, task-specific customization, and referenced uncertainty, respectively.

\begin{figure}[t]
    \centering
    \includegraphics[width=0.9\linewidth]{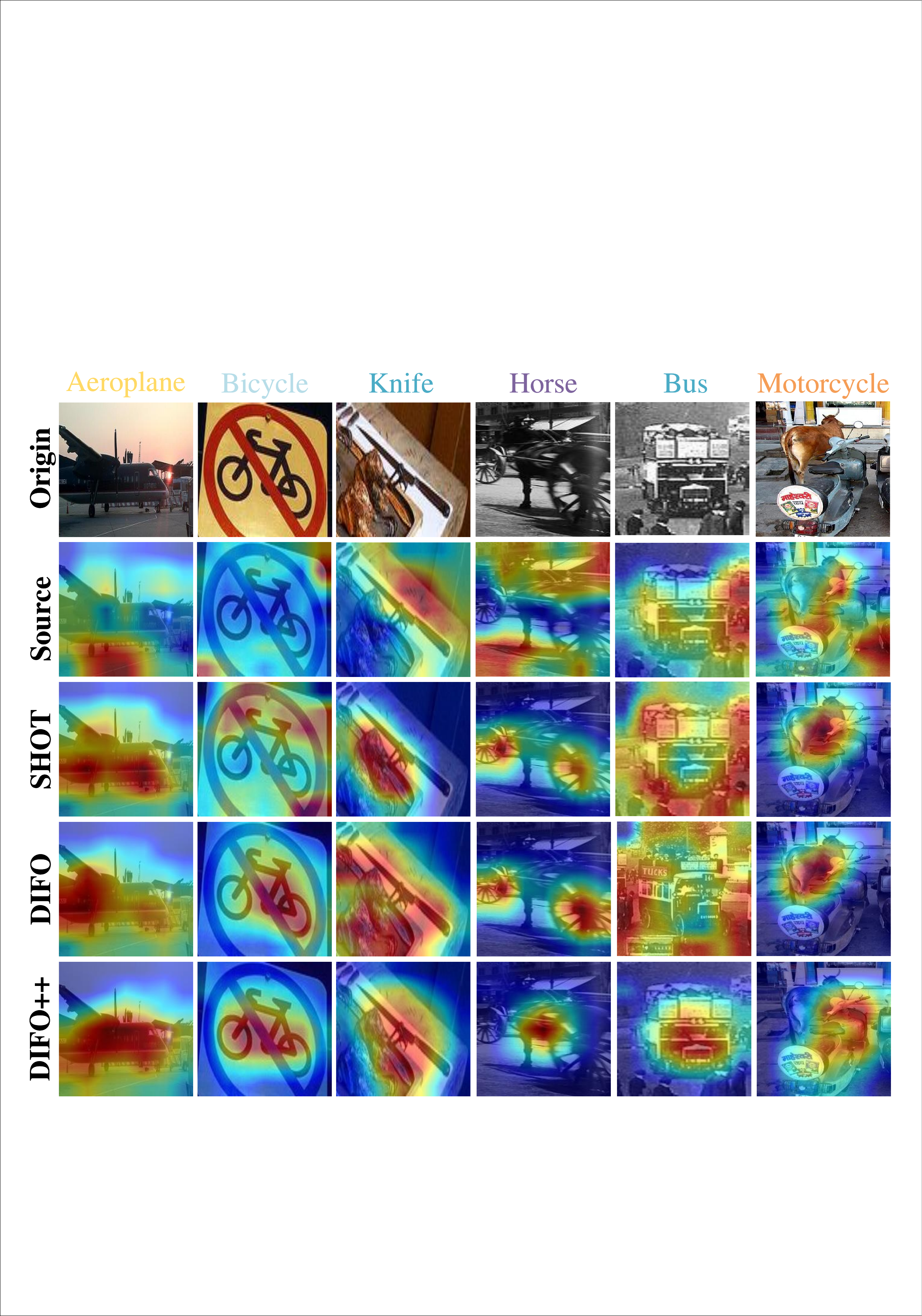}
    \caption{Visualization of six classes from VisDA.}
    \label{fig:cam-dys}
\end{figure}

\begin{figure*}[t]
    \centering
    {\includegraphics[width=0.13\linewidth,height=0.135\linewidth]{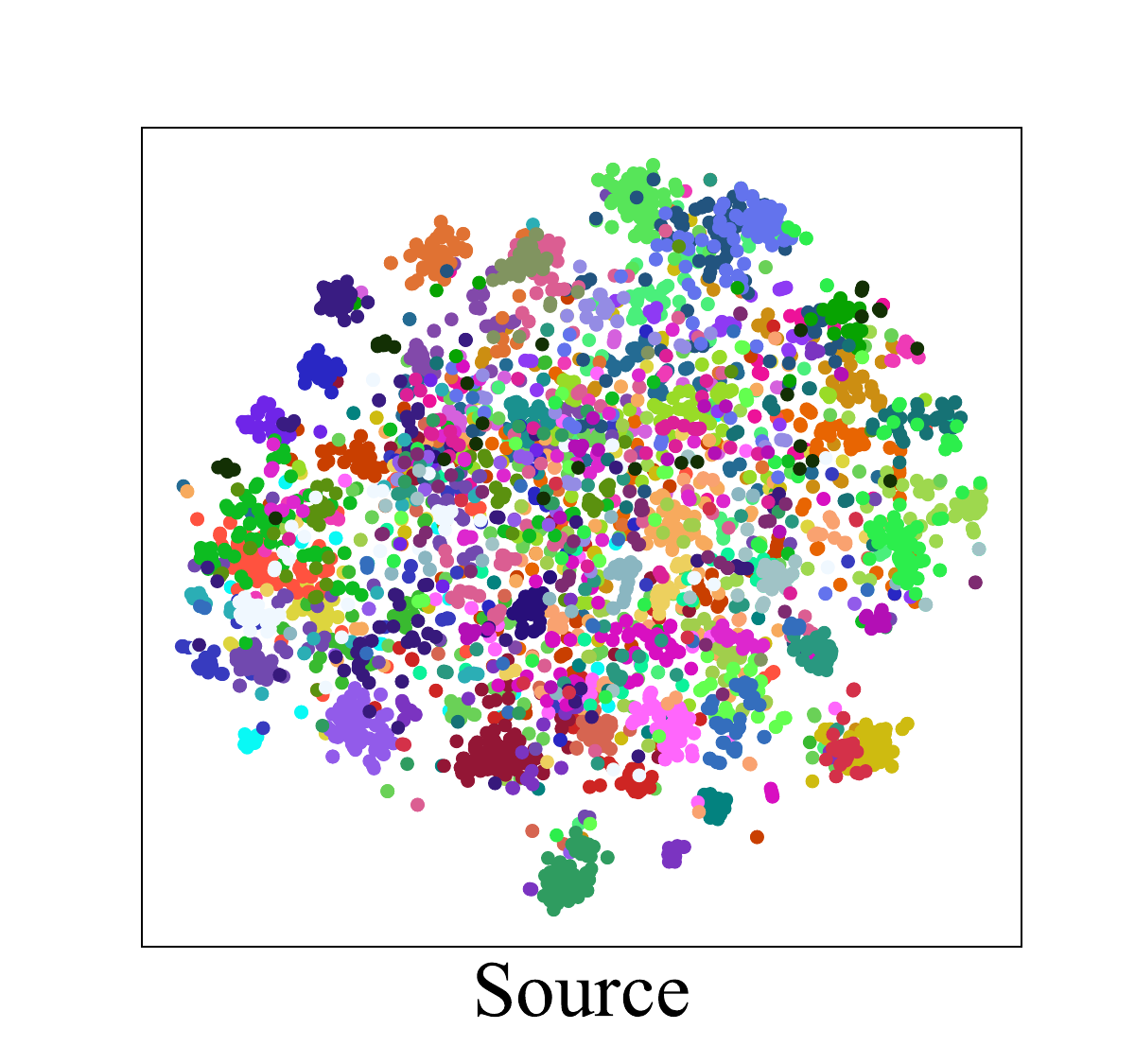}}
    {\includegraphics[width=0.13\linewidth,height=0.135\linewidth]{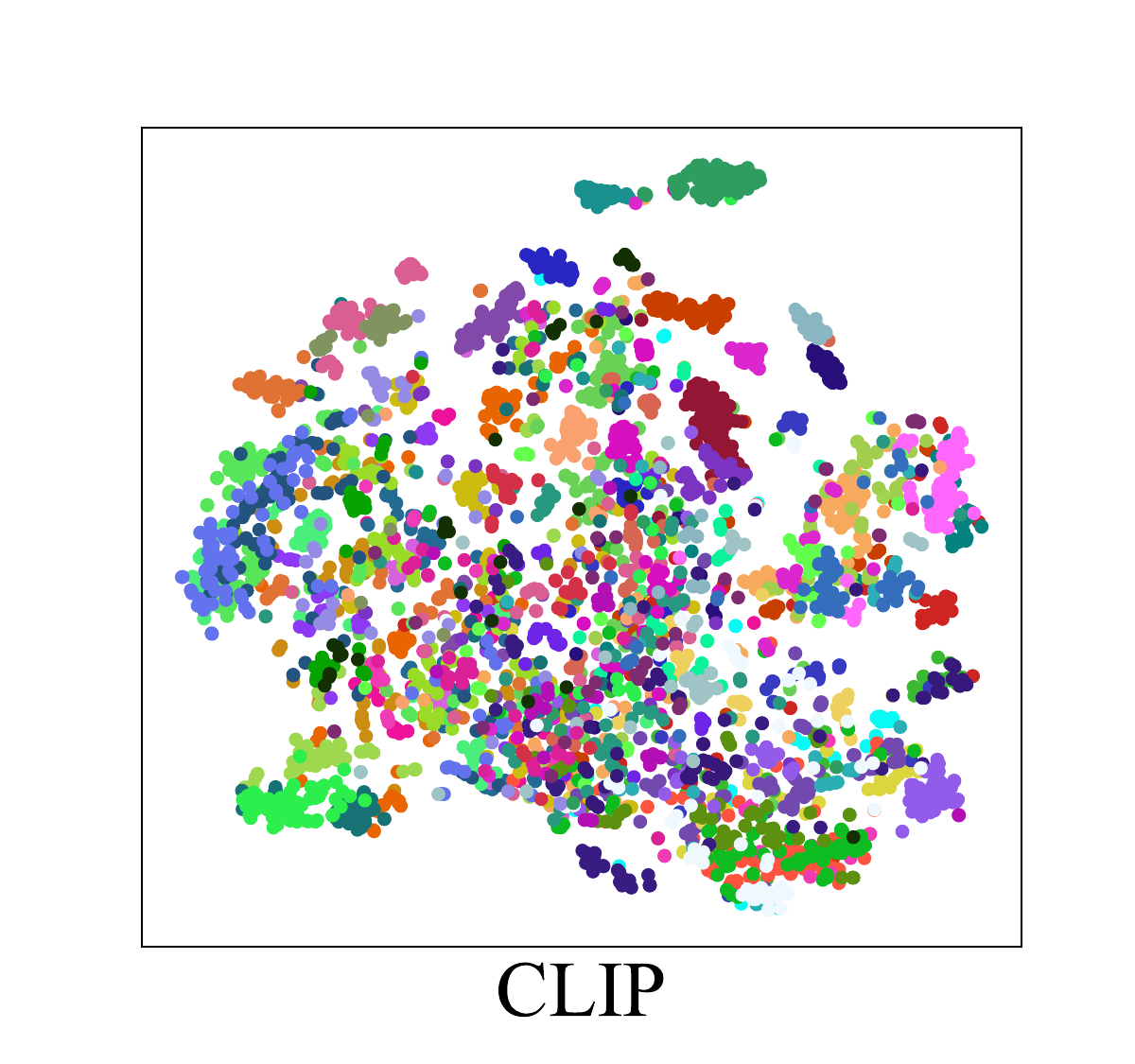}}
    {\includegraphics[width=0.13\linewidth,height=0.135\linewidth]{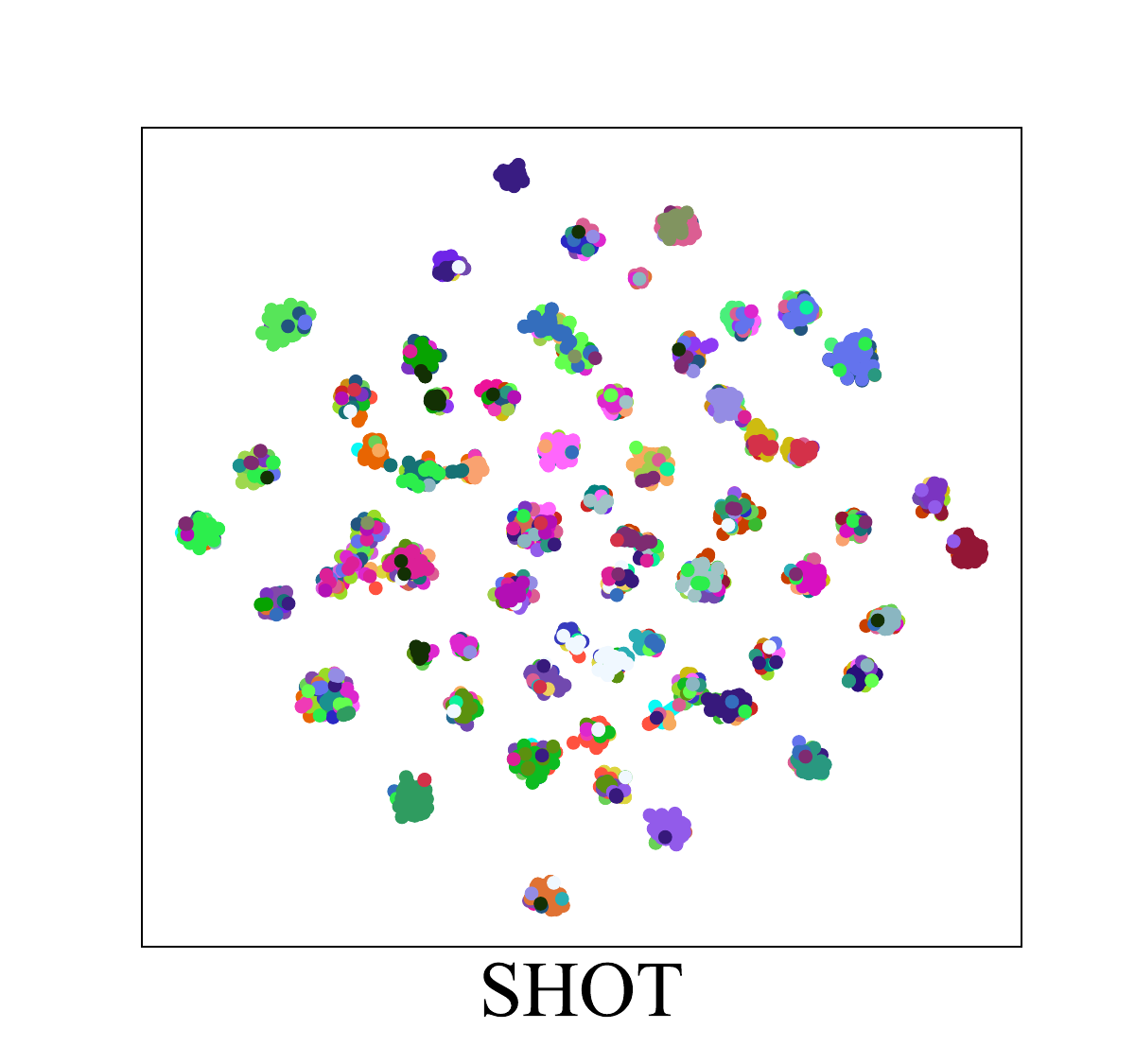}}
    {\includegraphics[width=0.13\linewidth,height=0.135\linewidth]{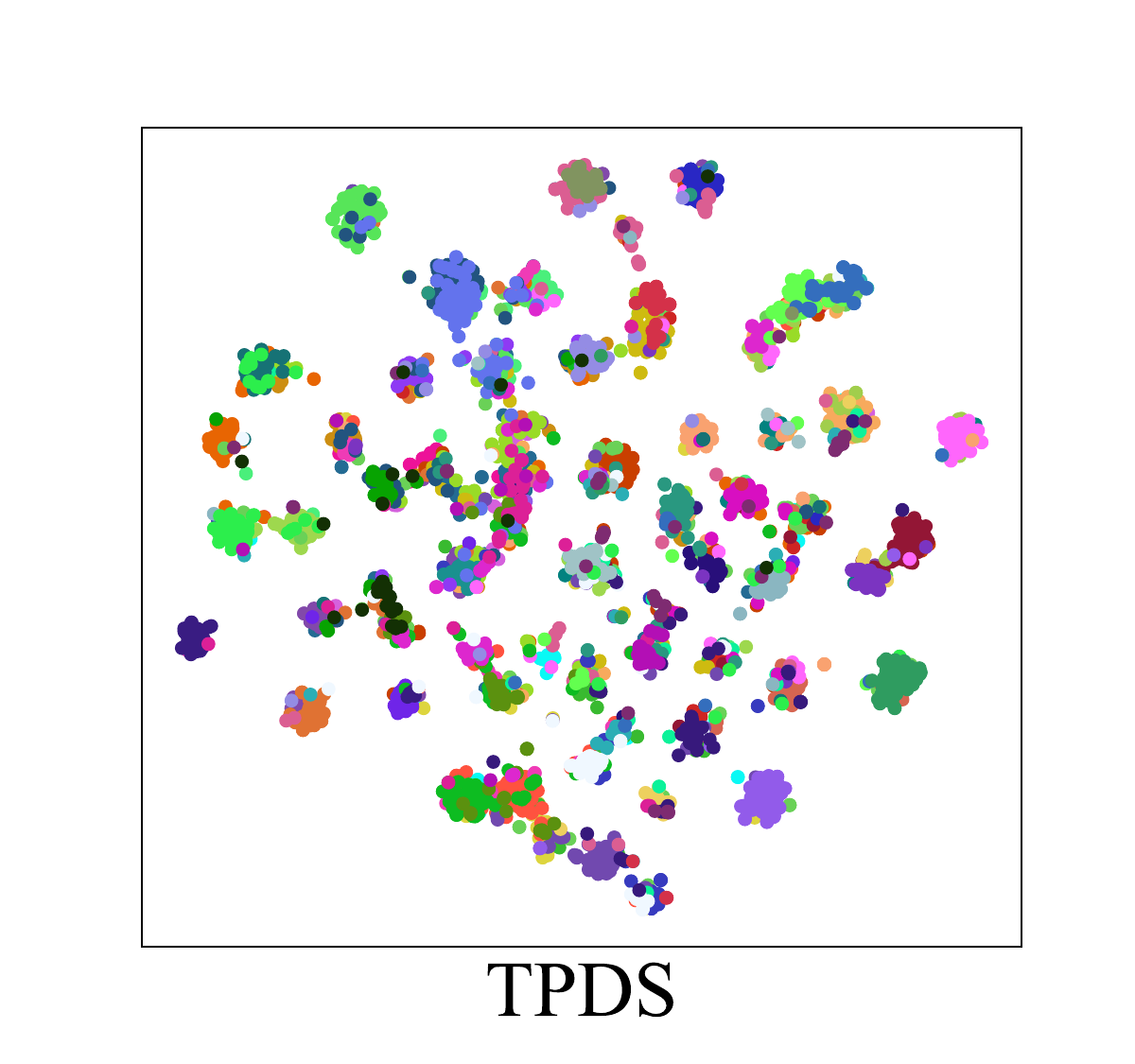}}
    {\includegraphics[width=0.13\linewidth,height=0.135\linewidth]{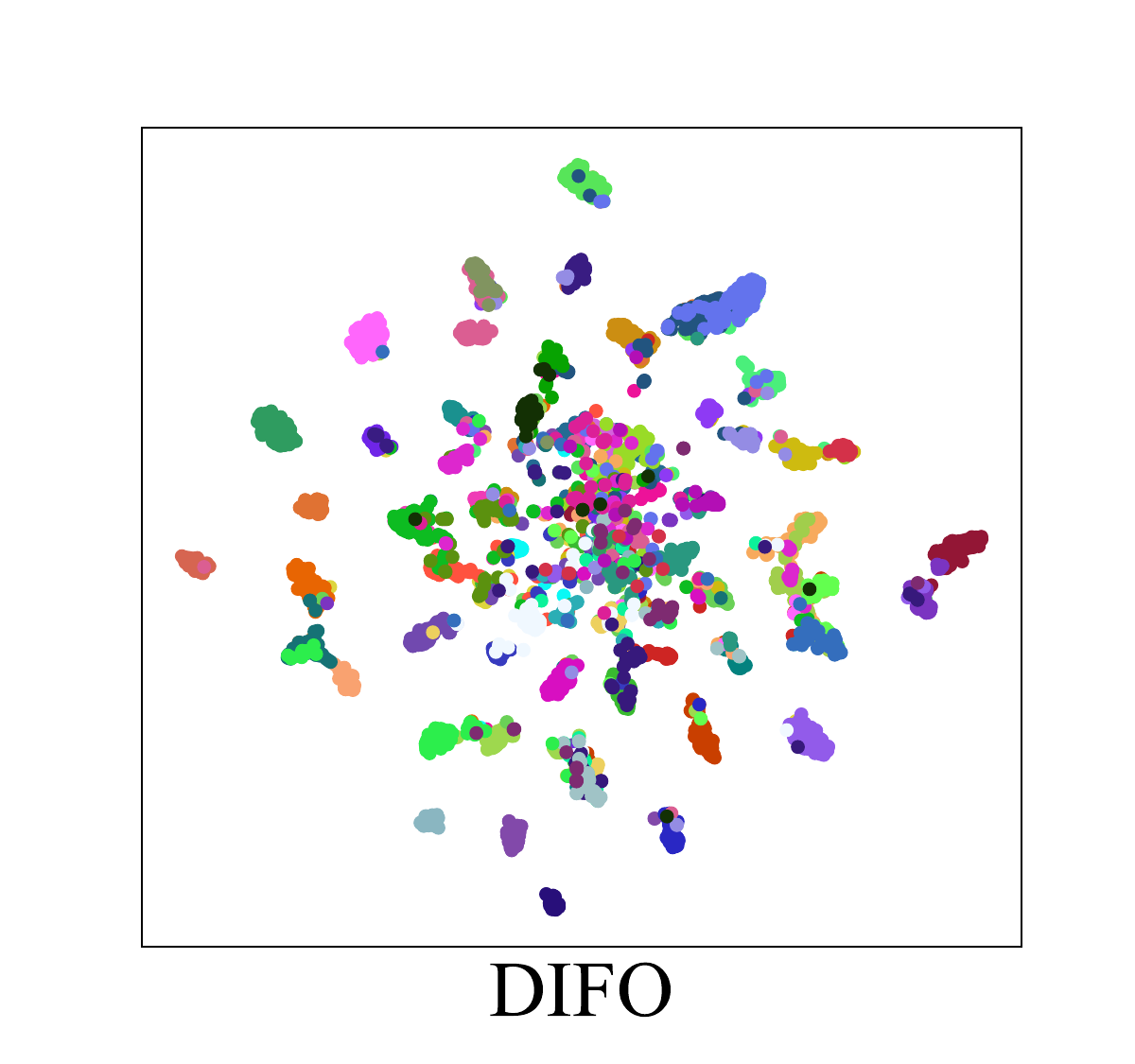}}
    {\includegraphics[width=0.13\linewidth,height=0.135\linewidth]{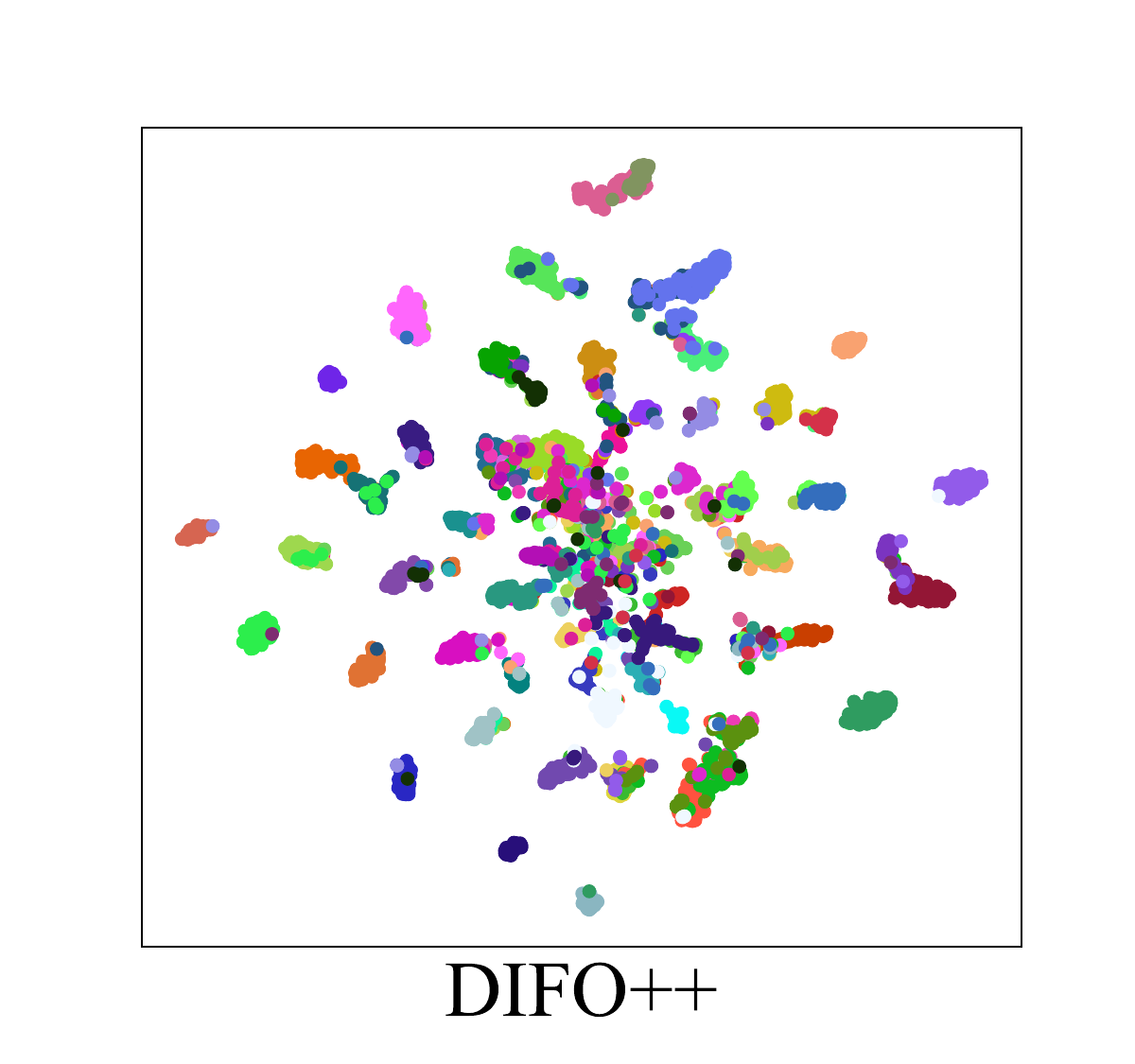}}
    {\includegraphics[width=0.13\linewidth,height=0.135\linewidth]{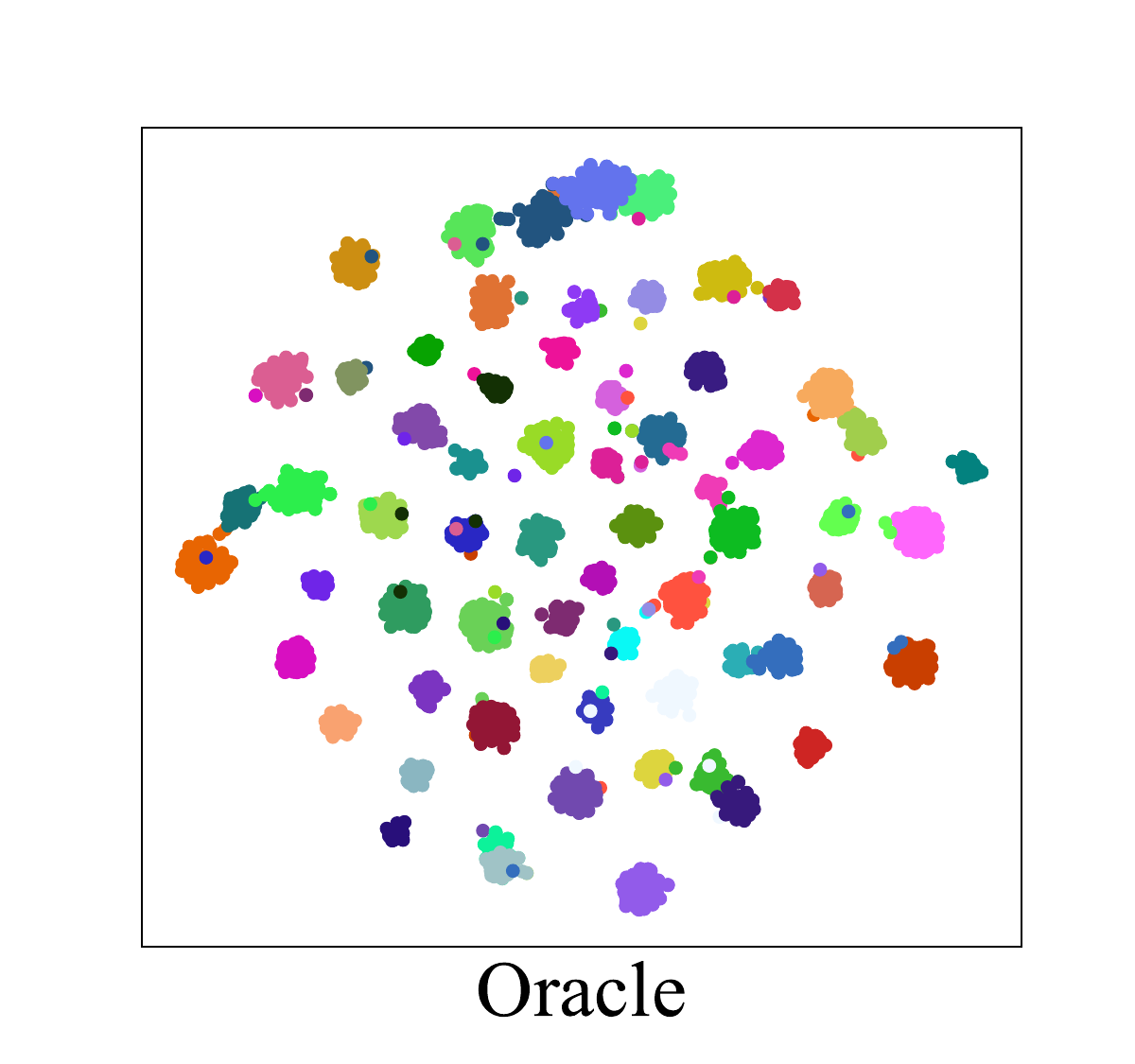}}\\
    \vspace{5pt}
    {\includegraphics[width=0.13\linewidth,height=0.135\linewidth]{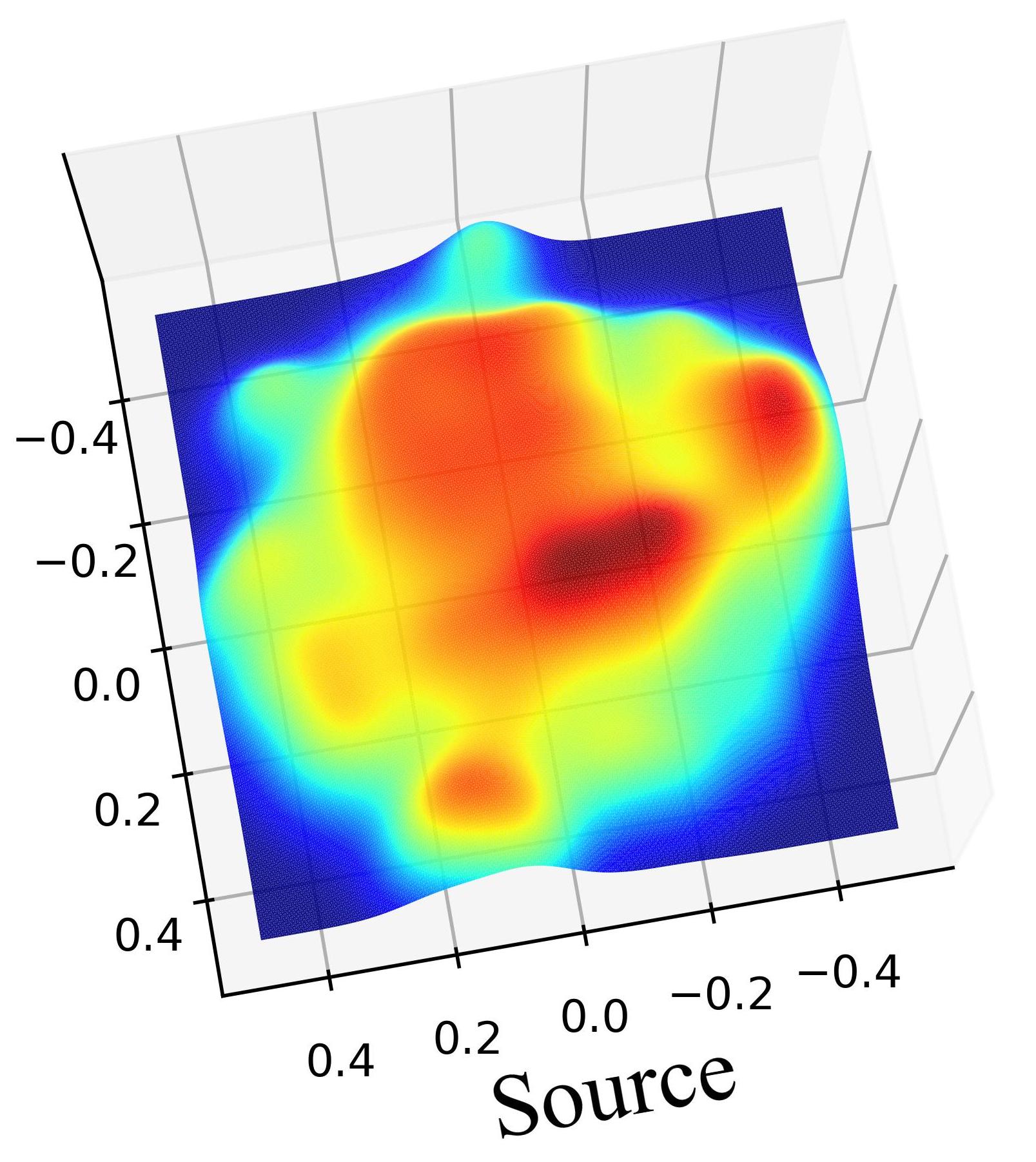}}
    {\includegraphics[width=0.13\linewidth,height=0.135\linewidth]{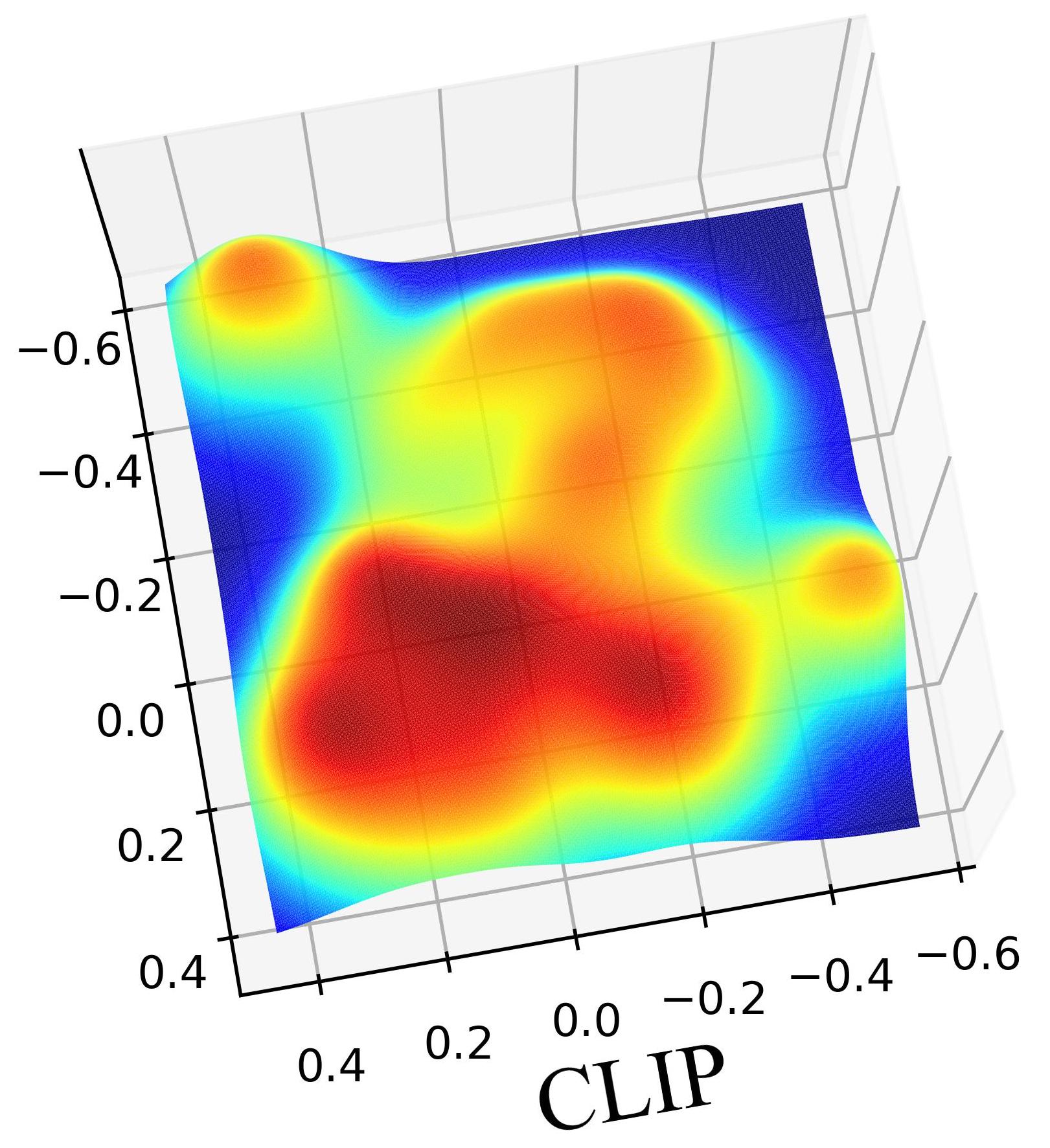}}
    {\includegraphics[width=0.13\linewidth,height=0.135\linewidth]{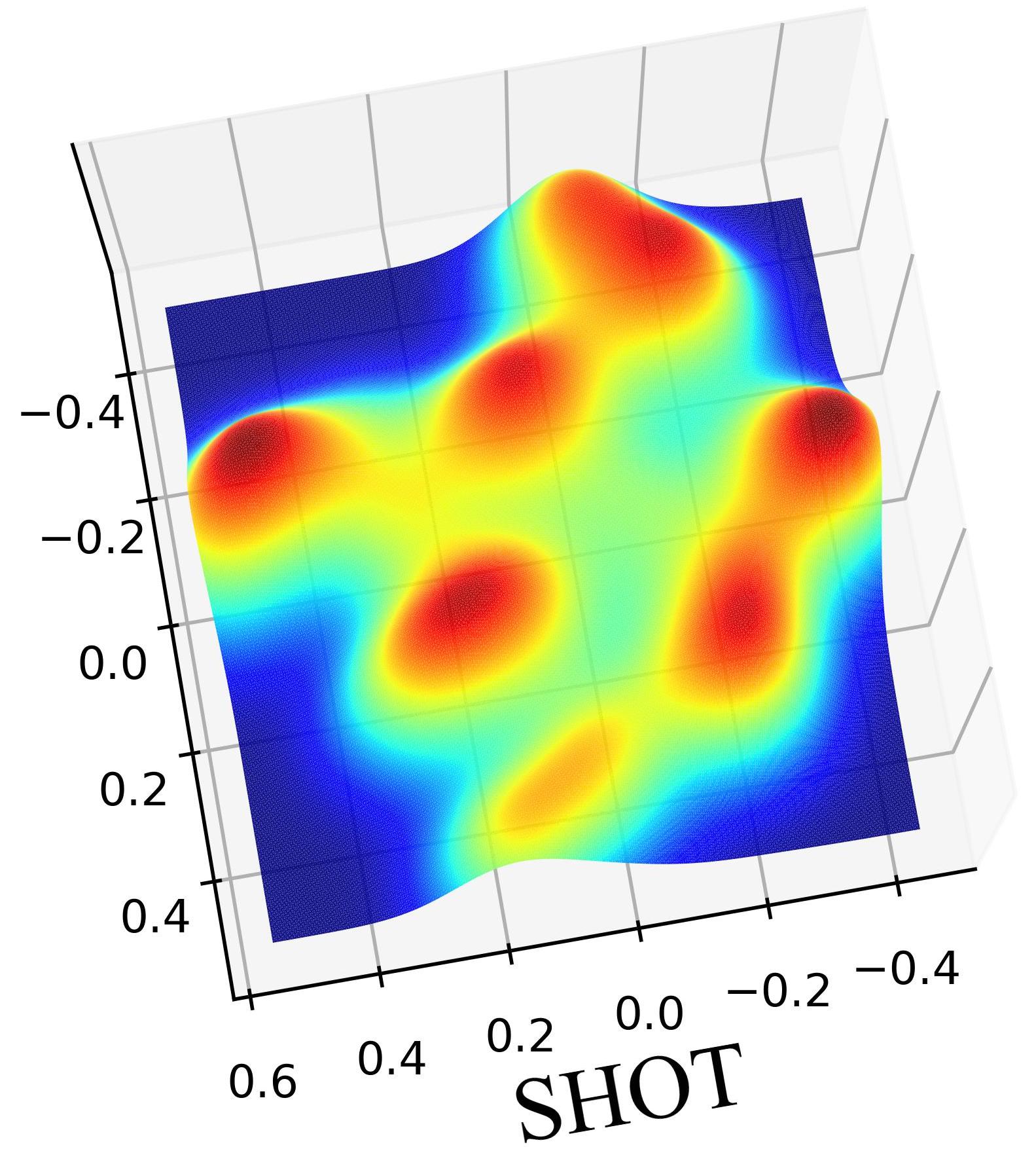}}
    {\includegraphics[width=0.13\linewidth,height=0.135\linewidth]{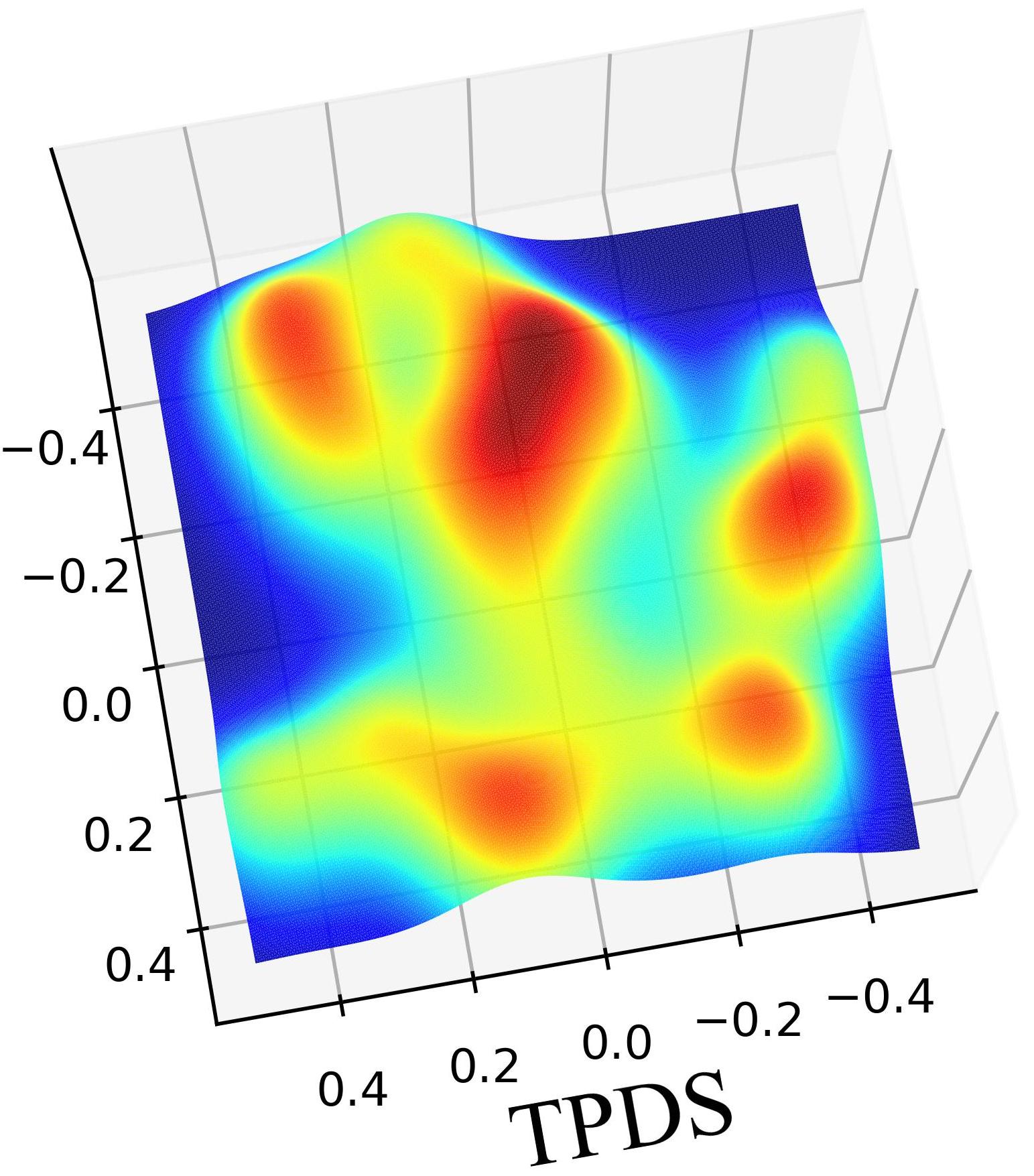}}
    {\includegraphics[width=0.13\linewidth,height=0.135\linewidth]{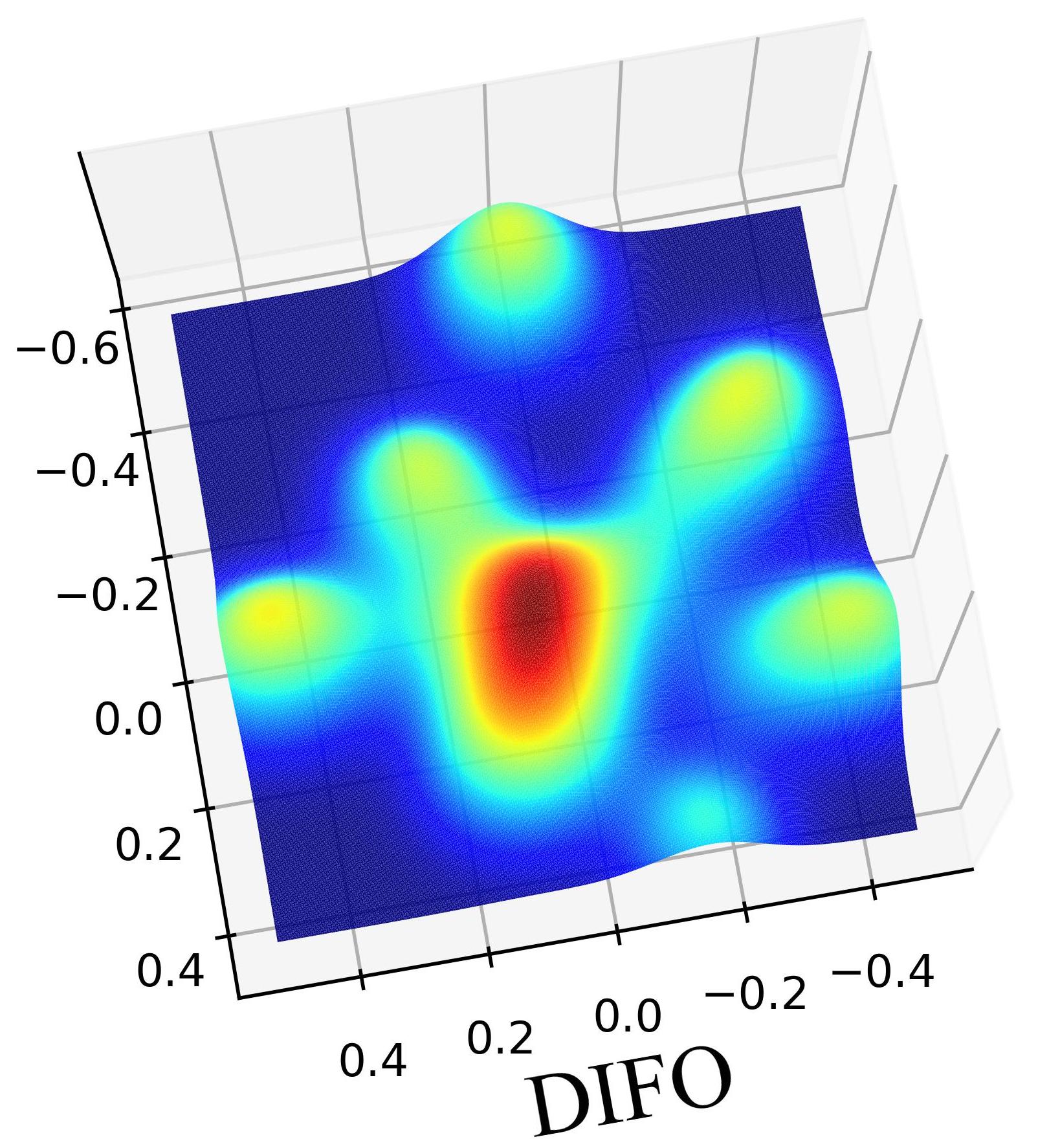}}
    {\includegraphics[width=0.13\linewidth,height=0.135\linewidth]
    {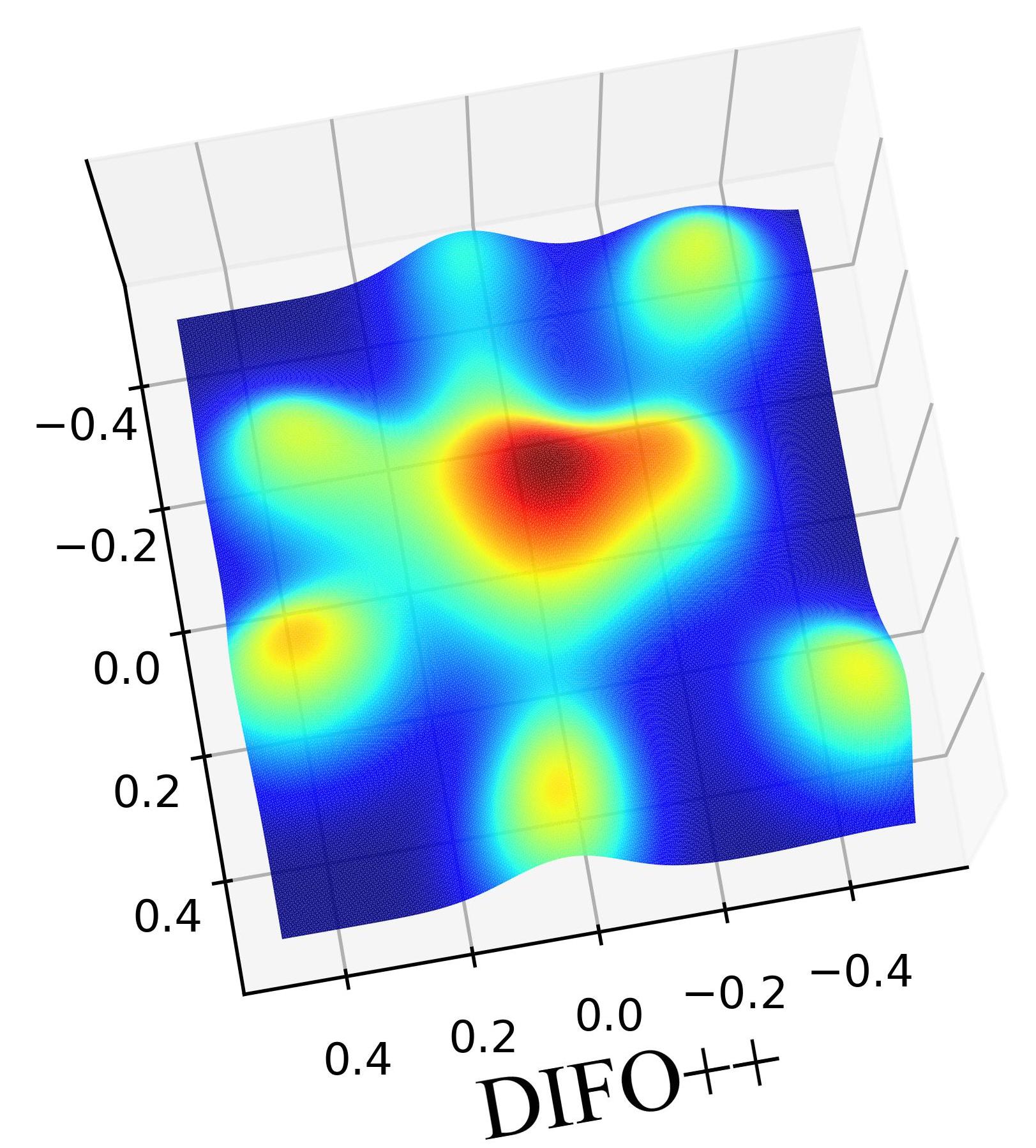}}
    {\includegraphics[width=0.13\linewidth,height=0.135\linewidth]{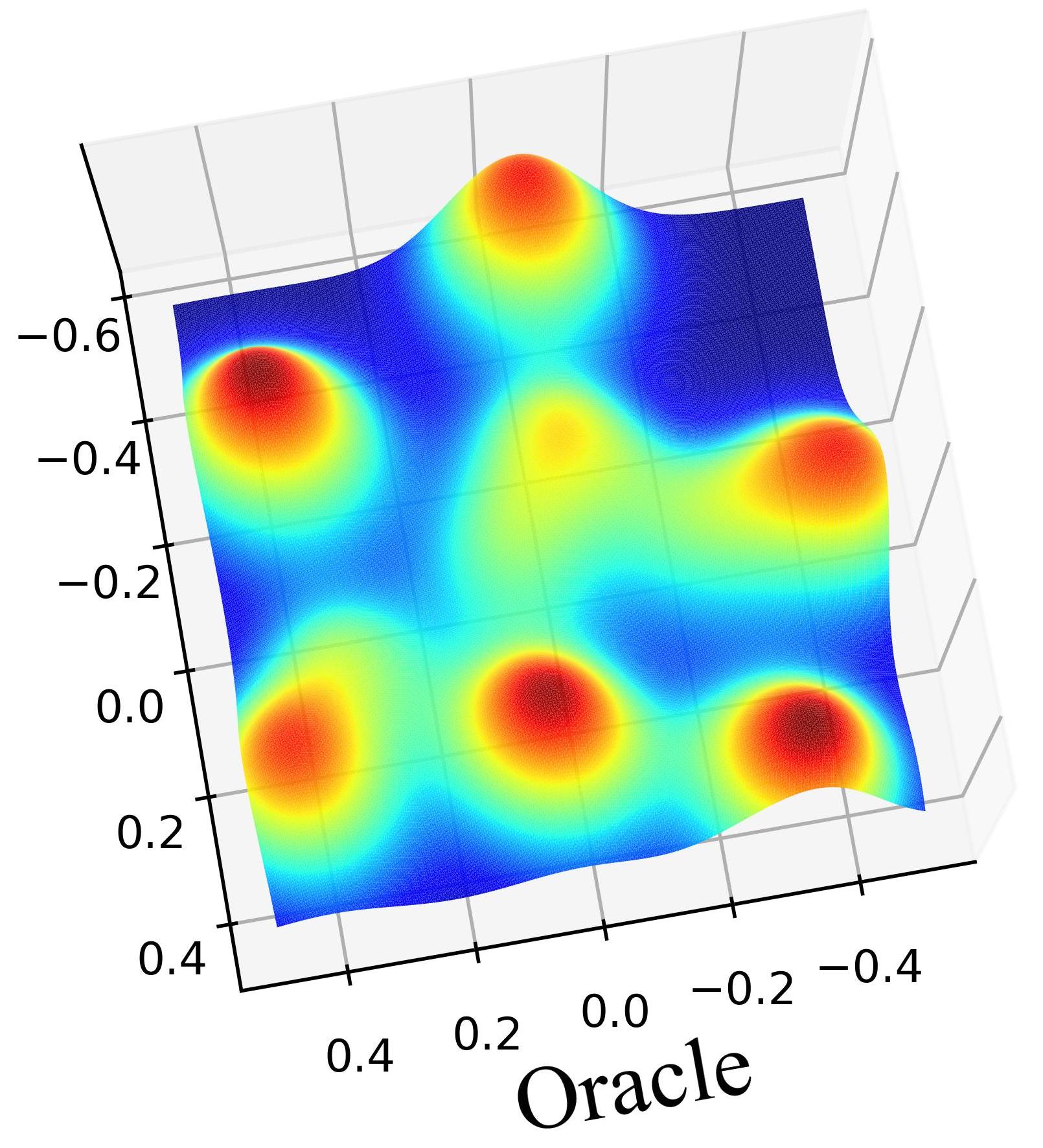}}
    \setlength{\abovecaptionskip}{0.3cm}
    \caption{
     Feature distribution visualization on transfer task Ar$\to$Cl in Office-Home. 
     Oracle: the model trained on target domain Cl using ground-truth labels. 
     Different colors stand for different categories. 
     {\bf Top}: t-SNE feature distribution over 65 categories.  
     {\bf Bottom}: The corresponding 3D density charts. 
     For clarity, the first 10 categories were used in this plot. 
     } 
    \label{fig:visfea}
\end{figure*}

\begin{figure*}[t]
    \centering
        \includegraphics[width=0.13\linewidth,height=0.13\linewidth]{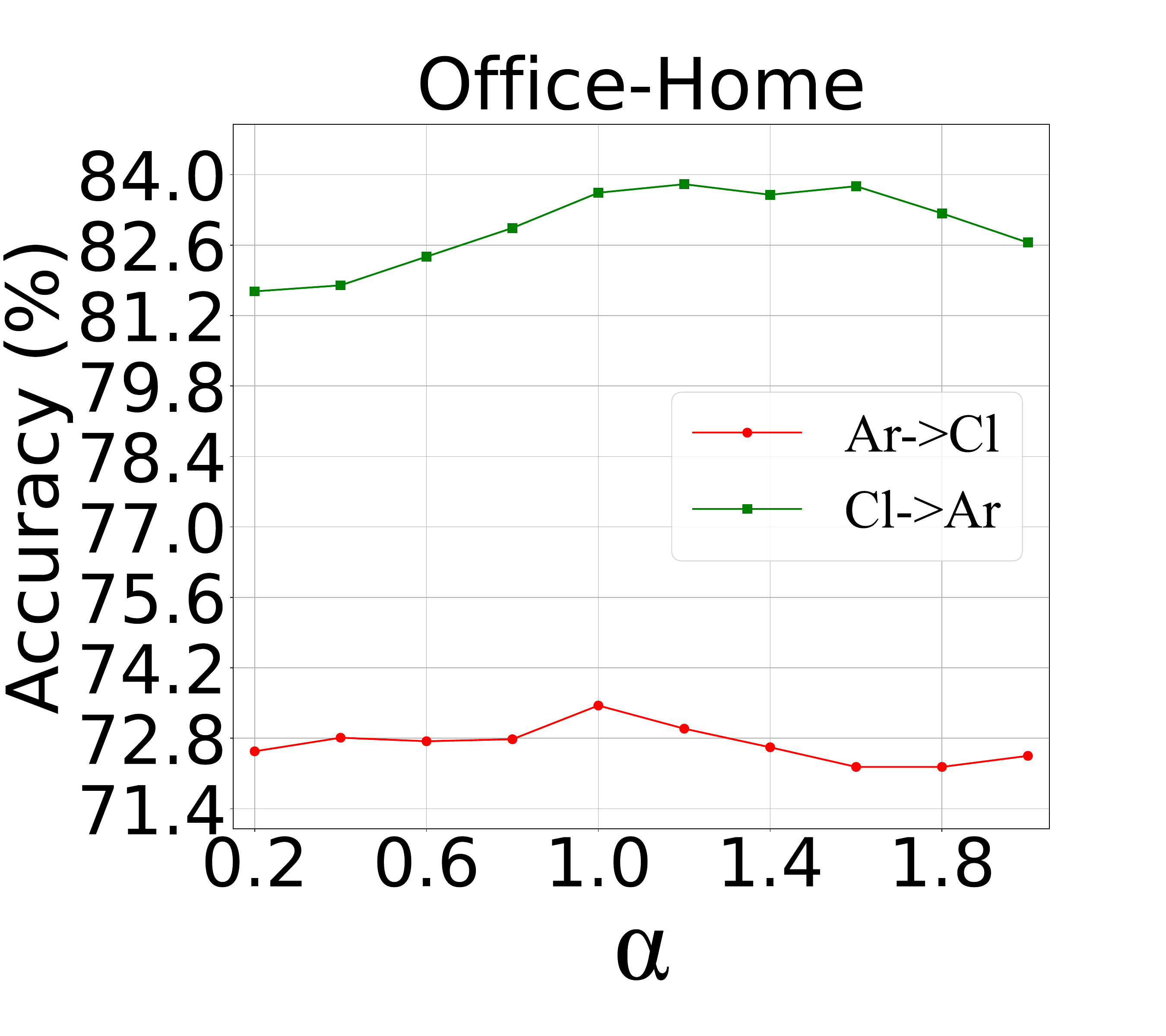}
        \includegraphics[width=0.13\linewidth,height=0.13\linewidth]{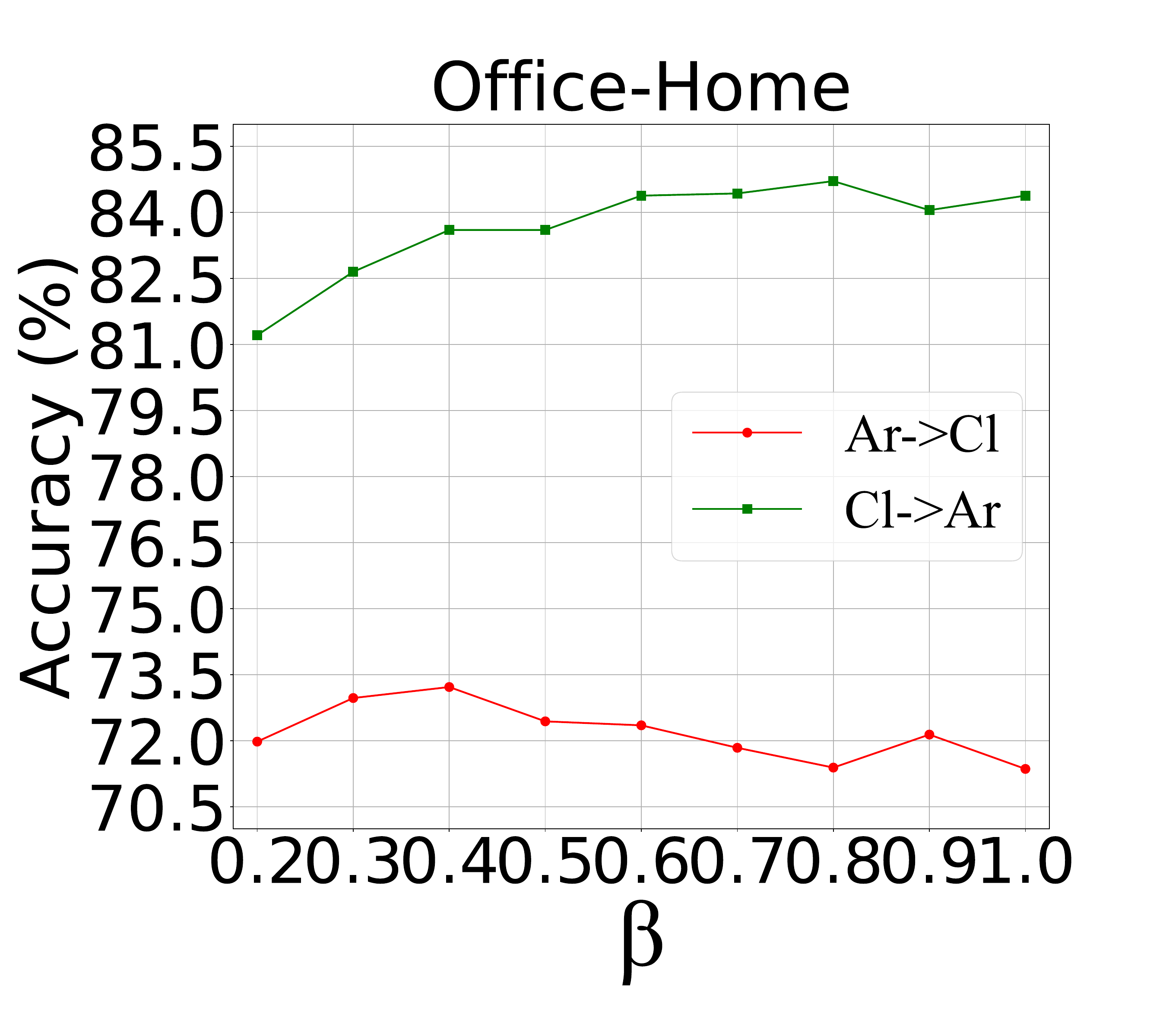}
        \includegraphics[width=0.13\linewidth,height=0.13\linewidth]{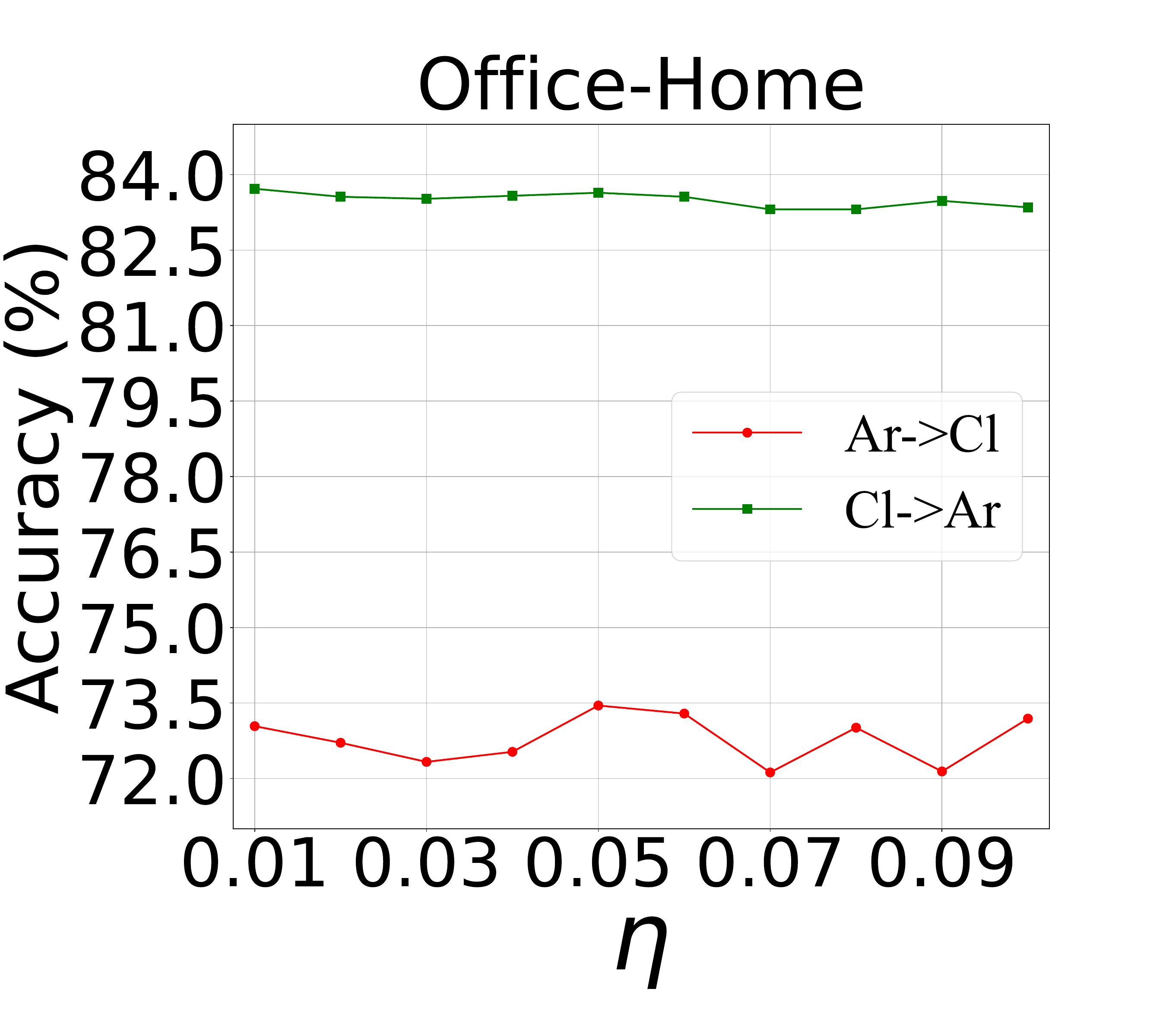}
        \includegraphics[width=0.13\linewidth,height=0.13\linewidth]{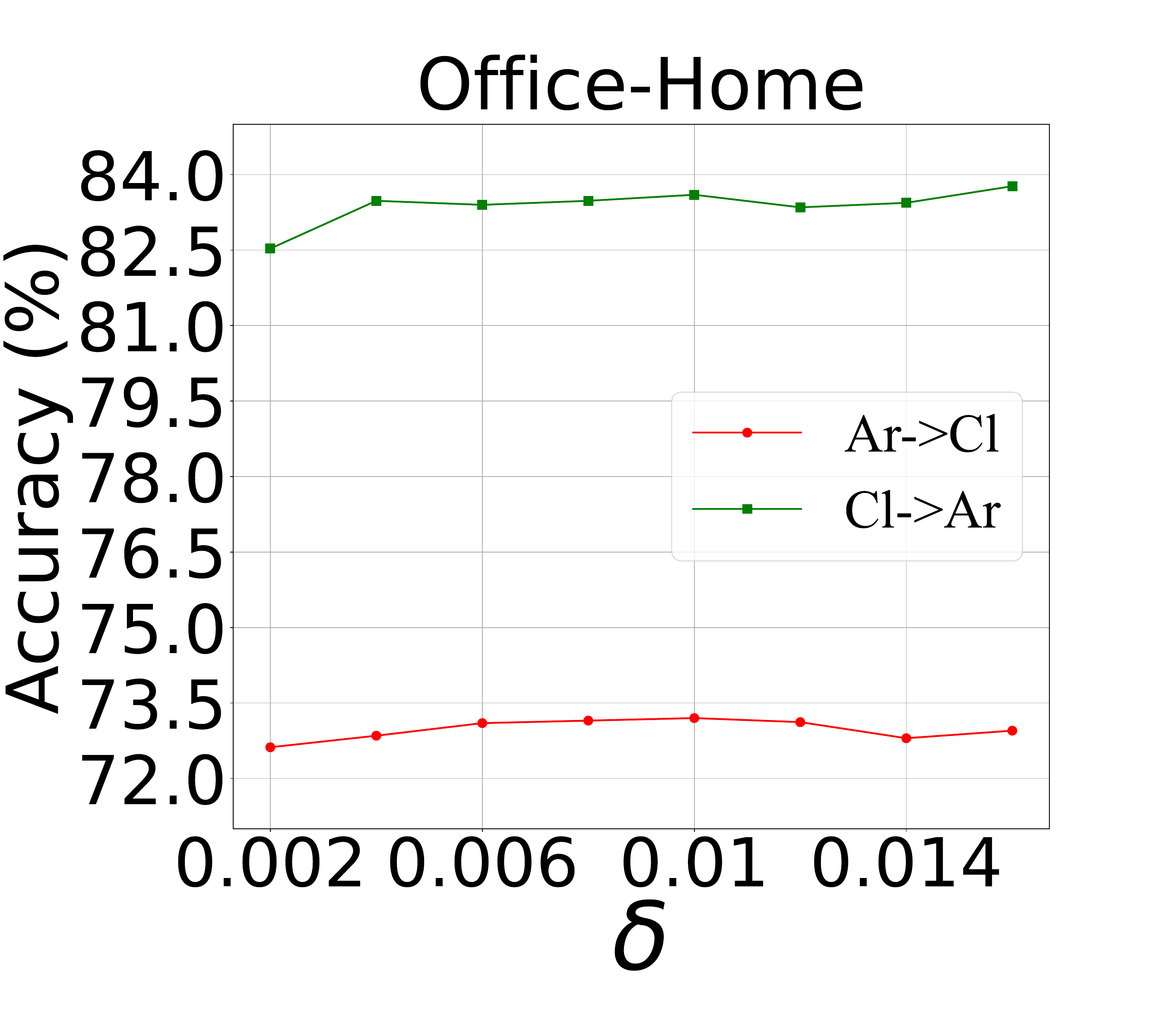}
        \includegraphics[width=0.13\linewidth,height=0.13\linewidth]{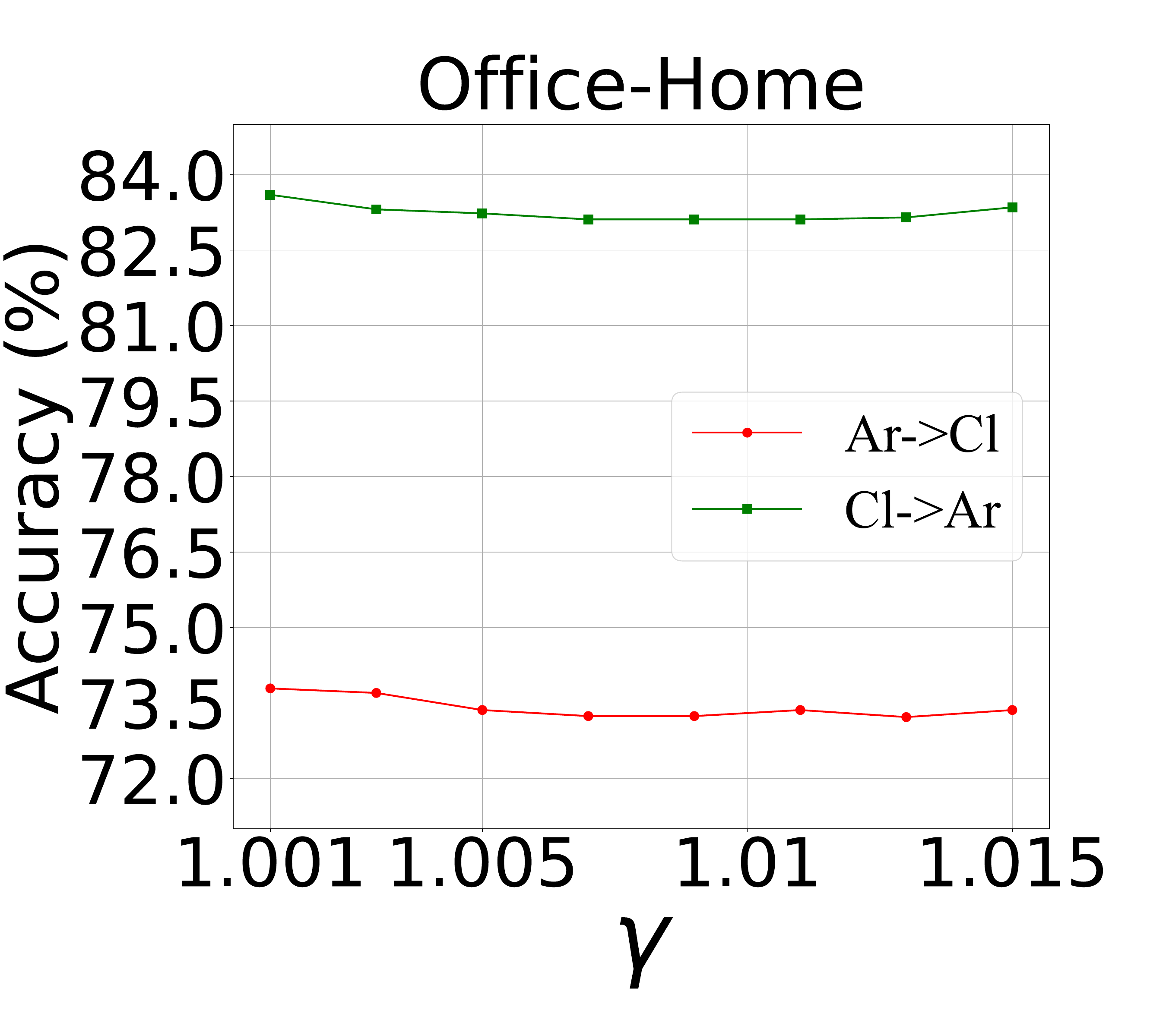}
        \includegraphics[width=0.13\linewidth,height=0.13\linewidth]{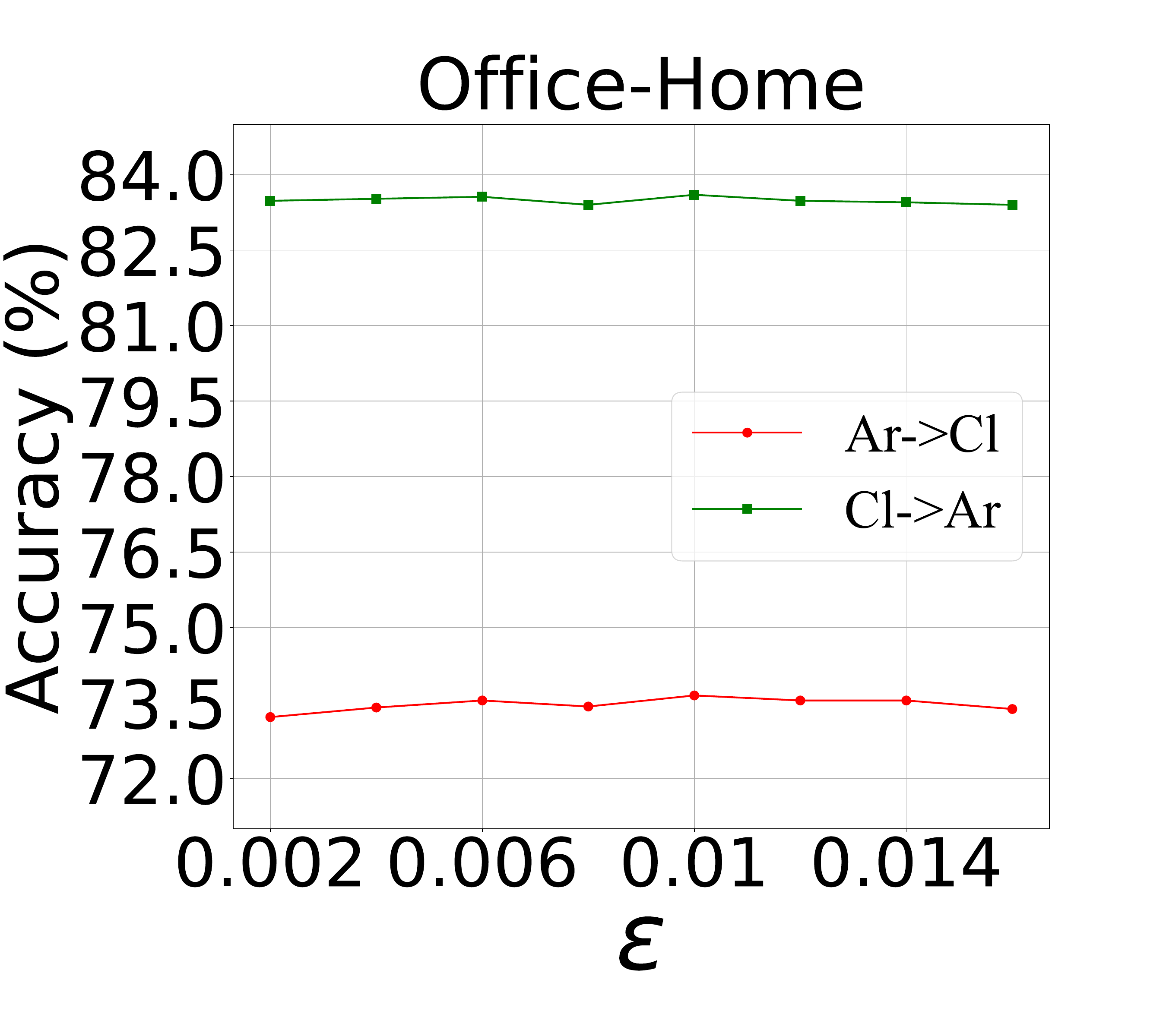}
        \includegraphics[width=0.13\linewidth,height=0.13\linewidth]{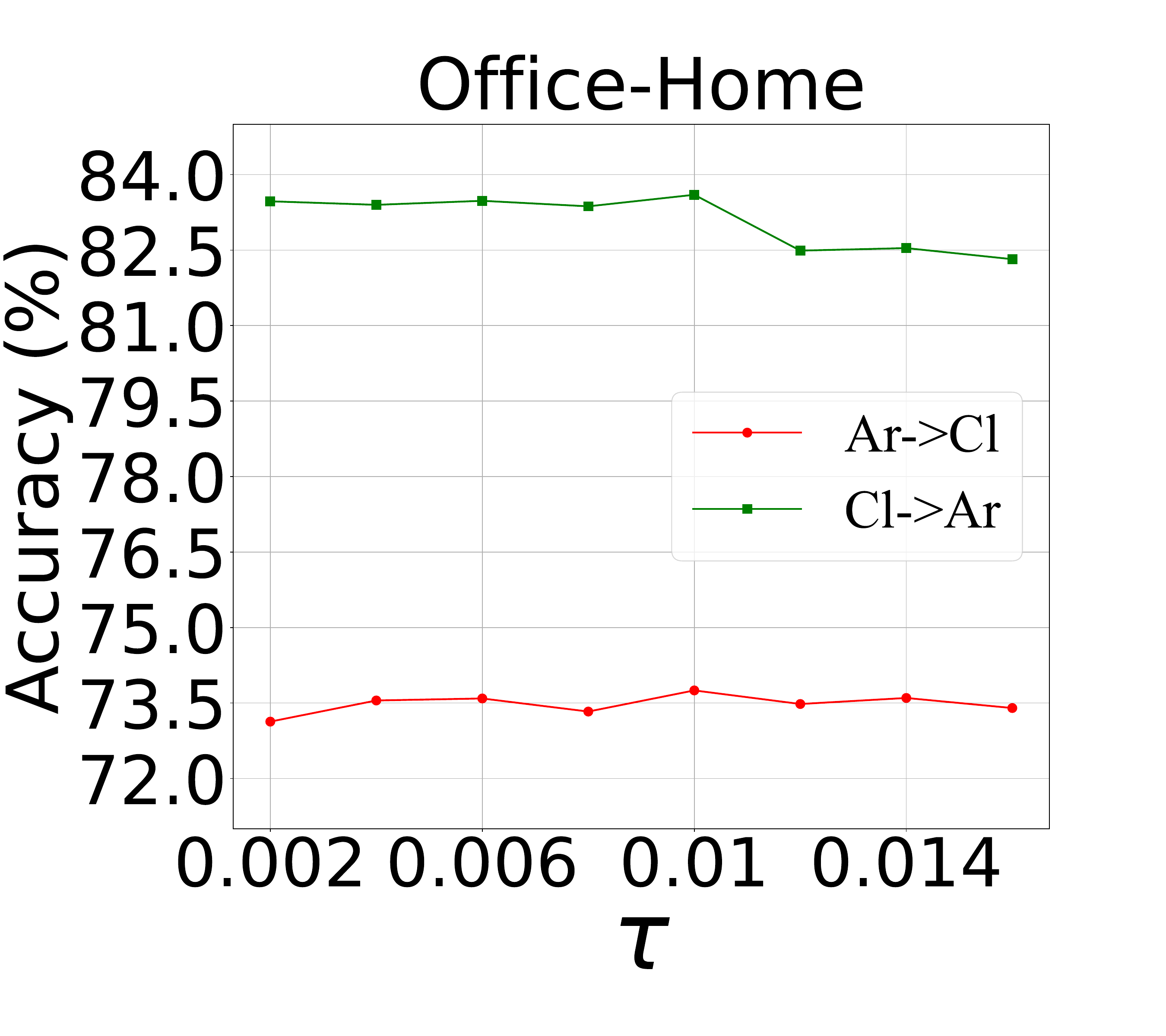}
    \caption{
     \changed{Sensitivity of the hyper-parameters $\alpha$, $\beta$, $\eta$, $\delta$, $\gamma$, $\epsilon$, and $\tau$.} 
    } 
    \label{fig:param}
\end{figure*}

\begin{figure}[t]
    \centering
        \includegraphics[width=0.35\linewidth]{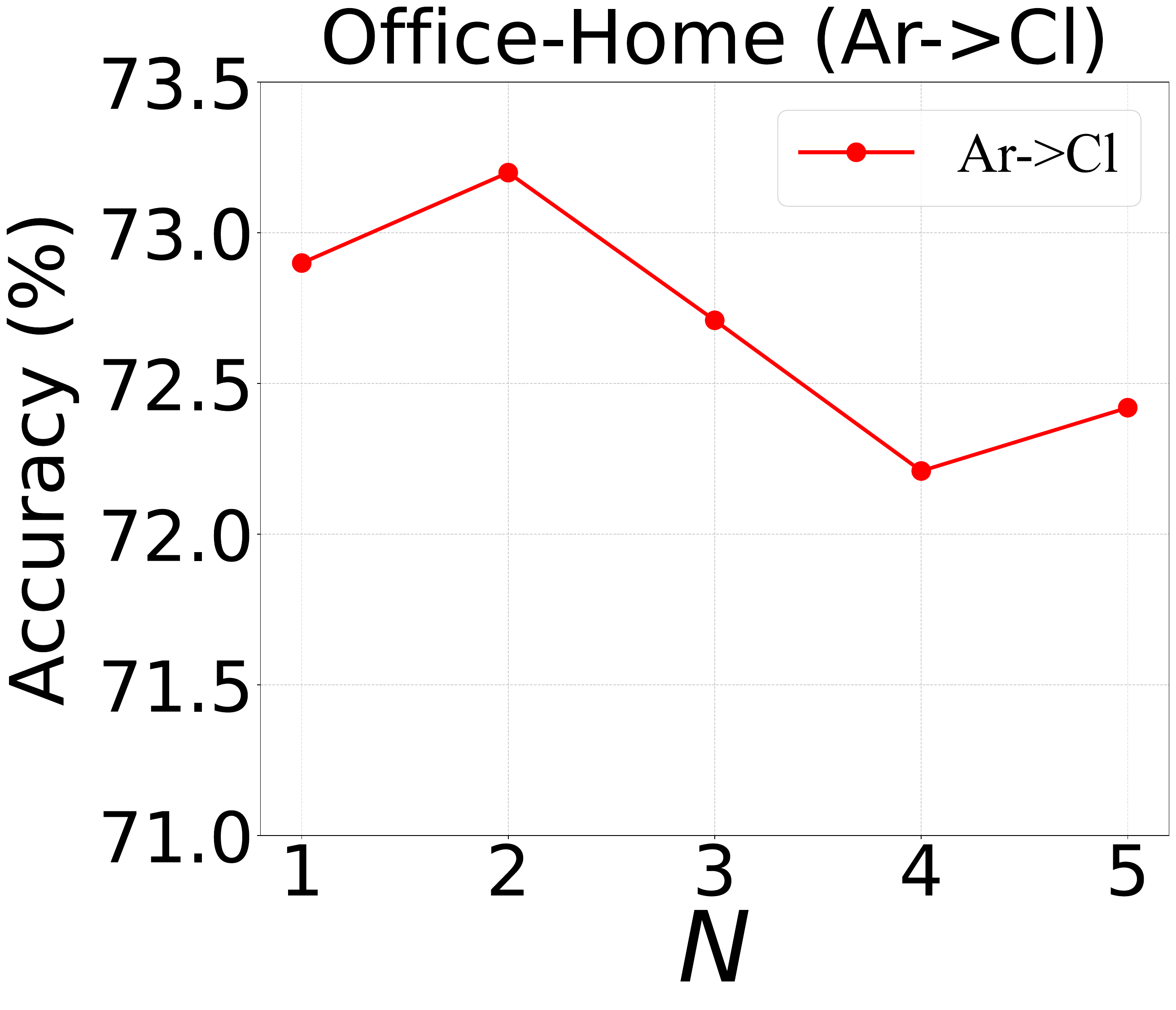}~~
        \includegraphics[width=0.35\linewidth]{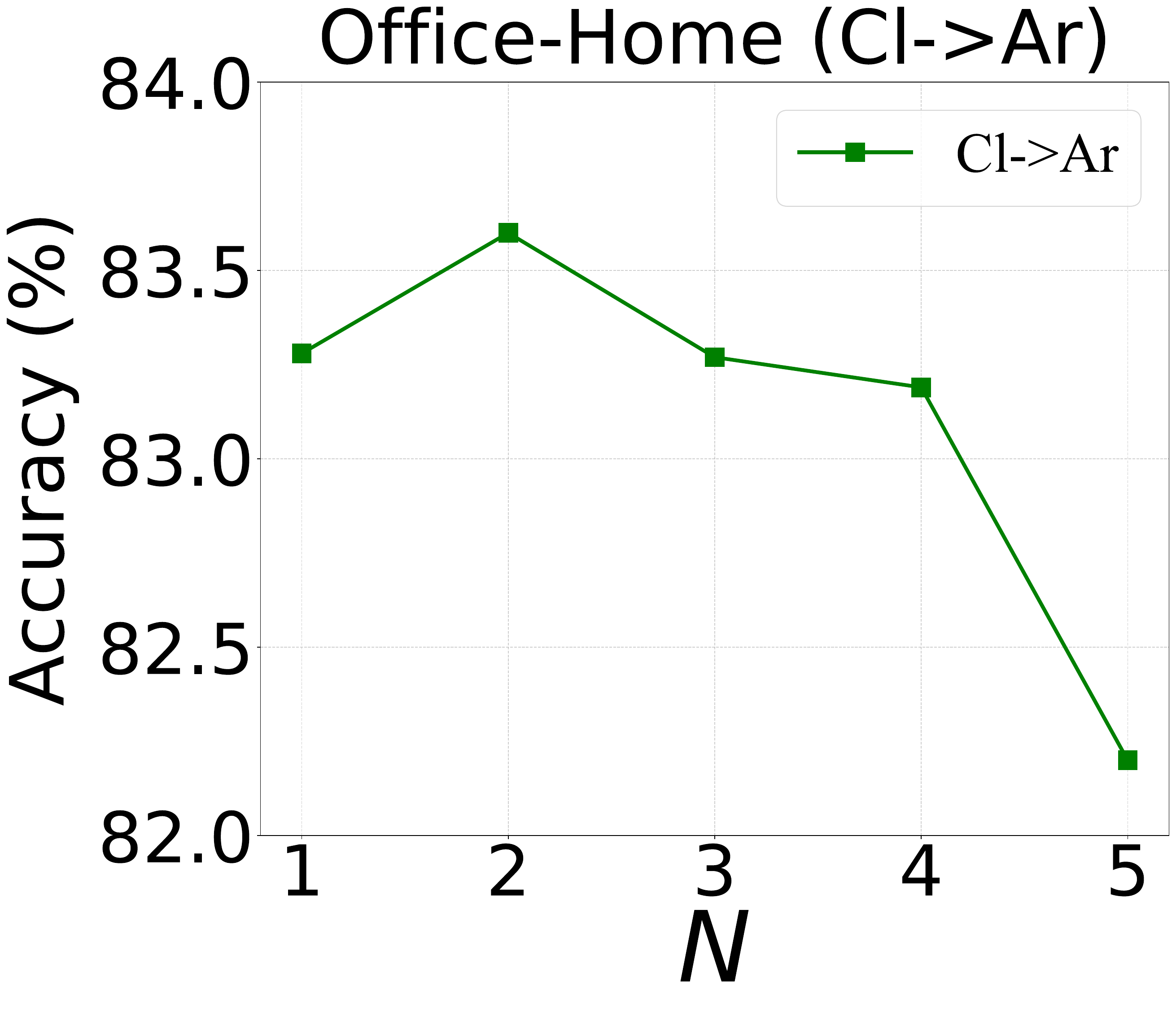}
    \caption{
    \changed{Sensitivity of parameters N.} 
    } 
    \label{fig:param-N}
\end{figure}

\changed{
{\bf Effect of referenced entropy.} 
We introduce a variant, {\modelshortname} w/ ENT, which substitutes our referenced uncertainty with conventional information entropy. 
As reported in Row 15 of Tab.~\ref{tab:ab_loss}, this substitution results in a performance drop of \textbf{1.4\%}.
Furthermore, we conduct a comparative analysis against established uncertainty metrics, including conventional entropy, Energy-based scores \citep{liu2020energy}, and Class-Margin \citep{f0song2024multi}, under an identical experimental configuration as {\modelshortname}. As illustrated in Fig.~\ref{fig:metric}, our referenced entropy (Ref-Entropy) exhibits a superior convergence profile throughout the adaptation phase. These empirical results underscore the effectiveness of Ref-Entropy in providing more stable and robust guidance for our curriculum-learning-based knowledge distillation. 
}

\subsection{Further Model Analysis}
\vspace{0.1cm}
\noindent 
\textbf{Grad-CAM visualization.}
To better understand our {\modelshortname} method, we present a Grad-CAM visualization comparison between the source model, the state-of-the-art SFDA methods SHOT, DIFO, and our {\modelshortname}. 
Specifically, as shown in Fig.~\ref{fig:cam-dys}, for images containing a single object (see columns 1 to 3), {\modelshortname} accurately focuses on the discriminative visual part, whereas other methods distribute attention across the entire image. 
For images containing multiple objects (see columns 4 to 6), {\modelshortname} shows superior alignment with the target semantics described by the ground truth labels, compared to other methods, which often misdirect attention to incorrect objects.
These findings highlight the effectiveness of {\modelshortname} in combining the domain generality of ViL models with the task-specific adaptability of source models.

\textbf{Feature distribution visualization.}
To visualize feature distribution, we conducted a toy experiment on the Ar$\to$Cl task in the Office-Home dataset, using the t-SNE tool. 
We include five comparison models: CLIP's zero-shot (labeled as CLIP), SHOT, TPDS, {\modelshortnameold}, and Oracle (trained on domain Cl with real labels). As shown at the top of Fig.~\ref{fig:visfea}, from CLIP to {\modelshortname}, category aliasing gradually diminishes. 
Compared to Oracle, {\modelshortname} exhibits the most similar distribution shape. 
In particular, the mixture degree of multi-class in the middle zone is weaker than that of {\modelshortnameold}. 
The observation offers an explanation for the advantage of {\modelshortname} over {\modelshortnameold}.
To further validate this observation, we present the 3D density chart results at the bottom of Fig.~\ref{fig:visfea}. 
These results confirm the effectiveness of {\modelshortname} in terms of feature distribution.

\vspace{0.1cm}
\textbf{Sensitivity to hyper-parameters.}
In {\modelshortname}, $\alpha$ in objective loss $L_{\rm{PC}}$ (Eq.~\eqref{eqn:loss_pc}) and $(\beta, \eta)$ in $\mathcal{L}{_{{\text{S-II}}}}$ (Eq.~\eqref{eqn:loss-ka}), are the trade-off parameters.  
\changed{
$\delta$ in Eq.~\eqref{eqn:uncer-anchor} and $\tau$ in Eq.~\eqref{eqn:onehot-fusion} are weighted parameters, while threshold parameters $(\epsilon, \gamma)$ in Eq.~\eqref{eqn:confi-split} govern the sample selection process within the curriculum learning framework.
}
We test their sensitivity of performance on the symmetric transfer tasks Cl$\to$Ar and Ar$\to$Cl in Office-Home. 
Fig.~\ref{fig:param} shows no significant drops of accuracy from varying these parameters,
suggesting good stability.

\changed{
In addition, we investigate the sensitivity of the hyperparameter $N$ in $\mathcal{L}_{\text{cac}}$, as illustrated in Fig.~\ref{fig:param-N}. We observe that performance peaks at $N=2$ and subsequently declines as $N$ increases. This indicates that although considering multiple top categories is beneficial, a high value of $N$ leads to the inclusion of excessive noise. Based on our empirical findings, $N=2$ serves as a robust choice across different scenarios.
}

\begin{figure*}[t]
    \centering
        \includegraphics[width=0.135\linewidth,height=0.135\linewidth]{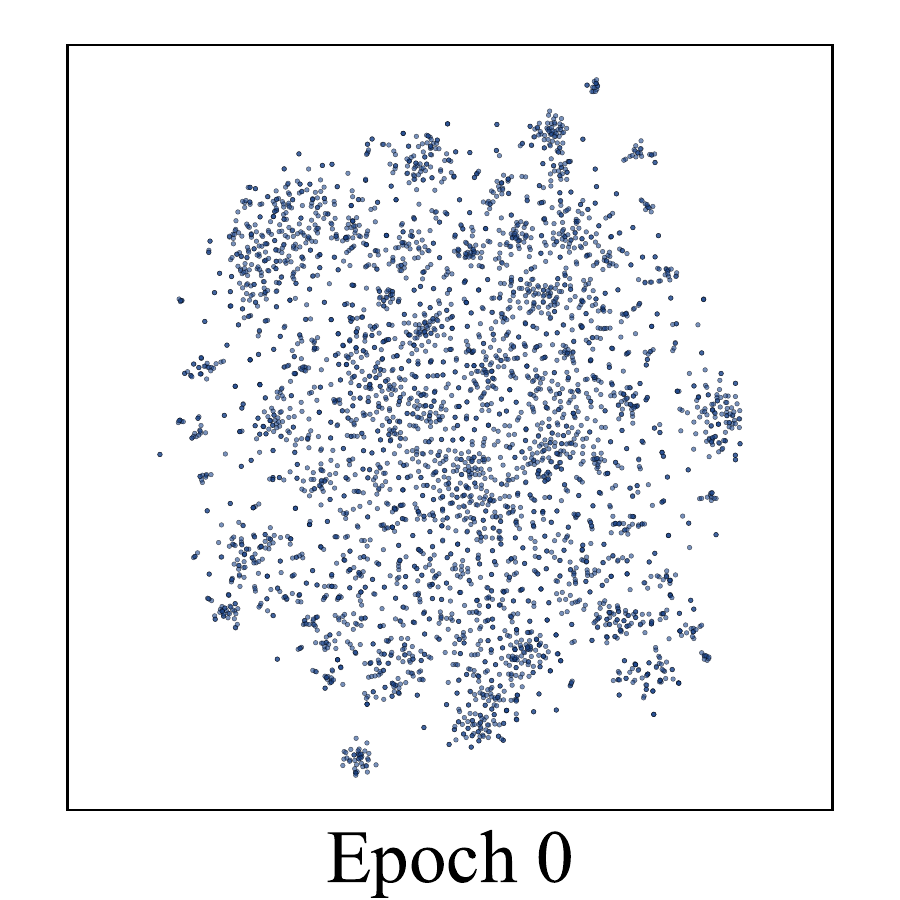} 
        \includegraphics[width=0.135\linewidth,height=0.135\linewidth]{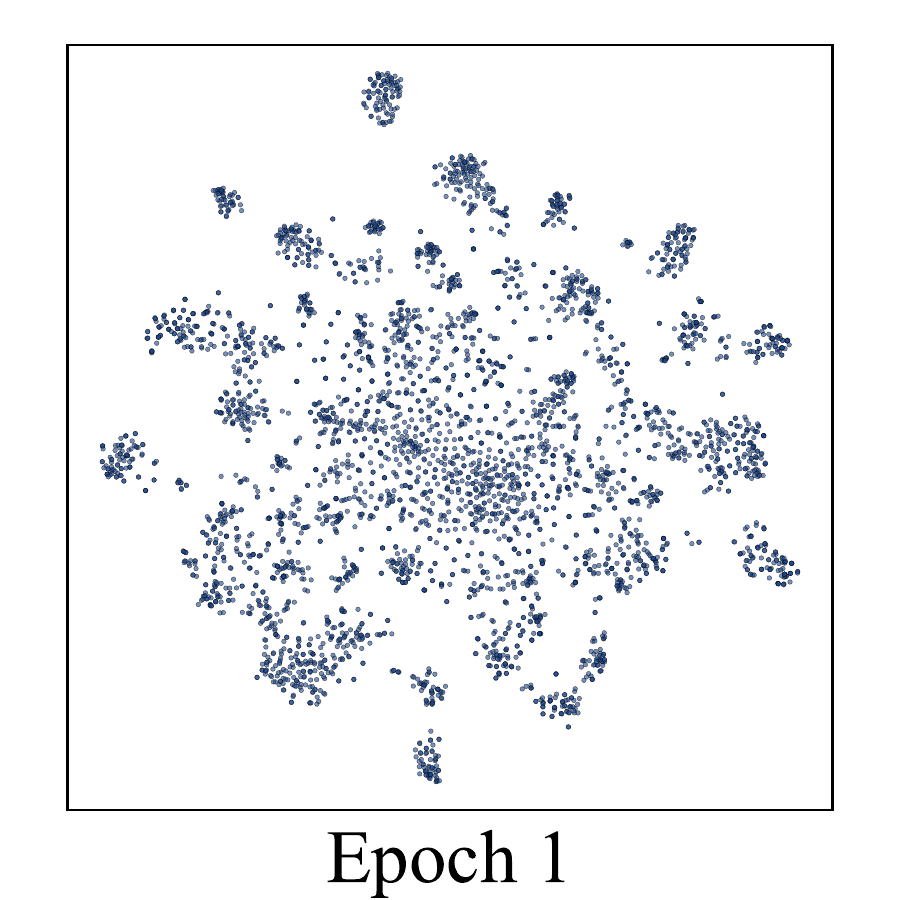}
        \includegraphics[width=0.135\linewidth,height=0.135\linewidth]{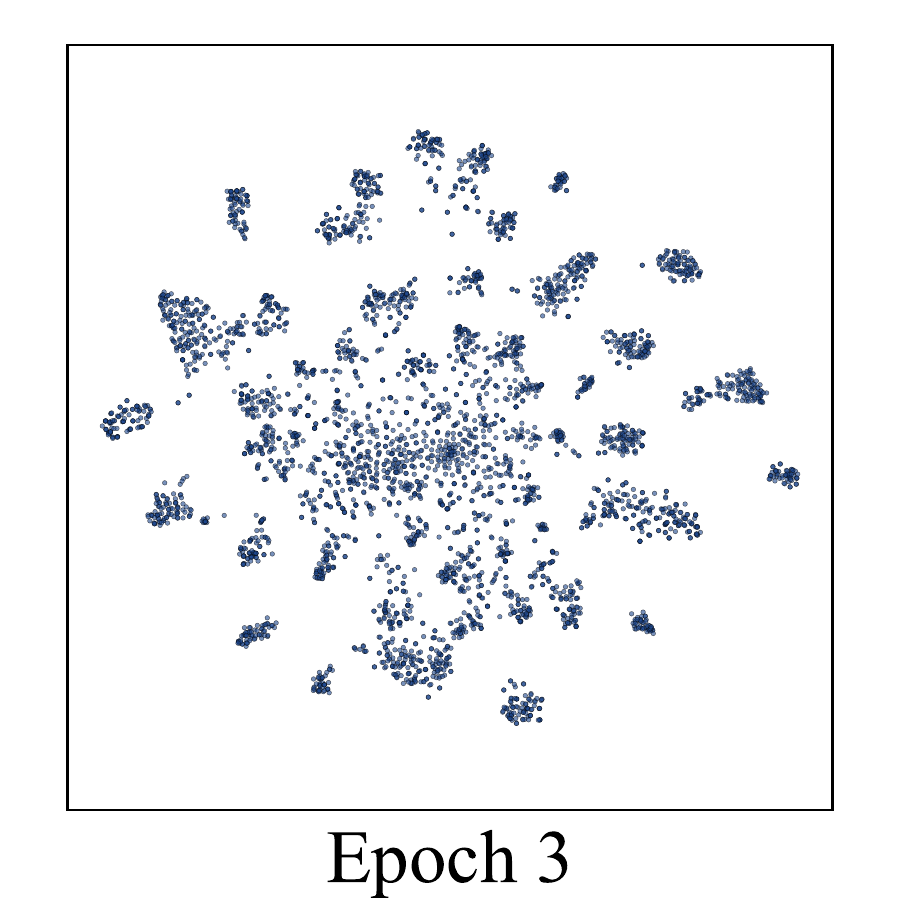}
        \includegraphics[width=0.135\linewidth,height=0.135\linewidth]{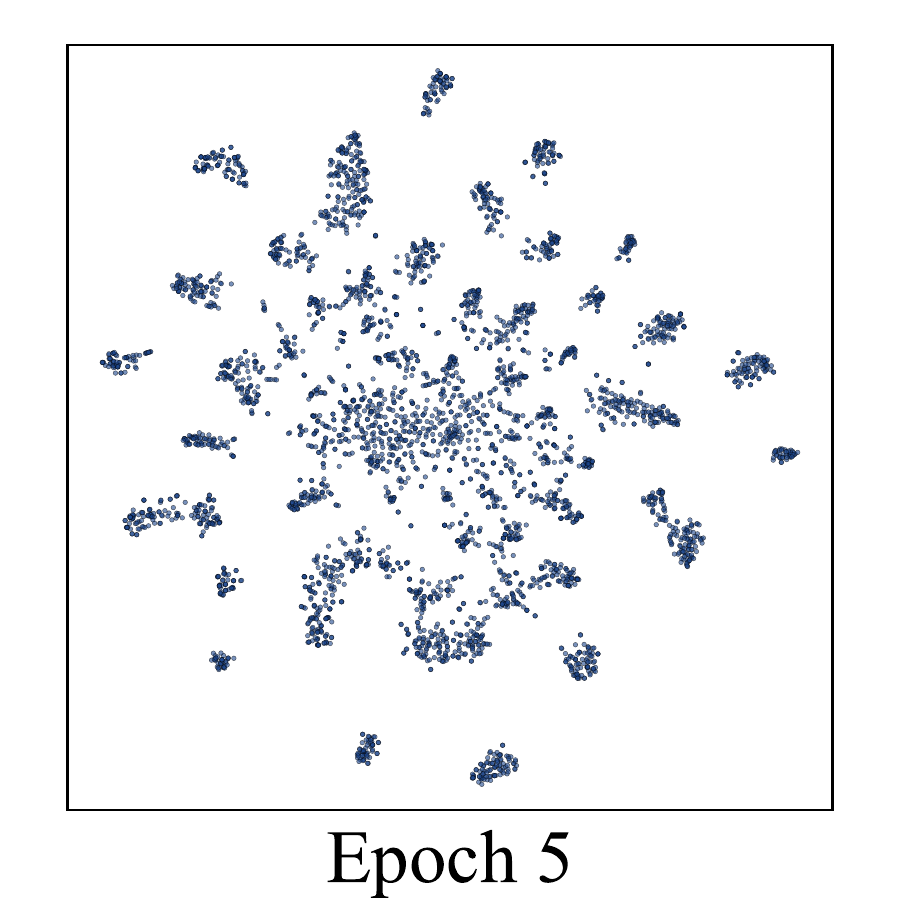} 
        \includegraphics[width=0.135\linewidth,height=0.135\linewidth]{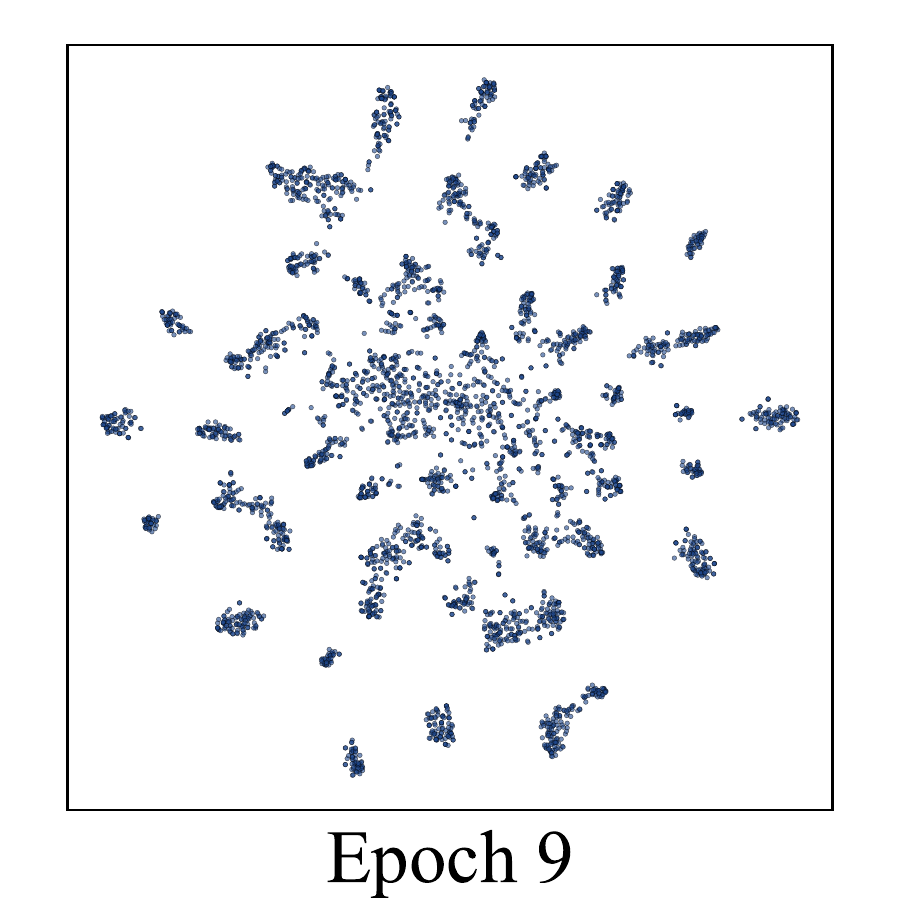}
        \includegraphics[width=0.135\linewidth,height=0.135\linewidth]{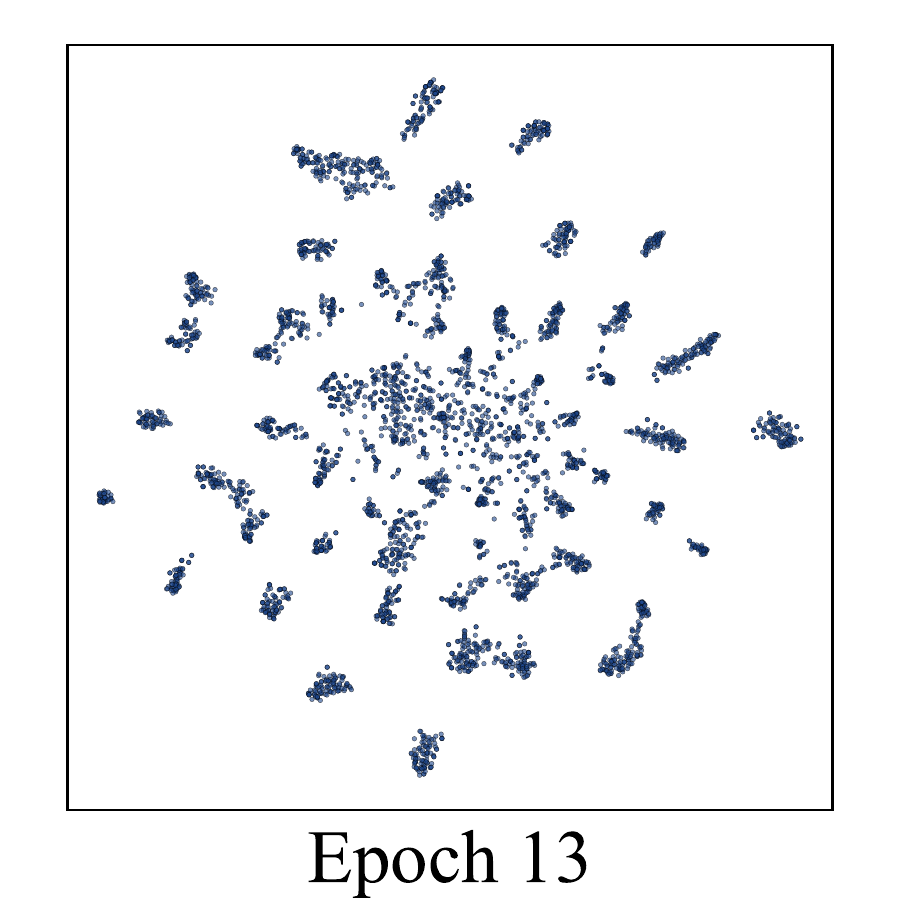}
        \includegraphics[width=0.135\linewidth,height=0.135\linewidth]{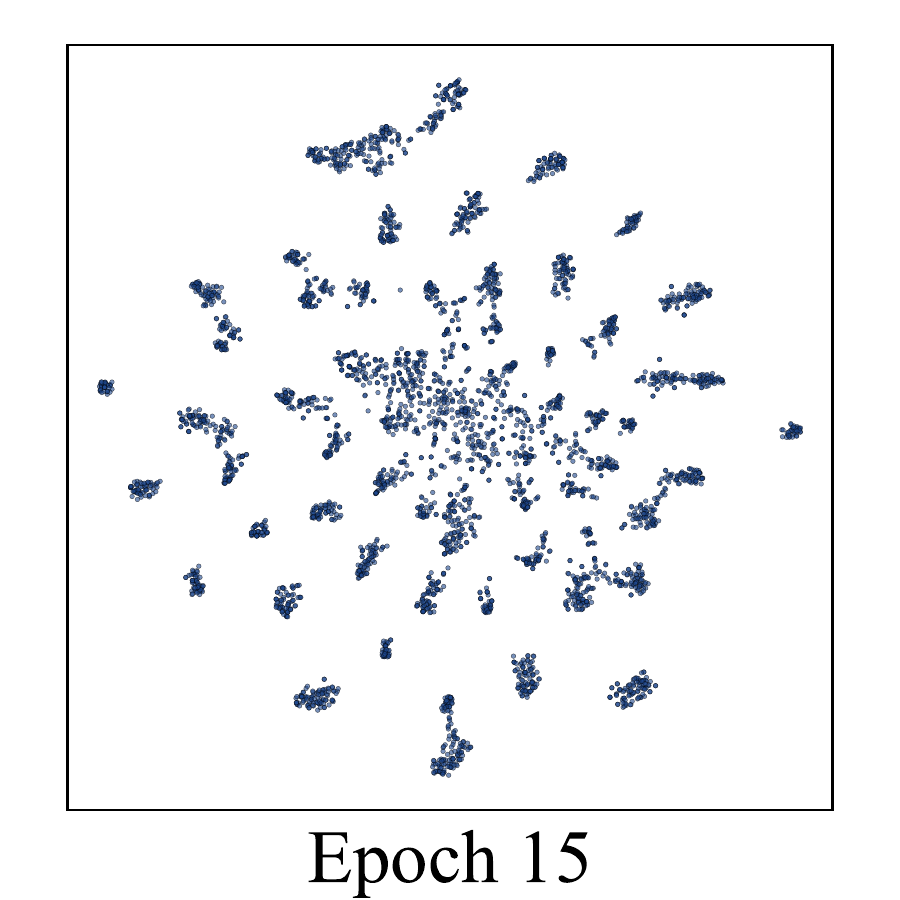}
    \caption{
     \changed{Dynamics of gap-region reduction in epoch-wise view.} 
    } 
    \label{fig:dyn-gap}
\end{figure*}

\changed{
\textbf{Dynamics of gap-region reduction.}
To validate that our method effectively encourages gap-region reduction, we epoch-wise visualize the adaptation using the t-SNE tool. 
As illustrated in Fig.~\ref{fig:dyn-gap}, at Epoch 0 (representing the source model results), the feature space exhibits several isolated clusters surrounded by a large number of samples forming a mist-like distribution, which constitutes the 'gap-region'. 
Throughout the adaptation process, these diffuse features are progressively eliminated, resulting in clear semantic clustering. Such a transition confirms the efficacy of our approach in driving gap-region reduction and feature alignment.
}

\changed{
\textbf{Resource demands.}
In this part, we evaluate the resource demands of {\modelshortname}, as reported in Tab.~\ref{tab:resource_comparison}.
Regarding GPU memory consumption, {\modelshortnameold} and {\modelshortname} require an additional 2 GB compared to the baseline SHOT and TPDS. This overhead is primarily attributed to the integration of the external Vision-Language (ViL) model. However, the computational latency per iteration does not increase substantially; compared to SHOT and TPDS, our methods only introduce an additional 0.5 s and 0.2 s, respectively, due to ViL inference and prompt learning. Notably, {\modelshortname} exhibits lower overall resource demands than {\modelshortnameold}. This efficiency gain is expected, as the curriculum learning strategy adopted in {\modelshortname} gradually reduces the effective scale of training data throughout the adaptation process.
}

\changed{
\textbf{Convergence analysis.}
In Fig.~\ref{fig:convergence}, we illustrate the convergence behavior of {\modelshortname} on the VisDA dataset throughout the adaptation phase. For benchmarking, we compare our method against SHOT, AaD, and TPDS. It is evident that the area under the convergence curve for {\modelshortname} is significantly larger than that of the competitors. This result demonstrates that {\modelshortname} achieves superior classification performance with an accelerated convergence rate, reaching a higher steady-state accuracy in fewer iterations. 
}

\begin{table}[t]
    \centering
    \caption{\changed{Resource demands on Ar$\rightarrow$Cl task.}}
    \label{tab:resource_comparison}
    \scriptsize
    \setlength{\tabcolsep}{4pt}
    \begin{tabular}{clcccc}
        \toprule
        \# & Item & SHOT & TPDS & DIFO & \modelshortname \\
        \midrule
        1 & GPU Mem. cost $\downarrow$ (G) & \textcolor{cmblu}{\textbf{7.868}} & 8.535 & 10.164 & 10.060 \\
        2 & Time per iter. $\downarrow$ (s) & \textcolor{cmblu}{\textbf{0.407}} & 0.742 & 0.929 & 0.903 \\
        \bottomrule
    \end{tabular}
\end{table}

\section{Conclusion}\label{sec:cons}
In this work, we introduce a novel {\modelshortname} approach to address the SFDA problem.
To the best of our knowledge, this marks the initial endeavor to address SFDA by leveraging a pre-trained ViL foundation model, departing from previous approaches that predominantly concentrated on self-mining auxiliary information. 
{\modelshortname} is featured with alternating between customization of the ViL model and the transfer of task-specific knowledge from the customized ViL model. 
Our approach involves three pivotal designs: 
(1) a mutual information-based alignment for ViL customization, 
(2) a gap region-driven knowledge adaptation for transferring the customized ViL knowledge, and
(3) a referenced entropy for uncertainty evaluation.  
Extensive experiments demonstrate that {\modelshortname} outperforms the state-of-the-art alternatives across multiple challenging benchmarks.

\begin{figure}[t]
    \centering
    \includegraphics[width=0.7\linewidth]{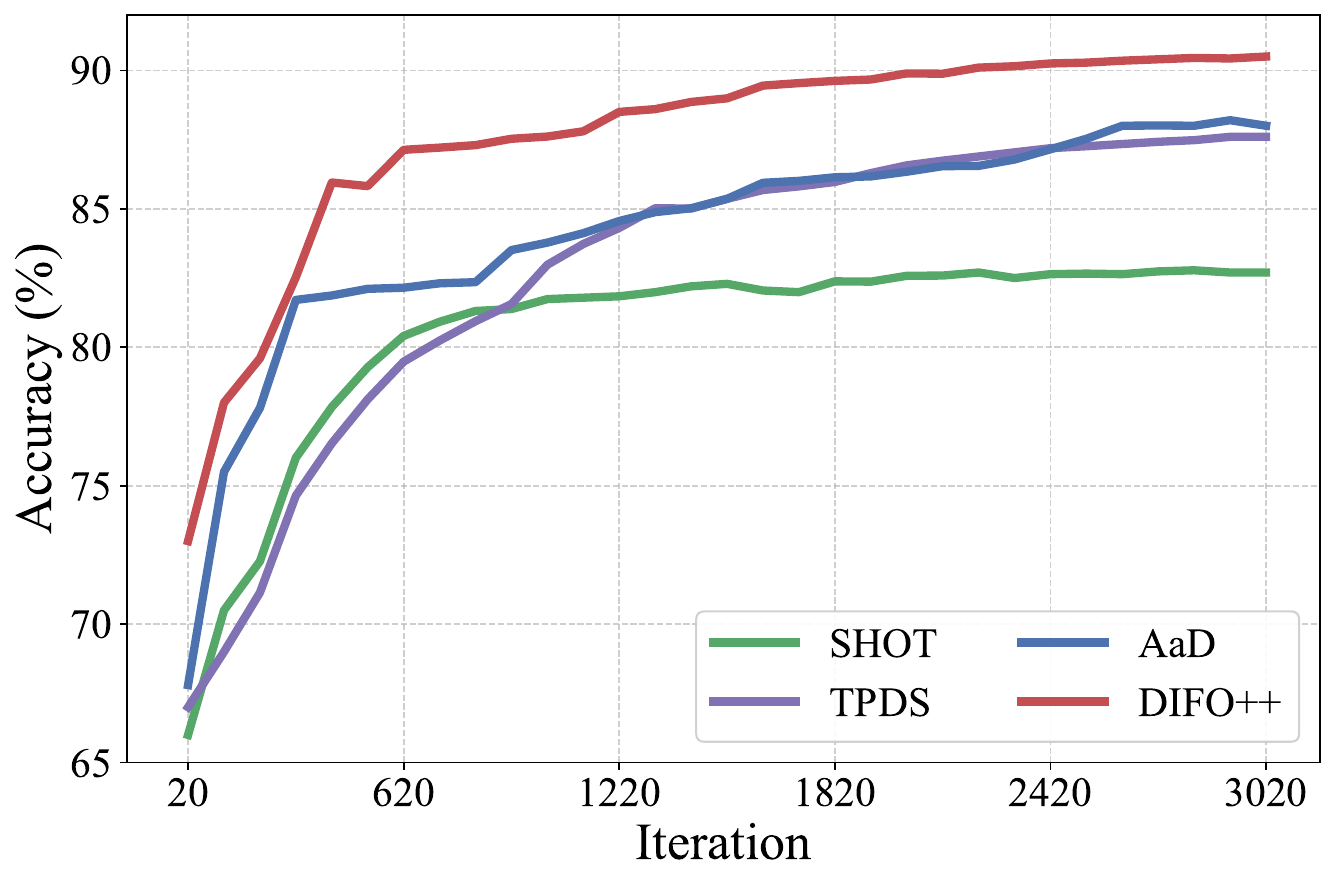}
    \caption{\changed{Convergence behavior comparison on VisDA during the adaptation phase.}}
    \label{fig:convergence}  
\end{figure}

{\bf Limitations.} 
While {\modelshortname} demonstrates the potential of leveraging multimodal foundation models for the SFDA problem, it has several limitations. 
First, its ViL model customization is based on prompt learning, which requires access to the model itself.
As a result, {\modelshortname} cannot be deployed in black-box or cloud-based settings where model access is restricted. 
Second, {\modelshortname} lacks an anti-forgetting mechanism, making it unsuitable for continuous learning scenarios.
Third, the referenced entropy and gap region labeling depend on historical information accumulated during adaptation, limiting its applicability in online settings such as Test-Time Adaptation (TTA).
Extending {\modelshortname} to address these challenging scenarios presents an important direction for future research.

\section*{Acknowledgment}
This work is partially supported by the National Natural Science Foundation of China (NSFC) (62476169); the CAMS Innovation Fund for Medical Sciences (CIFMS) (2025-I2M-C\&T-A-003); the German Research Foundation and NSFC in project Crossmodal Learning under contract Sonderforschungsbereich Transregio 169, the Hamburg Landesforschungsf{\"o}-rderungsprojekt Cross, NSFC (61773083).

\section*{Data Availability}
The authors confirm that the data supporting the findings of this study are available within the articles: \citep{saenko2010adapting} (Office-31), \citep{venkateswara2017deep} (Office-Home) \citep{peng2017visda} (VisDA), and \citep{peng2019moment} (DomainNet-126)

\appendix

\section{A Proof of Lemma 1} \label{app:lemma-proof}
\begin{reslemma}
\textit{Given two random variables $X$, $Y$. Their mutual information ${\rm{I}}\left( X, Y \right)$ and KL divergence $D_{\rm{KL}}\left( X||Y \right)$ satisfy the unequal relationship as follows.} 
\begin{equation}
    \label{eqn:dsib}  
    -{\rm{I}}\left( X, Y \right) \leq D_{\rm{KL}}\left( X, Y \right). 
\end{equation} 
\label{thm-one} 
\end{reslemma}

\begin{proof}
Suppose the probability density function (PDF) of $X$ and $Y$ are $p(\boldsymbol{x})$ and $p(\boldsymbol{y})$, respectively; their joint PDF is $p(\boldsymbol{x},\boldsymbol{y})$. 
We have  
\begin{equation*}
    \begin{split}
    {\rm{I}}\left(X, Y\right) 
    &= \sum p(\boldsymbol{x}, \boldsymbol{y})\log\frac{p(\boldsymbol{x},\boldsymbol{y})}{p(\boldsymbol{x})\cdot p(\boldsymbol{y})} \\
    &= D_{\rm{KL}}\left( p(\boldsymbol{x},\boldsymbol{y})~||~p(\boldsymbol{x})\cdot p(\boldsymbol{y}) \right). \\
    \end{split}
\end{equation*} 

Well known, the KL divergence is non-negative~\citep{eguchi2006interpreting}. 
Thus, 
\begin{equation*}
-{\rm{I}}\left( X, Y \right) \leq 0 \leq D_{\rm{KL}}\left( X || Y \right).
\end{equation*}
\end{proof}

\bibliographystyle{apacite}
\bibliography{sample-base-ijcv}

\end{document}